\newtheorem{lemma}{Lemma}
\newtheorem{definition}{Definition}
\newtheorem{assumption}{Assumption}
\def \x {\bm{x}}
\def \w {\bm{w}}
\def \D {\mathcal{D}}
\def \C {\mathcal{C}}
\def \B {\mathcal{B}}
\def \X  {\mathcal{X}}
\def \Y  {\mathcal{Y}}
\def \L  {\mathcal{L}}
\def \T {\mathcal{T}}
\def \G {\mathcal{G}}
\def \X {\mathcal{X}}
\def \Y {\mathcal{Y}}
\def \D {\mathcal{D}}
\newtheorem{theorem}{Theorem}
\title{On the Learning with  Augmented Class via Forests}
\author{
Fan Xu
\and
Wuyang Chen \And
Wei Gao \\
\affiliations
 National Key Laboratory for Novel Software Technology, Nanjing University, Nanjing, China\\
 School of Artificial Intelligence, Nanjing University, Nanjing, China\\
\emails
\{xuf, chenwy, gaow\}@lamda.nju.edu.cn
}
\begin{document}

\maketitle

\begin{abstract}
Decision trees and forests have achieved successes  in various real applications, most working with all testing classes known in training data. In this work, we focus on learning with augmented class via forests, where an augmented class may appear in testing data yet not in training data.
We incorporate information of augmented class into trees' splitting, that is, \textit{augmented Gini impurity}, a new splitting criterion is  introduced to exploit some unlabeled data from testing distribution.  We then develop the Learning with Augmented Class via Forests (short for LACForest) approach, which constructs shallow forests according to the augmented Gini impurity and then splits forests with pseudo-labeled augmented instances for better performance. We also develop deep neural forests via an optimization objective based on our augmented Gini impurity, which essentially utilizes the representation power of neural networks for forests. Theoretically, we present the convergence analysis for our augmented Gini impurity, and we finally conduct experiments to evaluate our approaches. The code is available at \url{https://github.com/nju-xuf/LACForest}.

\end{abstract}

\section{Introduction}

How to handle distribution changes has become an important problem  in a non-stationary learning environment \cite{Zhou2022}, and recent years have witnessed increasing attentions with various applications \cite{Gama:Zliobait:Blifet:Pechenizkiy:Bouchachia2014,Geng:Huang:Chen2021,Wang:Zhang:Su:Zhu2024}. This work focuses on learning with augmented class, that is, the class distribution changes and an augmented class unseen in the training data may emerge during the testing process \cite{Da:Yu:Zhou2014}. Here, we take object recognition in autonomous driving for example: new objects may appear on roads yet not in the historical labeled data, and  a reliable learning system should make good predictions over both known classes and augmented class.

Various approaches have been developed for learning with augmented class. \citeauthor{Da:Yu:Zhou2014}~\shortcite{Da:Yu:Zhou2014} studied  decision boundaries for augmented class under the low-density assumption. \citeauthor{Bendale:Boult2016}~\shortcite{Bendale:Boult2016} and \citeauthor{Rudd:Jain:Scheirer:Boult2017}~\shortcite{Rudd:Jain:Scheirer:Boult2017} estimated the probability of an instance belonging to augmented class on the basis of extreme value theory. \citeauthor{Mendes:Medeiros:Oliveira:Stein:Pazinato:Almeida2017}~\shortcite{Mendes:Medeiros:Oliveira:Stein:Pazinato:Almeida2017}
detected augmented class by a nearest neighbor method, and
\citeauthor{Liu:Garrepalli:Dietterich:Fern:Hendrycks2018}~\shortcite{Liu:Garrepalli:Dietterich:Fern:Hendrycks2018} presented PAC guarantees for the detection of augmented class. Some generative networks have also been applied to learn augmented class \cite{Ge:Demyanov:Garnavi2017,Neal:Olson:Fern:Wong:Li2018,Chen:Peng:Wang:Tian2021}. \citeauthor{Zhang:Zhao:Ma:Zhou2020}~\shortcite{Zhang:Zhao:Ma:Zhou2020} gave an unbiased risk estimation by exploiting unlabeled data, and \citeauthor{Shu:He:Wang:Wei:Xian:Feng2023}~\shortcite{Shu:He:Wang:Wei:Xian:Feng2023} generalized such approach to arbitrary loss functions.

Decision trees and forests have achieved great successes in various applications with strong generalization in handling discrete features and exploring local regions \cite{cutler:Edwards:Beard:Cutler:Hess:Gibson:Lawler2007,Qi2012,Grinsztajn:Oyallon:Varoquaux2022,Costa:Pedreira2023}. Most previous studies worked on the same distribution between training and testing data. For augmented class, \citeauthor{Mu:Ting:Zhou2017}~\shortcite{Mu:Ting:Zhou2017} and \citeauthor{Liu:Garrepalli:Dietterich:Fern:Hendrycks2018}~\shortcite{Liu:Garrepalli:Dietterich:Fern:Hendrycks2018} only studied its detection from labeled data of known classes, whereas it remains open on how to construct trees for learning with augmented class by exploiting unlabeled data from testing data.

This work aims to incorporate some useful information of augmented class into the construction of forests, and our main contributions can be summarized as follows:

\begin{itemize}
\item We introduce a new splitting criterion, i.e., \textit{augmented Gini impurity}, to incorporate information of augmented class from unlabeled data during the trees' splitting. We develop the \textit{LACForest} approach for learning with augmented class, which constructs shallow forests by augmented Gini impurity  and considers pseudo-labeled augmented instances for better performance.

\item For complex data with intrinsic structures (e.g., images), we develop deep neural forests for learning with augmented class, because of their powerful representations with an end-to-end training manner. We propose a new optimization objective for deep neural forests to learn augmented class, and the basic idea is to extend our augmented Gini impurity into a differentiable form by considering the mechanism of deep neural trees.

\item From a theoretical view, we present the convergence analysis of our augmented Gini impurity with respect to both decision trees and deep neural trees. We finally conduct extensive experiments to validate the effectiveness of our proposed approaches and perform some parameter influence analysis.
\end{itemize}

The rest of this work is organized as follows: Section~\ref{sec:pre} gives some preliminaries. Section~\ref{sec:LACForest} introduces augmented Gini impurity and LACForest approach. Section~\ref{sec:DeepLACForest} presents our deep neural approach.  Section~\ref{sec:exp} conducts some extensive experiments. Section~\ref{sec:con} concludes with future works.

\section{Preliminaries}\label{sec:pre}
Let $\X\subseteq \mathbb{R}^d$ and $\Y=\{1,\ldots,\kappa,\kappa+1\}$ be the instance and class spaces, respectively. Suppose that $\D$ is an underlying (unknown) distribution over $\X\times\Y$, and denote by $\D_\X$ its marginal distribution over $\X$.  We focus on learning with augmented class \cite{Da:Yu:Zhou2014}, where the first $\kappa$ classes can be observed in training data, while the $(\kappa+1)$-th class will merely emerge in testing data, which is known as an augmented class in training phase.

For known classes and augmented class, let $\D_\text{kc}$ and $\D_\text{ac}$ be the marginal distributions over space $\X\times\{1,\ldots,\kappa\}$ and $\X\times\{\kappa+1\}$ from distribution $\D$, respectively. We introduce the \emph{class shift assumption} as follows:
\begin{assumption}\label{def:class-shift}
We say that distribution $\D$ and its marginal distributions $\D_\text{kc}$ and $\D_\text{ac}$ satisfy class shift assumption if
\begin{equation} \label{eq:class-shift}
\D = (1-\theta) \D_\text{kc} + \theta \D_\text{ac}\ \text{ for some constant }\ \theta\in(0,1)\ .
\end{equation}
\end{assumption}

This assumption correlates data distributions for known classes and augmented class, which  has been well-studied for learning with augmented class \cite{Zhang:Zhao:Ma:Zhou2020,Shu:He:Wang:Wei:Xian:Feng2023} and open-set recognition \cite{Scheirer:Rocha:Sapkota:Boult2013}.

We focus on the semi-supervised setting for learning with  augmented class \cite{Da:Yu:Zhou2014,Liu:Garrepalli:Dietterich:Fern:Hendrycks2018,Zhang:Zhao:Ma:Zhou2020,Shu:He:Wang:Wei:Xian:Feng2023}, and the goal is to learn a function $f\colon\X\rightarrow \Y$ from labeled data $S_l$ and unlabeled data $S_u$ with
\[
S_l=\{(\x^l_1,y_1),\ldots,(\x^l_{n_l},y_{n_l})\} \text{ and }S_u=\{\x^u_1,\ldots,\x^u_{n_u}\}\ .
\]
Here, each labeled example $(\x^l_i,y_i)$  is drawn i.i.d. from $\D_\text{kc}$, and each unlabeled instance $\x^u_j$ is sampled i.i.d. from $\D_\X$.

Let $\mathbb{I}[\cdot]$ be the indicator function, which returns $1$ if the argument is true and $0$ otherwise. For integer $k>0$, we write $[k]=\{1,2,\cdots,k\}$, and denote by $\Delta_k$ the $k$-dimensional simplex.   For $z\in\mathbb{R}$, we denote by $\lfloor z\rfloor$ the largest integer no more than $z$, and define $(z)_+=\max(0,z)$.
Let $|A|$ be the cardinality of set $A$.

\section{Our LACForest Approach}\label{sec:LACForest}
This section proposes the \textit{LACForest} approach for learning with augmented class based on random forests, and the core idea is to introduce a new splitting criterion, \textit{augmented Gini impurity}, to incorporate potential information of augmented instances during the construction of decision trees.

\subsection{Augmented Gini Impurity}
For instance space $\C\subseteq \X$, unlabeled data $S_u$ and labeled data $S_l$, we introduce
\[
S_{\C,u}= S_u\cap \C \ \text{ and } \ S_{\C,l}= \{(\x,y)\colon (\x,y)\in S_l \text{ and } \x\in \C \}  .
\]
Denote by $n_{\C,l}=|S_{\C,l}|$ and $n_{\C,u}=|S_{\C,u}|$. We could define the \emph{augmented Gini impurity} as a splitting criterion.

\begin{definition}\label{def:ag-gini}
For an instance space $\C\subseteq\X$, we define the augmented Gini impurity $\G_{\C}(S_l,S_u)$ w.r.t. $S_l$ and $S_u$ as
\begin{equation} \label{eq:ag-gini}
\G_{\C}(S_l,S_u) = 1 - \sum_{k\in[\kappa+1]} \vartheta^2_{\C,k}(S_l,S_u)\ .
\end{equation}
Here, $\vartheta_{\C,\kappa+1}(S_l,S_u)$ is defined as
\[
\vartheta_{\C,\kappa+1}(S_l,S_u)=\mathbb{I}[n_{\C,u}>0]\Big(1-\frac{(1-\theta)n_u n_{\C,l} }{ n_l \max(1, n_{\C,u})}\Big)_+\ ,
\]
where $\theta$ is given in Eqn.~\eqref{eq:class-shift}; and for $k\in[\kappa]$, we define
\[
\vartheta_{\C,k}(S_l,S_u)=\sum_{(\x,y)\in S_l}(1-\vartheta_{\C,\kappa+1}(S_l,S_u))\frac{ \mathbb{I}[\x\in\C, y=k]}{\max(1,n_{\C,l})} \ .
\]
\end{definition}

\begin{figure}[t]
\centering
\includegraphics[width=3.2in]{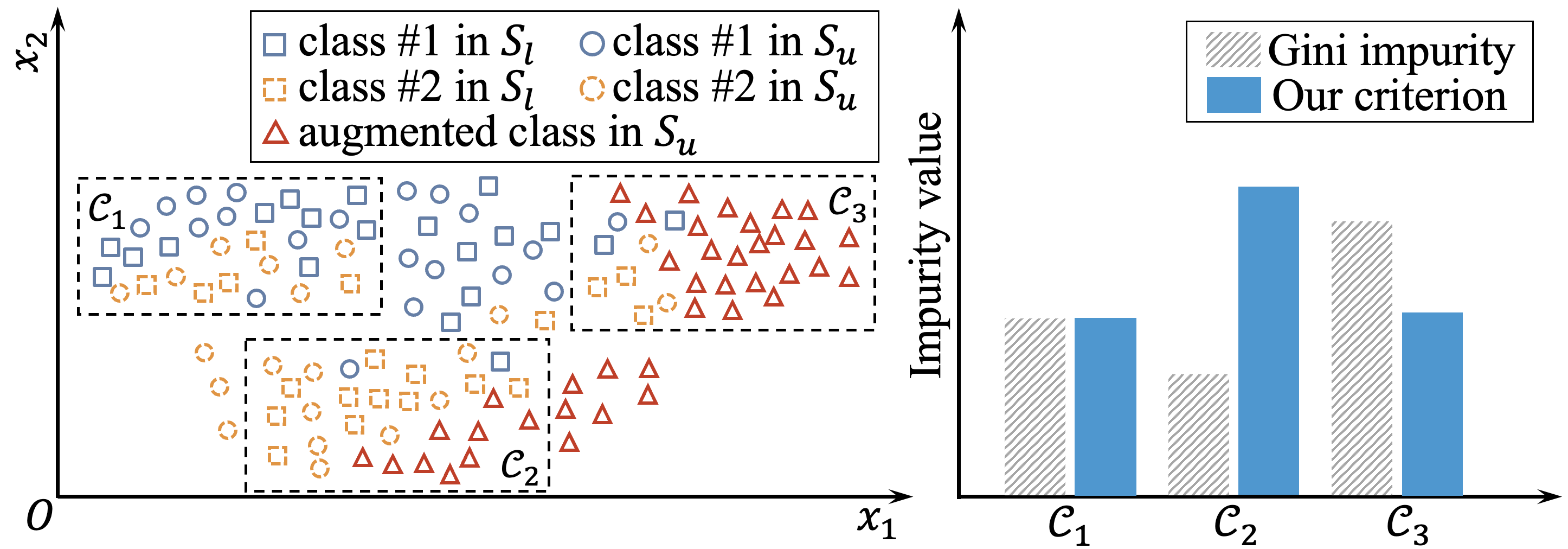}
\caption{An intuitive illustration on the difference between our criterion and original Gini impurity over a 2-dimensional dataset, by considering  augmented class.} \label{pic:comp}
\end{figure}

This definition essentially follows the original Gini impurity \cite{Breiman1984}, except with an additional term $\vartheta_{\C,\kappa+1}(S_l,S_u)$, which aims to incorporate some information of augmented class into impurity measure.

Figure~\ref{pic:comp} presents an intuitive illustration on the difference between our augmented and original Gini impurity over a 2-dimensional synthetic dataset. It is observable that our augmented Gini impurity could properly take augmented data into consideration compared with Gini impurity.

For $\vartheta_{\C,\kappa+1}(S_l,S_u)$, we also have
\begin{lemma}\label{lem:vartheta}
For $\delta\in(0,1)$ and instance space $\C\subseteq\X$, the following holds with probability at least $1-\delta$ over $S_l$ and $S_u$
\begin{eqnarray*}
\lefteqn{\left|\Pr_{(\x,y)\in\D}[y=\kappa+1|\x\in \C] - \vartheta_{\C,\kappa+1}(S_l,S_u) \right|} \\
&\leq&  O\left(\frac{\sqrt{\ln(1/\delta)}}{\gamma} \left(\frac{1}{\sqrt{n_l}}+\frac{\sqrt{\gamma}+1}{\sqrt{n_u}}\right) \right)\ ,
\end{eqnarray*}
if $n_{\C,u}/n_u\geq \gamma$ for some constant $\gamma\in(0,1)$.
\end{lemma}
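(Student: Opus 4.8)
The plan is to show that $\vartheta_{\C,\kappa+1}(S_l,S_u)$ is an empirical proxy for $\Pr_{(\x,y)\in\D}[y=\kappa+1\mid \x\in\C]$, and to control the deviation via standard concentration. First I would identify the population quantity that the empirical expression estimates. Under the class shift assumption~\eqref{eq:class-shift}, for a region $\C$ with $\Pr_{\D_\X}[\x\in\C]>0$ we have
\begin{equation*}
\Pr_{\D}[y=\kappa+1\mid \x\in\C] = 1 - \frac{(1-\theta)\,\Pr_{\D_{\text{kc},\X}}[\x\in\C]}{\Pr_{\D_\X}[\x\in\C]},
\end{equation*}
where $\D_{\text{kc},\X}$ is the $\X$-marginal of $\D_\text{kc}$; this is just Bayes' rule combined with $\Pr_\D[\x\in\C] = (1-\theta)\Pr_{\D_{\text{kc},\X}}[\x\in\C] + \theta\,\Pr_{\D_{\text{ac},\X}}[\x\in\C]$ and the fact that on the augmented-class part the conditional probability of $y=\kappa+1$ is $1$. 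So the ``ideal'' estimator replaces $\Pr_{\D_{\text{kc},\X}}[\x\in\C]$ by $n_{\C,l}/n_l$ (valid since $S_l\sim\D_\text{kc}$) and $\Pr_{\D_\X}[\x\in\C]$ by $n_{\C,u}/n_u$ (valid since $S_u\sim\D_\X$), which is exactly the ratio $\frac{(1-\theta)n_u n_{\C,l}}{n_l\,n_{\C,u}}$ appearing inside $(\cdot)_+$. The clipping $(\cdot)_+$ and the truncation at $1$ only move the estimate closer to the true value, which lies in $[0,1]$, so they can be discarded at the cost of nothing in the upper bound.

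Next I would handle the two empirical ratios separately. For the numerator, $n_{\C,l}/n_l$ is an average of i.i.d.\ Bernoulli variables, so by Hoeffding's inequality $|n_{\C,l}/n_l - \Pr_{\D_{\text{kc},\X}}[\x\in\C]| = O(\sqrt{\ln(1/\delta)/n_l})$ with probability $1-\delta/2$. For the denominator, similarly $|n_{\C,u}/n_u - \Pr_{\D_\X}[\x\in\C]| = O(\sqrt{\ln(1/\delta)/n_u})$ with probability $1-\delta/2$. The assumption $n_{\C,u}/n_u \ge \gamma$ guarantees the empirical denominator is bounded below by $\gamma$; combined with a union bound this also forces $\Pr_{\D_\X}[\x\in\C] \ge \gamma - O(\sqrt{\ln(1/\delta)/n_u})$, i.e.\ bounded below by a constant of order $\gamma$ for $n_u$ large enough (one can absorb the lower-order regime into the $O(\cdot)$). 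I would then write the difference of the two ratios $\frac{a}{b} - \frac{\hat a}{\hat b}$ with $a,\hat a\le 1$, $b,\hat b\ge \Omega(\gamma)$, and use the elementary bound
\begin{equation*}
\left|\frac{a}{b}-\frac{\hat a}{\hat b}\right| \le \frac{|a-\hat a|}{\hat b} + \frac{a\,|b-\hat b|}{b\,\hat b} = O\!\left(\frac{|a-\hat a|}{\gamma} + \frac{|b-\hat b|}{\gamma^2}\right),
\end{equation*}
so that plugging in the Hoeffding bounds yields $O\!\big(\frac{1}{\gamma}\sqrt{\ln(1/\delta)/n_l} + \frac{1}{\gamma^2}\sqrt{\ln(1/\delta)/n_u}\big)$. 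A final cosmetic step rewrites $\frac1\gamma\cdot\frac1{\sqrt{n_l}} + \frac1{\gamma^2}\cdot\frac1{\sqrt{n_u}}$ as $\frac1\gamma\big(\frac1{\sqrt{n_l}} + \frac{1}{\gamma}\cdot\frac{1}{\sqrt{n_u}}\big)$; noting $1/\gamma \le (\sqrt\gamma+1)/\gamma$ and, more to the point, that the stated bound has $(\sqrt\gamma+1)/\sqrt{n_u}$ which dominates $1/(\gamma\sqrt{n_u})$ only when $\gamma$ is not too small — I would double-check this algebra, since it may be that the intended bookkeeping keeps a $1/\gamma^2$ factor implicitly inside the $O(\cdot)$ or uses a slightly different split; in any case matching the precise form $\frac{\sqrt{\ln(1/\delta)}}{\gamma}\big(\frac{1}{\sqrt{n_l}} + \frac{\sqrt\gamma+1}{\sqrt{n_u}}\big)$ is just a matter of choosing the right constants.

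I expect the main obstacle to be the careful treatment of the lower bound on the denominator and the resulting $\gamma$-dependence: one must argue that the event $n_{\C,u}/n_u\ge\gamma$ together with the concentration event lets us treat $\Pr_{\D_\X}[\x\in\C]$ as $\Omega(\gamma)$, and then track how the two $1/\gamma$-type losses (one from dividing by $\hat b$, one from the $b\hat b$ in the denominator of the second term) combine into the claimed single $1/\gamma$ prefactor with the $\sqrt\gamma+1$ correction on the $n_u$ term. Everything else — Bayes' rule for the population identity, Hoeffding for each Bernoulli average, the union bound over the two events, and discarding the harmless clipping operations — is routine. A minor subtlety worth a sentence is the indicator $\mathbb{I}[n_{\C,u}>0]$: on the event $n_{\C,u}/n_u\ge\gamma>0$ it equals $1$, so it plays no role in this lemma and can be dropped at the outset.
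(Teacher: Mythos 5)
Your high-level strategy (Hoeffding on each empirical frequency plus a perturbation argument for the ratio) is sound, but it does not prove the lemma as stated: the place where you yourself hesitate is a real gap. Writing the target as a ratio of the two population masses and plugging in $n_{\C,l}/n_l$ and $n_{\C,u}/n_u$ forces you to divide by both the true and the empirical mass of $\C$, so your bound carries a $1/\gamma^2$ factor on the $1/\sqrt{n_u}$ term. That is \emph{not} ``a matter of choosing the right constants'': the lemma claims the coefficient $(1+\sqrt{\gamma})/\gamma$, and $1/\gamma^2$ is not $O((1+\sqrt{\gamma})/\gamma)$ as $\gamma\to 0$; the explicit $\gamma$-dependence is part of the statement and is propagated into Theorem~\ref{thm:ag-gini} (the $\gamma/(1+\sqrt{\gamma})$ there), so a proof yielding $1/\gamma^2$ establishes only a weaker result. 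Your route also needs the extra step of converting the empirical condition $n_{\C,u}/n_u\geq\gamma$ into a lower bound on $\Pr_{\D_\X}[\x\in\C]$ and absorbing the small-$n_u$ regime into the constant, which is avoidable.

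The paper's proof gets the sharper $\gamma$-dependence by never forming the ratio of two unconditional estimates. It introduces the auxiliary quantity $n'_{\C,u}=\sum_{\x\in S_u}\mathbb{I}[\x\in\C]\Pr_{(\x',y')\sim\D}[y'\in[\kappa]\,|\,\x'=\x]$ and splits the error into three pieces: (i) Hoeffding for $n_{\C,l}/n_l$ around $\Pr_{\D_\text{kc}}[\x\in\C]$; (ii) Hoeffding for $n'_{\C,u}/n_u$ around $(1-\theta)\Pr_{\D_\text{kc}}[\x\in\C]$ (using the class shift assumption); and (iii) concentration of $n'_{\C,u}/n_{\C,u}$ around $\Pr_\D[y\in[\kappa]\,|\,\x\in\C]$, which, conditionally on which unlabeled points land in $\C$, is an average of $n_{\C,u}$ bounded terms and hence concentrates at rate $1/\sqrt{n_{\C,u}}\leq 1/\sqrt{\gamma n_u}$. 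In the final decomposition, pieces (i)--(ii) are multiplied only once by $(1-\theta)n_u/n_{\C,u}\leq 1/\gamma$, bounded directly from the empirical condition, so the total is $O\bigl(\tfrac{1}{\gamma\sqrt{n_l}}+\tfrac{1}{\gamma\sqrt{n_u}}+\tfrac{1}{\sqrt{\gamma n_u}}\bigr)$, i.e., exactly the claimed $(1+\sqrt{\gamma})/\gamma$ coefficient, with no population lower bound on $\Pr_{\D_\X}[\x\in\C]$ needed. If you want to salvage your write-up, replace the ratio-perturbation step by this conditional-normalization decomposition; the rest of your argument (dropping the clipping, the indicator, and the union bound) is fine.
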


Let $\L^*_\C$ be the optimal squared loss over distribution $\D$ under the condition $\x\in\C$ as follows:
\begin{equation} \label{eq:r*}
\L^*_\C = \min_{\w\in\Delta_{\kappa+1}} E_{(\x,y)\sim\D}\left[\| \w-\tilde{y}\|_2^2|\x\in \C\right] \ ,
\end{equation}
where $\tilde{\bm{y}}\in \mathbb{R}^{\kappa+1}$ is the one-hot encoding of $y\in[\kappa+1]$. Intuitively, $\L^*_\C$ shows the minimum expected squared loss  under the same prediction for all instances in space $\C$. 

Based on Lemma~\ref{lem:vartheta} and Assumption~\ref{def:class-shift}, we have
\begin{theorem} \label{thm:ag-gini}
For $\delta\in(0,1)$ and instance space $\C\subseteq\X$, the following holds with probability at least $1-\delta$ over $S_l$ and $S_u$
\[
\left|\L^*_{\C} - \G_\C(S_l,S_u)\right|\leq  O\left(\frac{\kappa\sqrt{\ln(\kappa/\delta)}}{\gamma/(1+\sqrt{\gamma})} \left(\frac{1}{\sqrt{n_l}}+\frac{1}{\sqrt{n_u}}\right) \right)
\]
if $n_{\C,u}/n_u\geq \gamma$ and $n_{\C,l}/n_l\geq \gamma$ for constant $\gamma\in(0,1)$.
\end{theorem}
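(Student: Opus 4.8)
The plan is to recognize $\L^*_\C$ as the population Gini impurity of the class distribution conditioned on $\x\in\C$, and then to show that $(\vartheta_{\C,1}(S_l,S_u),\dots,\vartheta_{\C,\kappa+1}(S_l,S_u))$ estimates that distribution coordinate-wise. Write $p_k=\Pr_{(\x,y)\sim\D}[y=k\mid\x\in\C]$ and $\p_\C=(p_1,\dots,p_{\kappa+1})$. For any $\w$, the bias--variance decomposition gives $E_{(\x,y)\sim\D}[\|\w-\tilde{\bm y}\|_2^2\mid\x\in\C]=\|\w-\p_\C\|_2^2+(1-\|\p_\C\|_2^2)$, which is minimized at $\w=\p_\C\in\Delta_{\kappa+1}$ (a convex combination of one-hot vectors), so the simplex constraint in Eqn.~\eqref{eq:r*} is inactive and $\L^*_\C=1-\sum_{k\in[\kappa+1]}p_k^2$. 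Since under the theorem's hypotheses $n_{\C,l}>0$ and $n_{\C,u}>0$, Definition~\ref{def:ag-gini} yields $\sum_{k\in[\kappa+1]}\vartheta_{\C,k}(S_l,S_u)=1$, and because $\vartheta_{\C,k},p_k\in[0,1]$,
\[
\big|\L^*_\C-\G_\C(S_l,S_u)\big|=\Big|\sum_{k\in[\kappa+1]}\big(\vartheta_{\C,k}^2-p_k^2\big)\Big|\le 2\sum_{k\in[\kappa+1]}\big|\vartheta_{\C,k}(S_l,S_u)-p_k\big| .
\]
So it suffices to bound the coordinate errors.

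The coordinate $k=\kappa+1$ is handled directly by Lemma~\ref{lem:vartheta}, which applies since $n_{\C,u}/n_u\ge\gamma$. For $k\in[\kappa]$ I would first unfold Assumption~\ref{def:class-shift}: as $\D_\text{ac}$ is supported on class $\kappa+1$, we get $\Pr_\D[y=k,\x\in\C]=(1-\theta)\Pr_{\D_\text{kc}}[y=k,\x\in\C]$ for $k\le\kappa$, hence the population identity $p_k=q_k\,(1-p_{\kappa+1})$ with $q_k:=\Pr_{\D_\text{kc}}[y=k\mid\x\in\C]$. On the empirical side, writing $\hat q_k$ for the fraction of labeled instances in $\C$ carrying label $k$, Definition~\ref{def:ag-gini} reads $\vartheta_{\C,k}(S_l,S_u)=(1-\vartheta_{\C,\kappa+1}(S_l,S_u))\,\hat q_k$. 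Setting $\epsilon_{\kappa+1}=\vartheta_{\C,\kappa+1}(S_l,S_u)-p_{\kappa+1}$ and $\epsilon_k=\hat q_k-q_k$, one obtains $\vartheta_{\C,k}(S_l,S_u)-p_k=(1-p_{\kappa+1})\epsilon_k-q_k\epsilon_{\kappa+1}-\epsilon_k\epsilon_{\kappa+1}$, so $|\vartheta_{\C,k}(S_l,S_u)-p_k|\le 2|\epsilon_k|+q_k|\epsilon_{\kappa+1}|$; summing over $k\in[\kappa]$ and using $\sum_{k\in[\kappa]}q_k=1$ collapses the $\epsilon_{\kappa+1}$ contribution to a single term $|\epsilon_{\kappa+1}|$.

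The remaining ingredient is concentration of $\hat q_k$. Conditionally on $n_{\C,l}$, the labels of the $n_{\C,l}$ labeled instances in $\C$ are i.i.d.\ categorical draws with parameters $(q_1,\dots,q_\kappa)$, so Hoeffding's inequality gives $|\epsilon_k|=O(\sqrt{\ln(\kappa/\delta)/n_{\C,l}})$ with probability $1-\delta/(2\kappa)$, and the hypothesis $n_{\C,l}/n_l\ge\gamma$ turns this into $O(\sqrt{\ln(\kappa/\delta)/(\gamma n_l)})$; a union bound over $k\in[\kappa]$ makes all $\kappa$ bounds hold with probability $1-\delta/2$. Feeding this together with Lemma~\ref{lem:vartheta} (invoked at confidence $1-\delta/2$) into the inequalities above gives, with probability $1-\delta$,
\[
\big|\L^*_\C-\G_\C(S_l,S_u)\big|\le O\!\left(\kappa\sqrt{\tfrac{\ln(\kappa/\delta)}{\gamma n_l}}\right)+O\!\left(\tfrac{1+\sqrt\gamma}{\gamma}\sqrt{\ln(1/\delta)}\Big(\tfrac{1}{\sqrt{n_l}}+\tfrac{1}{\sqrt{n_u}}\Big)\right) ,
\]
and since $\gamma\in(0,1)$ we may bound $1/\sqrt\gamma$ and $1/\gamma$ by $(1+\sqrt\gamma)/\gamma=1/(\gamma/(1+\sqrt\gamma))$, which collapses the right-hand side to the claimed form.

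I expect the work to be bookkeeping rather than a single deep step: the delicate points are (i) propagating the two heterogeneous error sources --- the estimation error of $\vartheta_{\C,\kappa+1}$ from Lemma~\ref{lem:vartheta} and the multinomial sampling error of the $\hat q_k$ --- through the multiplicative coupling $\vartheta_{\C,k}=(1-\vartheta_{\C,\kappa+1})\hat q_k$ without picking up spurious factors of $\kappa$ or $1/\gamma$, and (ii) treating $n_{\C,l}$ correctly, since it is only lower-bounded (not fixed), via conditioning on its value before applying concentration.
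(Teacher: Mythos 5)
Your proposal is correct and follows essentially the same route as the paper's proof: establish $\L^*_\C = 1-\sum_k p_k^2$ (the paper's Lemma~\ref{lem:thm:ag-gini}), reduce $|\L^*_\C-\G_\C|$ to $2\sum_k|\vartheta_{\C,k}-p_k|$, handle $k=\kappa+1$ via Lemma~\ref{lem:vartheta}, and handle $k\in[\kappa]$ via the class-shift factorization $p_k=(1-p_{\kappa+1})q_k$ together with Hoeffding on the labeled class fractions under $n_{\C,l}\geq\gamma n_l$, followed by a union bound. The only cosmetic differences are that you expand the product error $(1-\vartheta_{\C,\kappa+1})\hat q_k-(1-p_{\kappa+1})q_k$ directly (and sum the $\epsilon_{\kappa+1}$ contributions using $\sum_k q_k=1$) where the paper invokes its elementary product inequality (Lemma~\ref{lem:basic}) coordinate-wise, and that you are slightly more explicit about conditioning on $n_{\C,l}$; neither changes the argument or the rate.
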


This theorem shows that our augmented Gini impurity can be viewed as a good estimation of $\L^*_\C$ with the convergence rate $O(1/\sqrt{\min\{n_l,n_u\}})$. The detailed proof is given in Appendix~\ref{sec:proof:thm1}, which involves the equivalence between splitting criterions and loss functions, as well as the convergence of $\vartheta_{\C,\kappa+1}(S_l,S_u)$ in Lemma~\ref{lem:vartheta}.

\subsection{Construction of LACForest}
A tree model can be constructed by partitioning instance space into disjoint rectangular nodes recursively. Given node $\C\subseteq\X$, splitting feature $j\in[d]$ and splitting point $a\in\mathbb{R}$, we denote by the left and right children as
\[
\C^l_{j,a}=\{\x\in\C\colon x_j\leq a\} \ \text{ and } \
\C^r_{j,a}=\{\x\in\C\colon x_j>a \} \ ,
\]
respectively. We then solve the optimal splitting feature $j^*$ and splitting point $a^*$ from the following optimization
\begin{equation}\label{eq:split:prob}
(j^*,a^*) \in {\arg \max}_{j,a} \left\{R_{\C}(S_l,S_u, j,a) \right\} \ ,
\end{equation}
where the reduction of augmented Gini impurity is given by
\begin{multline*}
 R_{\C}(S_l,S_u, j,a)=\G_\C(S_{l},S_u) \\
- \frac{n_{\C^l_{j,a},u}}{n_{\C,u}} \G_{\C^l_{j,a}}(S_l,S_u) - \frac{n_{\C^r_{j,a},u}}{n_{\C,u}} \G_{\C^r_{j,a}}(S_l,S_u)\ .
\end{multline*}

From Theorem~\ref{thm:ag-gini}, we can distinguish augmented class from known classes in node $\C$ by solving Eqn.~\eqref{eq:split:prob}, as it converges to optimizing the squared loss as training data increases. From a theoretical view, we require sufficient training data in each node $\C$, i.e., $n_{\C,l}\geq \gamma n_l$ and $n_{\C,u}\geq \gamma n_u$ for some $\gamma\in(0,1)$ in Theorem~\ref{thm:ag-gini}; therefore, we should construct relatively shallow tree models.

On the other hand, we should take deeper tree models for better performance, and it is well-known that better predictions could be obtained by deepening tree models sufficiently \cite{Quinlan1986,Hastie:Tibshirani:Friedman2009}. Hence, our algorithm includes two main steps as follows:

\subsubsection{Step-I: Exploration of Augmented Instances}

We preliminarily construct $m$ random trees $\T'_1,\cdots,\T'_m$ for the exploration of augmented instances from Eqn.~\eqref{eq:split:prob}. For simplicity, we present the detailed construction of tree $\T'_1$, and consider other trees similarly. We initialize $\T'_1$ with one root node of instance space $\X$, and recursively repeat a procedure for each leaf node as follows:
\begin{itemize}
\item Select a $\tau$-subset $\mathcal{S}$ from $d$ available features randomly without replacement;
\item Solve the optimal splitting feature $j^*\in\mathcal{S}$ and point $a^*$  according to Eqn.~\eqref{eq:split:prob} under the constraints
\begin{eqnarray}
\min(n_{\C^l_{j,a},l},n_{\C^r_{j,a},l}) &\geq& \gamma n_l  \ , \label{eq:split:eq1}\\
\min(n_{\C^l_{j,a},u},n_{\C^r_{j,a},u}) &\geq& \gamma n_u  \ .\label{eq:split:eq2}
\end{eqnarray}
This ensures sufficient instances in each splitting node from Theorem~\ref{thm:ag-gini};
\item Split current node into left and right children via the optimal splitting feature $j^*$ and position  $a^*$.
\end{itemize}
The above procedure is stopped if there is no feasible solution for Eqn.~\eqref{eq:split:prob} under constraints in Eqns.~\eqref{eq:split:eq1} and \eqref{eq:split:eq2}.

Given random tree $\T'_1$ with $t'$ leaves $\C'_1,\C'_2,\cdots,\C'_{t'}$, we present the probability of augmented class for $\x\in \X$ as
\[
\T'_{1,\kappa+1}(\x) = \sum_{j\in [t']}   \vartheta_{\C'_j,\kappa+1}(S_{l},S_u) \mathbb{I} [\x\in \C'_j]\ ,
\]
where   $\vartheta_{\C'_j,\kappa+1}(S_{l},S_u)$ is an unbiased estimator for the proportion of augmented instances in $\C'_j$, given by Definition~\ref{def:ag-gini}.

\subsubsection{Step-II: Improvement of Prediction Performance}

\begin{algorithm}[t]
\caption{Our LACForest approach} \label{alg:LACForest}
\textbf{Input:} Labeled data $S_l$, unlabeled data $S_u$, proportion $\theta$, hyper-parameters $m,\tau,\gamma$ 
\begin{algorithmic}[1]
\Statex  \hspace*{-\algorithmicindent} \% Step-I: Exploration of Augmented Instances.
\For{$i\in\{1,\cdots,m\}$}
\State \parbox[t]{\dimexpr\linewidth-\algorithmicindent}{Grow random tree $\T'_i$ based on $S_l$ and $S_u$ according to the reduction of  augmented Gini impurity}.
\EndFor
\Statex  \hspace*{-\algorithmicindent} \% Step-II: Improvement of Prediction Performance
\State Calculate $\mathcal{LF}'_{\kappa+1}(\x)$ for $\x\in S_u$ by Eqn.~\eqref{eq:avgAgscore}.
\State Label top $n_p=\lfloor \theta n_u \rfloor$  instances in $S_u$ of the highest $\mathcal{LF}'_{\kappa+1}(\x)$ and obtain $\tilde{S}$.
\For{$i\in\{1,\cdots,m\}$}
\State \parbox[t]{\dimexpr\linewidth-\algorithmicindent}{Split random tree $\T'_i$ based on $S_l\cup \tilde{S}$ according to the reduction of Gini impurity and obtain $\T_i$.}
\EndFor \\
\Return $\mathcal{LF}(\x) = \underset{k\in [\kappa+1]}{\arg\max} \sum_{i\in[m]} \T_{i,k}(\x)$.
\end{algorithmic}
\end{algorithm}

Given $m$ trees $\T'_1,\cdots,\T'_m$, we calculate the average augmented score of instance $\x\in\X$ by
\begin{equation}\label{eq:avgAgscore}
\mathcal{LF}'_{\kappa+1}(\x) = \frac{1}{m}\sum_{i\in[m]} \T'_{i,\kappa+1}(\x) \ .
\end{equation}
We then select $\lfloor \theta n_u \rfloor$ instances in $S_u$ of the highest average augmented scores, and label them with pseudo-labels of augmented class. Without loss of generality, we denote by
\[
\tilde{S} = \{(\x^u_1,\kappa+1),(\x^u_2,\kappa+1),\cdots,(\x^u_{\lfloor \theta n_u \rfloor},\kappa+1)\}\ .
\]
Intuitively, $\tilde{S}$ retains the most likely augmented instances for further partition, and similar approaches have been studied by \cite{Liu:Lee:Yu:Li2002,Tanha:Van:Afsarmanesh2017}.

We further partition $\T'_1,\cdots,\T'_m$ for better predictions based on $\tilde{S}\cup S_l$. We also repeat the following procedure recursively for each leaf $\C$ of $\T'_1$:
\begin{itemize}
\item Select a $\tau$-subset $\mathcal{S}$ from $d$ available features randomly without replacement;
\item Solve the optimal splitting feature $j^*\in\mathcal{S}$ and position $a^*$ w.r.t. Gini impurity and instances in $\tilde{S}\cup S_l$;
\item Split the current node into left and right children via the optimal splitting feature $j^*$ and position  $a^*$.
\end{itemize}
The above procedure is stopped if all instances have the same label in a leaf, and get the final random tree $\T_1$. Similarly, we repeat the above procedure for $\T'_2,\cdots,\T'_m$.

\medskip

Given $m$ random trees $\T_1,\cdots,\T_m$, we predict the final label of instance $\x\in\X$ as
\[
\mathcal{LF}(\x) = \underset{k\in [\kappa+1]}{\arg\max} \sum_{i\in[m]} \T_{i,k}(\x) \ ,
\]
where $\T_{i,k}(\x)$ is the probability of the $k$-th category of tree $\T_i$ with leaves  $\C_{i,1},\C_{i,2},\cdots,\C_{i,t_i}$, i.e.,
\begin{equation*}
\T_{i,k}(\x)=\sum_{j\in[t_i]} \frac{\mathbb{I}[\x\in \C_{i,j}] \sum_{(\x',y)\in S_l\cup \tilde{S}}\mathbb{I}[\x'\in \C_{i,j}, y=k]}{\max(1,\sum_{(\x',y)\in S_l\cup \tilde{S}}\mathbb{I}[\x'\in \C_{i,j}])} \ .
\end{equation*}

Algorithm~\ref{alg:LACForest} gives detailed descriptions of our  LACForest approach. Notice that our method requires the proportion $\theta$ of augmented class, which is usually unknown in practice. One feasible solution is to consider the kernel mean embedding method from \cite{Ramaswamy:Scott:Tewari2016}, and this is similar to previous studies on learning with augmented class \cite{Zhang:Zhao:Ma:Zhou2020,Shu:He:Wang:Wei:Xian:Feng2023}.

\section{Deep Neural LACForest} \label{sec:DeepLACForest}

This section explores deep neural forests via augmented Gini impurity for learning with augmented class, which presents powerful representations with an end-to-end training manner for complex data such as images and texts. For neural forests, another advantage is to take soft splits rather than hard splits of decision trees, which could yield smoother decision boundaries and avoid overfitting \cite{Kontschieder:Fiterau:Criminisi:Bulo2015}. In this work ,we take one step on learning with augmented class via deep neural forests,  while previous works mostly focused on all known classes \cite{Kontschieder:Fiterau:Criminisi:Bulo2015,Tanno:Arulkumaran:Alexander:Criminisi:Nori2019,Ji:Wen:Zhang:Du:Wu:Zhao:Liu:Huang2020,Li:Cai:Yan2024}.

\subsection{Augmented Gini Impurity for Neural Trees}
A deep neural forest is constructed with a DNN encoder $h$ and $m$ differentiable neural trees $\mathcal{D}\mathcal{T}_1,\mathcal{D}\mathcal{T}_2,\cdots,\mathcal{D}\mathcal{T}_m$. The key point is how to learn differentiable neural trees by incorporating information of augmented class, and our idea is to consider augmented Gini impurity as an optimization objective for neural trees.

We take neural tree $\mathcal{D}\mathcal{T}_1$ as an example, and it is essentially an $l$-layer complete binary tree with $l\geq2$. We denote by
$\{\B_1,\cdots,\B_t\}$ and $\{\B_{t+1},\cdots,\B_{2t+2}\}$ all internal and leaf nodes of  $\mathcal{D}\mathcal{T}_1$, respectively, where  $t=2^{l-1}-1$.

For every internal node $\B_{i}$ ($i\in[t]$), we associate with a function $f_i\colon\X\rightarrow [0,1]$ to represent the probability of being assigned to the left child, and hence the probability of the right child is $1-f_i(\x)$. Given encoder $h$, we define
\[
f_i(\x) = \text{sigmoid}(\w_{i}^T h(\x) + b_i) \ ,
\]
where sigmoid$(z)=(1+\exp(-z))^{-1}$ and $\w_i,b_i$ are learned parameters, as in \cite{Kontschieder:Fiterau:Criminisi:Bulo2015}.

\begin{algorithm}[t]
\caption{Deep Neural LACForest} \label{alg:DeepLACForest}
\textbf{Input:} Labeled data $S_l$, unlabeled data $S_u$, proportion $\theta$,
hyper-parameters $m,l,T,\lambda_{\text{ce}}$
\begin{algorithmic}[1]
\State Randomly initialize $h$ and $\mathcal{D}\mathcal{T}_1,\cdots,\mathcal{D}\mathcal{T}_m$.
\For {$j\in \{1,\cdots,T\}$}
\State Break $S_l$ and $S_u$ into random mini-batches.
\For{mini-batches $S'_l$ and $S'_u$ from $S_l$ and $S_u$}
\State Compute $\L$ by Eqn.~\eqref{eq:loss} and compute gradients.
\State Update $h$ and $\mathcal{D}\mathcal{T}_1,\cdots,\mathcal{D}\mathcal{T}_m$ by SGD.
\EndFor
\EndFor \\
\Return $\mathcal{DF}(\x) =  \underset{k\in [\kappa+1]}{\arg\max} \sum_{i\in[m]} \mathcal{D}\mathcal{T}_{i,k}(\x)$.
\end{algorithmic}
\end{algorithm}

For every leaf node $\B_i$ $(i\in[2t+2]\setminus [t])$, let $\mathcal{I}(\x\to\B_i)$ be the event that $\x$ is assigned to leaf $\B_i$, and denote by
\begin{eqnarray*}
\mu_{\B_i}(\x)&=&\Pr\left[\mathcal{I}(\x\to\B_i)\right]\\
 &=&\prod_{j\in[t]} f_j(\x)^{\mathbb{I}[\B_i\swarrow\B_{j}]} (1-f_{j}(\x))^{\mathbb{I}[\mathcal{B}_j \searrow\B_{i}]} \ ,
\end{eqnarray*}
where $\B_i\swarrow \B_{j}$ and $\mathcal{B}_j \searrow\B_{i}$ represent that $\B_i$ belongs to the left and right subtrees of $\B_{j}$, respectively. We also denote by
\[
n_{\B_i,u}=\sum_{\x\in S_u} \mu_{\B_i}(\x)\quad \text{and} \quad n_{\B_i,l}=\sum_{(\x,y)\in S_l} \mu_{\B_i}(\x) \ ,
\]
for unlabeled data $S_u$ and labeled data $S_l$, respectively.

\begin{definition} \label{def:ag-gini:B}
For $i\in[2t+2]\setminus[t]$, we define the augmented Gini impurity  w.r.t. $S_l$ and $S_u$ as
\begin{equation} \label{eq:ag-gini:B}
\G_{\B_i}(S_l,S_u) = 1 - \sum_{k\in[\kappa+1]} \vartheta^2_{\B_i,k}(S_l,S_u)\ ,
\end{equation}
where $\vartheta_{\B_i,\kappa+1}(S_l,S_u)$ is defined as
\begin{equation*}
\vartheta_{\B_i,\kappa+1}(S_l,S_u)=\left(1-\frac{ (1-\theta)n_u n_{\B_i,l}} {n_l n_{\B_i,u} }\right)_+ \ ,
\end{equation*}
and for $k\in[\kappa]$, $\vartheta_{\B_i,k}(S_l,S_u)$ is given by
\begin{equation*}
(1-\vartheta_{\B_i,\kappa+1}(S_l,S_u))\sum_{(\x,y)\in S_l}\frac{ \mu_{\B_i}(\x)}{n_{\B_i,l} }\mathbb{I}[y=k] \ .
\end{equation*}
\end{definition}

\begin{table*}[t]
\centering
\footnotesize
\resizebox{1\linewidth}{!}{
\renewcommand{\arraystretch}{1.1}
\begin{tabular}{|cccc|cccc|cccc|cccc|}
\hline
Datasets  & \#inst & \#feat & \#class &  Datasets &  \#inst&  \#feat &  \#class &  Datasets  & \#inst & \#feat & \#class &  Datasets &  \#inst&  \#feat &  \#class  \\
\hline
\textsf{segment}  &  2,310 & 19  & 7   &  \textsf{mfcc}     &   7,195 &  22   &   4  &  \textsf{drybean}  &  13,661  &  16  &  7
&  \textsf{mnist}    &  70,000  &  784 &  10  \\
\textsf{texture}  &  5,500 & 40  & 11  &  \textsf{usps}     &  9,298   & 256  &  10  &  \textsf{letter}    &  20,000  &  16  &  26  &  \textsf{fmnist}     &  70,000  &  784 &  10   \\
\textsf{optdigits}&  5,620 & 64  & 10  &  \textsf{har}      &  10,299  & 562  &  6   &  \textsf{shuttle}     &  58,000  &  9  &  7  &  \textsf{kuzushiji} &  70,000  &  784 &  10 \\
\textsf{satimage} &  6,435 & 36  & 6   &  \textsf{mapping}  &  10,845  & 28   &  6   &  \textsf{drive}     &  58,509  &  48  &  11  &  \textsf{svhn}     & 99,289 &  3072 &  10 \\
\textsf{landset}  &  6,435 & 73  & 6   &  \textsf{pendigits}&  10,992  &  16  &  10 &  \textsf{senseveh}  &  61,581  &  100 &  3 &  \textsf{cifar10}  & 60,000 &  3072 &  10 \\
\hline
\end{tabular}}
\caption{Details of datasets.}
\label{tab:benchmark}
\end{table*}

This definition follows the augmented Gini impurity of decision trees in  Definition~\ref{def:ag-gini}, whereas deep neural trees consider probabilistic assignment of instances to nodes while the latter focuses on deterministic assignment. Essentially, $\vartheta_{\B_i,k}(S_l,S_u)$ estimates the probability of the $k$-th class under the condition $\mathcal{I}(\x\to\B_i)$. Based on Assumption~\ref{def:class-shift}, we have
\begin{lemma} \label{lem:thm:ag-gini:b}
For $k\in[\kappa+1]$, $\delta\in(0,1)$ and $i\in[2t+2]\setminus[t]$, we have, with probability at least $1-\delta$ over $S_l$ and $S_u$,
\begin{eqnarray*}
\lefteqn{\left|\Pr_{(\x,y)\sim\D}[y=k|\mathcal{I}(\x\to\B_i)] - \vartheta_{\B_i,k}(S_l,S_u)\right|}\\
&\leq& O\left(\frac{\sqrt{\ln(1/\delta)}}{\gamma^2/(\gamma+1)}\left(\frac{1}{\sqrt{n_l}}+\frac{1}{\sqrt{n_u}}\right)\right) \ ,
\end{eqnarray*}
if  ${\mathbb{E}}_{\D}[\mu_{\B_i}(\x)]\geq \gamma$, $\mathbb{E}_{\D_\text{kc}}[\mu_{\B_i}(\x)]\geq \gamma$, $n_{\B_i,u}\geq \gamma n_u$ and $n_{\B_i,l}\geq \gamma n_l$ for some constant $\gamma\in(0,1)$.
\end{lemma}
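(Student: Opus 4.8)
\textbf{Proof plan for Lemma~\ref{lem:thm:ag-gini:b}.}

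The plan is to mirror the argument behind Lemma~\ref{lem:vartheta}, but carried out over the probabilistic membership weights $\mu_{\B_i}(\x)$ instead of the hard indicators $\mathbb{I}[\x\in\C]$. First I would fix the leaf $\B_i$ and abbreviate $\mu(\x)=\mu_{\B_i}(\x)\in[0,1]$, $p=\mathbb{E}_{\D}[\mu(\x)]\geq\gamma$, $q=\mathbb{E}_{\D_\text{kc}}[\mu(\x)]\geq\gamma$. The key observation is that $n_{\B_i,u}/n_u=\frac1{n_u}\sum_{\x\in S_u}\mu(\x)$ is an average of i.i.d.\ $[0,1]$-valued random variables with mean $p$ (since $S_u\sim\D_\X$), and $n_{\B_i,l}/n_l$ is an average of i.i.d.\ $[0,1]$-valued random variables with mean $q$ (since $S_l\sim\D_\text{kc}$). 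Likewise, for $k\in[\kappa]$, $\frac1{n_l}\sum_{(\x,y)\in S_l}\mu(\x)\mathbb{I}[y=k]$ has mean $\mathbb{E}_{\D_\text{kc}}[\mu(\x)\mathbb{I}[y=k]]$. By Hoeffding's inequality and a union bound over these $O(\kappa)$ averages, each deviates from its mean by at most $O(\sqrt{\ln(\kappa/\delta)/n_l})$ or $O(\sqrt{\ln(\kappa/\delta)/n_u})$ with probability $\geq1-\delta$.

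Next I would relate the population quantities to the target conditional probabilities. Under the class-shift assumption $\D=(1-\theta)\D_\text{kc}+\theta\D_\text{ac}$, and using that $\D_\text{ac}$ is supported on $y=\kappa+1$ while $\D_\text{kc}$ is supported on $y\in[\kappa]$, one gets $\Pr_{\D}[y=\kappa+1\mid\mathcal{I}(\x\to\B_i)] = \theta\,\mathbb{E}_{\D_\text{ac}}[\mu(\x)]/p = 1-(1-\theta)q/p$, which is exactly the population analogue of $\vartheta_{\B_i,\kappa+1}$ with $n_{\B_i,l}/n_l\to q$ and $n_{\B_i,u}/n_u\to p$; similarly $\Pr_{\D}[y=k\mid\mathcal{I}(\x\to\B_i)]=(1-\theta)\mathbb{E}_{\D_\text{kc}}[\mu(\x)\mathbb{I}[y=k]]/p$ for $k\in[\kappa]$, matching $\vartheta_{\B_i,k}$. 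So the lemma reduces to a perturbation/sensitivity bound: how much does the ratio $1-(1-\theta)\frac{n_u n_{\B_i,l}}{n_l n_{\B_i,u}}$ (truncated at $0$ by $(\cdot)_+$) move when $n_{\B_i,l}/n_l$ and $n_{\B_i,u}/n_u$ are each perturbed by $\varepsilon_l,\varepsilon_u$ from $q,p$? Writing the ratio as a function $g(a,b)=(1-\theta)a/b$ with $a\approx q\geq\gamma$, $b\approx p\geq\gamma$, and using $b\geq\gamma$ together with $n_{\B_i,u}\geq\gamma n_u$ to keep denominators bounded away from zero, a first-order expansion gives $|g(a+\varepsilon_l,b-\varepsilon_u)-g(a,b)|\leq O(\varepsilon_l/\gamma + \varepsilon_u/\gamma^2)$ (the $a/b^2$ term contributes the $1/\gamma^2$; note $a\leq 1$); the truncation $(\cdot)_+$ is $1$-Lipschitz and only helps. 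For $k\in[\kappa]$, the extra factor $(1-\vartheta_{\B_i,\kappa+1})$ is in $[0,1]$ and the normalizer $n_{\B_i,l}$ is $\geq\gamma n_l$, so the same kind of bound with denominators $\geq\gamma$ applies, and one absorbs the propagated error from $\vartheta_{\B_i,\kappa+1}$. Combining the concentration bounds with the Lipschitz estimate yields the stated rate $O\!\left(\frac{\sqrt{\ln(1/\delta)}}{\gamma^2/(\gamma+1)}\big(\frac1{\sqrt{n_l}}+\frac1{\sqrt{n_u}}\big)\right)$, where the $\gamma^2/(\gamma+1)$ denominator collects the worst of the $1/\gamma$ and $1/\gamma^2$ factors.

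I would also need to note the role of the conditions $\mathbb{E}_{\D}[\mu(\x)]\geq\gamma$ and $\mathbb{E}_{\D_\text{kc}}[\mu(\x)]\geq\gamma$ (not just the empirical versions): the population lower bounds are what make the \emph{true} conditional probabilities well-defined and the population denominator $p$ bounded below, while the empirical bounds $n_{\B_i,u}\geq\gamma n_u$, $n_{\B_i,l}\geq\gamma n_l$ control the estimator's denominators. The main obstacle is the sensitivity analysis of the truncated ratio: one must be careful that the $(\cdot)_+$ operation, the product with $(1-\vartheta_{\B_i,\kappa+1})$ for known classes, and the division by $n_{\B_i,l}$ do not compound the $1/\gamma$ factors beyond $1/\gamma^2$, and that the Hoeffding deviations are controlled \emph{simultaneously} for all the relevant empirical averages via a single union bound so that the high-probability event is shared. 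Everything else — the concentration inequalities and the identification of population limits via Assumption~\ref{def:class-shift} — is routine and parallels Lemma~\ref{lem:vartheta}.
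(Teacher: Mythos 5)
Your plan is correct and yields the stated rate; it is built from the same ingredients as the paper's proof (Hoeffding bounds on the $\mu_{\B_i}$-weighted empirical averages over $S_l$ and $S_u$, the class-shift assumption to translate $\D_\text{kc}$-expectations into $\D$-quantities, the $\gamma$ lower bounds on both population and empirical denominators producing the $1/\gamma$ and $1/\gamma^2$ factors, and a product/ratio error split for $k\in[\kappa]$), but the decomposition is organized differently. The paper (Lemma~\ref{Deep:lem1:lem1} plus the proof of Lemma~\ref{lem:thm:ag-gini:b}) never writes down the population limit of the plug-in estimator: it inserts the hybrid quantity $\hat{\theta}_{\B_i}=\sum_{\x\in S_u}\mu_{\B_i}(\x)\eta_{\kappa+1}(\x)/\sum_{\x\in S_u}\mu_{\B_i}(\x)$, which weights the unlabeled sample by the true posterior $\eta_{\kappa+1}$, and then chains two triangle inequalities, controlling each ratio discrepancy with the elementary inequality $|ab-cd|\le|a-c|+|b-d|$ (Lemma~\ref{lem:basic}); for $k\in[\kappa]$ it factors the target as $\Pr[y=k\,|\,y\in[\kappa],\mathcal{I}(\x\to\B_i)]\cdot\Pr[y\in[\kappa]\,|\,\mathcal{I}(\x\to\B_i)]$ and reuses the same device. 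You instead identify the exact population limits $1-(1-\theta)q/p$ and $(1-\theta)\mathbb{E}_{\D_\text{kc}}[\mu\,\mathbb{I}[y=k]]/p$ via the class-shift identity and then run a single mean-value/Lipschitz perturbation of $g(a,b)=(1-\theta)a/b$ on the region $a\in[0,1]$, $b\ge\gamma$, with the truncation $(\cdot)_+$ handled as a projection toward the (nonnegative) true probability — exactly the role the paper's first inequality in Lemma~\ref{Deep:lem1:lem1} plays. Your route is slightly leaner (it needs only three concentration events per fixed $k$, versus five in the paper, and avoids the somewhat artificial $\hat\theta_{\B_i}$), while the paper's buys a purely algebraic treatment that never invokes derivatives. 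Two small points to tighten: the lemma is stated for a fixed $k$, so the union bound should run over the constant number of averages relevant to that $k$ (giving $\ln(1/\delta)$ rather than $\ln(\kappa/\delta)$; the $\kappa$-fold union is only needed later in Theorem~\ref{thm:ag-gini:b}); and when you ``absorb the propagated error from $\vartheta_{\B_i,\kappa+1}$'' you should make explicit that the two factors being compared lie in $[0,1]$, which is precisely the hypothesis that makes the product-error step ($|ab-cd|\le|a-c|+|b-d|$) valid.
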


Denote by $\L^*_{\B_i}$ the optimal squared loss  under $\mathcal{I}(\x\to\B_i)$:
\begin{equation} \label{eq:loss:bi}
\L^*_{\B_i} = \min_{\w\in\Delta_{\kappa+1}} E_{(\x,y)\sim\D}\left[\|\w-\tilde{\bm{y}}\|^2_2|\mathcal{I}(\x\to\B_i)\right] \ ,
\end{equation}
where $\tilde{\bm{y}}\in \mathbb{R}^{\kappa+1}$ is the one-hot encoding of $y\in[\kappa+1]$. Based on Assumption~\ref{def:class-shift} and Lemma~\ref{lem:thm:ag-gini:b}, we have
\begin{theorem}\label{thm:ag-gini:b}
For $\delta\in(0,1)$ and $i\in[2t+2]\setminus[t]$, the following holds with probability at least $1-\delta$ over $S_l$ and $S_u$
\[
\left|\L^*_{\B_i} - \G_{\B_i}(S_l,S_u)\right| \leq  O\left(\frac{\kappa\sqrt{\ln(\kappa/\delta)}}{\gamma^2/(\gamma+1)}\left(\frac{1}{\sqrt{n_l}}+\frac{1}{\sqrt{n_u}}\right)\right) \ ,
\]
if  ${\mathbb{E}}_{\D}[\mu_{\B_i}(\x)]\geq \gamma$, $\mathbb{E}_{\D_\text{kc}}[\mu_{\B_i}(\x)]\geq \gamma$, $n_{\B_i,u}\geq \gamma n_u$ and $n_{\B_i,l}\geq \gamma n_l$ for some constant $\gamma\in(0,1)$.
\end{theorem}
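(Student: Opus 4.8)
\emph{Proof sketch.}
The plan is to reduce Theorem~\ref{thm:ag-gini:b} to Lemma~\ref{lem:thm:ag-gini:b} through the classical identification of the Gini impurity with the optimal constant-prediction squared loss. Write $p_{\B_i,k}:=\Pr_{(\x,y)\sim\D}[y=k\mid\mathcal{I}(\x\to\B_i)]$ for the conditional class probabilities at leaf $\B_i$ (interpreting the conditioning on the probabilistic routing event $\mathcal{I}(\x\to\B_i)$ in the usual way, i.e. reweighting $\D$ by $\mu_{\B_i}(\x)$), and set $\p_{\B_i}=(p_{\B_i,1},\dots,p_{\B_i,\kappa+1})$. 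First I would establish the closed form
\[
\L^*_{\B_i}=1-\sum_{k\in[\kappa+1]}p_{\B_i,k}^2 .
\]
This follows by expanding $\mathbb{E}\bigl[\|\w-\tilde{\bm y}\|_2^2\mid\mathcal{I}(\x\to\B_i)\bigr]=\|\w\|_2^2-2\w^{\top}\p_{\B_i}+1$ (using that $\tilde{\bm y}$ is one-hot, so $\|\tilde{\bm y}\|_2^2=1$ and $\mathbb{E}[\tilde{\bm y}\mid\mathcal{I}(\x\to\B_i)]=\p_{\B_i}$), whose minimizer over $\Rb^{\kappa+1}$ is $\w=\p_{\B_i}$; since $\p_{\B_i}\in\Delta_{\kappa+1}$, the simplex constraint in Eqn.~\eqref{eq:loss:bi} is inactive at the optimum and the minimum value is $1-\|\p_{\B_i}\|_2^2$.

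Next, subtracting this from $\G_{\B_i}(S_l,S_u)=1-\sum_k\vartheta_{\B_i,k}^2(S_l,S_u)$ of Eqn.~\eqref{eq:ag-gini:B} cancels the constants, and using $|a^2-b^2|=|a-b|\,|a+b|\le 2|a-b|$ for $a,b\in[0,1]$ together with $p_{\B_i,k},\vartheta_{\B_i,k}\in[0,1]$ gives
\[
\bigl|\L^*_{\B_i}-\G_{\B_i}(S_l,S_u)\bigr|\le 2\sum_{k\in[\kappa+1]}\bigl|p_{\B_i,k}-\vartheta_{\B_i,k}(S_l,S_u)\bigr| .
\]
Then I would invoke Lemma~\ref{lem:thm:ag-gini:b}, which — under exactly the four hypotheses assumed in the theorem ($\mathbb{E}_\D[\mu_{\B_i}(\x)]\ge\gamma$, $\mathbb{E}_{\D_\text{kc}}[\mu_{\B_i}(\x)]\ge\gamma$, $n_{\B_i,u}\ge\gamma n_u$, $n_{\B_i,l}\ge\gamma n_l$) — bounds a single deviation $|p_{\B_i,k}-\vartheta_{\B_i,k}(S_l,S_u)|$ by $O\bigl(\frac{\sqrt{\ln(1/\delta')}}{\gamma^2/(\gamma+1)}(\tfrac{1}{\sqrt{n_l}}+\tfrac{1}{\sqrt{n_u}})\bigr)$ with probability at least $1-\delta'$. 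Applying it with $\delta'=\delta/(\kappa+1)$ for each $k\in[\kappa+1]$ and taking a union bound over these $\kappa+1$ failure events, all deviations hold simultaneously with probability at least $1-\delta$, and $\ln(1/\delta')=\ln((\kappa+1)/\delta)=O(\ln(\kappa/\delta))$. Substituting into the displayed inequality and summing the $\kappa+1$ terms of the same order yields the claimed bound $O\bigl(\frac{\kappa\sqrt{\ln(\kappa/\delta)}}{\gamma^2/(\gamma+1)}(\tfrac{1}{\sqrt{n_l}}+\tfrac{1}{\sqrt{n_u}})\bigr)$.

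The main obstacle is not the probabilistic estimation — that is entirely packaged in Lemma~\ref{lem:thm:ag-gini:b}, after which everything above is bookkeeping — but rather the care needed in the first step: because deep neural trees use the \emph{soft} routing event $\mathcal{I}(\x\to\B_i)$ rather than the hard cell membership $\x\in\C$ of Theorem~\ref{thm:ag-gini}, one must check that $\p_{\B_i}$ (defined via the $\mu_{\B_i}$-reweighted conditional distribution) really is the population target tracked by the empirical quantity $\vartheta_{\B_i,k}(S_l,S_u)$ in Definition~\ref{def:ag-gini:B}, and that the simplex constraint in Eqn.~\eqref{eq:loss:bi} is genuinely inactive (which is what makes $\L^*_{\B_i}=1-\sum_k p_{\B_i,k}^2$ an equality rather than an inequality). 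Modulo this, the argument is the proof of Theorem~\ref{thm:ag-gini} verbatim, with $\x\in\C$ replaced throughout by the event $\mathcal{I}(\x\to\B_i)$ and the cell counts replaced by $\mu_{\B_i}$-weighted sums.
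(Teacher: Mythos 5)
Your proposal is correct and follows essentially the same route as the paper: the closed form $\L^*_{\B_i}=1-\sum_k p_{\B_i,k}^2$ is the paper's Lemma~\ref{lem:equiv:thm:ag-gini:b}, the bound $|\vartheta^2_{\B_i,k}-p^2_{\B_i,k}|\le 2|\vartheta_{\B_i,k}-p_{\B_i,k}|$ is the paper's elementary step, and the per-class application of Lemma~\ref{lem:thm:ag-gini:b} with $\delta/(\kappa+1)$ plus a union bound is exactly how the paper concludes. No gaps.
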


This theorem shows that  $\G_{\B_i}(S_l,S_u)$ could be seen as an unbiased estimation of $\L^*_{\B_i}$. The detailed proof is presented in Appendix~\ref{sec:proof:thm2}, which is  similar to that of Theorem~\ref{thm:ag-gini},  but takes different analysis on the convergence of $\vartheta_{\B_i,k}(S_l,S_u)$ by considering the probabilistic assignment of neural trees.

\subsection{Deep Neural LACForest}
In the training phase, we randomly initialize encoder $h$ and neural trees $\mathcal{D}\mathcal{T}_1, \cdots, \mathcal{D}\mathcal{T}_m$, and update the encoder and  neural trees iteratively. In each iteration, we receive two mini-batches of labeled data $S'_l$ and unlabeled data $S'_u$ sampled randomly from $S_l$ and $S_u$, respectively. We introduce
\begin{equation}\label{eq:lag:dti}
\mathcal{L}_{\text{ag}}(\mathcal{D}\mathcal{T}_i)= \sum_{j\in[2t+2]\setminus[t]}\omega_{i,j}(S'_u)\G_{\B_{i,j}}(S'_l,S'_u) \ ,
\end{equation}
where $\{\B_{i,j}\}_{j\in[2t+2]\setminus[t]}$ denote  all leaf nodes of $\mathcal{D}\mathcal{T}_i$, and $\omega_{i,j}(S'_u)$ is the proportion of unlabeled data in $\B_{i,j}$, i.e.,
\[
\omega_{i,j}(S'_u) =\frac{1}{|S'_u|} \sum_{\x\in S'_u}{\mu_{\B_{i,j}}(\x)} \ .
\]
The augmented Gini loss in Eqn.~\eqref{eq:lag:dti} is a weighted average augmented Gini impurity for all leaf nodes of $\mathcal{D}\mathcal{T}_i$.

For better representation, we  introduce an auxiliary linear predictor $g\colon\mathbb{R}^{d'}\rightarrow \Delta_{\kappa}$ as done in \cite{Yan:Xie:He2021}, and take the cross entropy loss on known classes w.r.t. $S'_l$ as
\[
\mathcal{L}_{\text{ce}} = -\frac{1}{|S'_l|} \sum_{(\x,y)\in S'_l} \sum_{t\in [\kappa]} \tilde{y}_{t} \log\left(g_t\left(h\left(\x\right)\right)\right) \ .
\]
We finally get the optimization objective as follows:
\begin{equation} \label{eq:loss}
\mathcal{L} =\sum_{i\in[m]}\frac{\mathcal{L}_{\text{ag}}(\mathcal{D}\mathcal{T}_i)}{m} + \lambda_{\text{ce}} \mathcal{L}_{\text{ce}} \ ,
\end{equation}
where $\lambda_{\text{ce}}>0$  is a trade-off parameter. Intuitively, the cross entropy loss is beneficial to learning a basic representation of target data, and the augmented gini loss could improve the ability of neural trees to distinguish augmented class from known classes, as shown by Theorem~\ref{thm:ag-gini:b}.

\begin{table*}[t]

\centering
\footnotesize
\renewcommand{\arraystretch}{1.1}
\resizebox{1\linewidth}{!}{
\begin{tabular}{|ccccccccc|}
\hline
Datasets & Our LACForest & GLAC & EULAC & EVM & PAC-iForest & OSNN & LACU-SVM  & OVR-SVM\\
\hline
\textsf{segment} & .9436$\pm$.0186 & .8838$\pm$.0390$\bullet$ & .9256$\pm$.0276$\bullet$ & .8612$\pm$.0746$\bullet$ & .6678$\pm$.0763$\bullet$ & .5467$\pm$.0384$\bullet$ & .4918$\pm$.0796$\bullet$ & .6109$\pm$.0901$\bullet$   \\
\textsf{texture} & .9151$\pm$.0176 & .9130$\pm$.0328 \ \ &.9138$\pm$.0208 \ \  & .8724$\pm$.0270$\bullet$ & .7107$\pm$.0518$\bullet$ & .5956$\pm$.0405$\bullet$ & .5829$\pm$.0478$\bullet$  &  .6078$\pm$.0587$\bullet$ \\
\textsf{optdigits} & .9260$\pm$.0203 & .8842$\pm$.0384$\bullet$ & .9243$\pm$.0186 \ \ & .9069$\pm$.0234$\bullet$ & .7202$\pm$.0415$\bullet$ & .6572$\pm$.0282$\bullet$ & .7385$\pm$.0326$\bullet$ & .7547$\pm$.0552$\bullet$  \\
\textsf{satimage} & .8791$\pm$.0322 & .8215$\pm$.0526$\bullet$ & .8644$\pm$.0371$\bullet$ & .7238$\pm$.0664$\bullet$ & .7460$\pm$.0566$\bullet$ & .5101$\pm$.0380$\bullet$ & .6324$\pm$.0165$\bullet$  & .5228$\pm$.0568$\bullet$ \\
\textsf{landset} & .9243$\pm$.0206 & .8627$\pm$.0262$\bullet$ & .8857$\pm$.0236$\bullet$ & .8120$\pm$.0464$\bullet$ & .7572$\pm$.0671$\bullet$ & .5302$\pm$.0257$\bullet$ & .6289$\pm$.0124$\bullet$  & .5366$\pm$.0316$\bullet$  \\
\textsf{mfcc} & .9418$\pm$.0144 & .8937$\pm$.0194$\bullet$ & .9506$\pm$.0137$\circ$ & .8669$\pm$.0888$\bullet$ & .7901$\pm$.0710$\bullet$ & .5329$\pm$.0311$\bullet$ & .7751$\pm$.0362$\bullet$ & .6428$\pm$.0215$\bullet$ \\
\textsf{usps} & .8931$\pm$.0198 & .8645$\pm$.0382$\bullet$ & .8955$\pm$.0250 \ \ & .7959$\pm$.0697$\bullet$ & .5695$\pm$.0894$\bullet$ & .6413$\pm$.0317$\bullet$ & .7642$\pm$.0379$\bullet$  & .7488$\pm$.0464$\bullet$  \\
\textsf{har} & .9020$\pm$.0319 & .8772$\pm$.0460$\bullet$ & .8922$\pm$.0269$\bullet$ & .5016$\pm$.0558$\bullet$ & .5963$\pm$.0699$\bullet$ & .5048$\pm$.0386$\bullet$ & .5570$\pm$.0408$\bullet$ & .5046$\pm$.0251$\bullet$  \\
\textsf{mapping} & .8612$\pm$.0199 & .7964$\pm$.0350$\bullet$ & .8515$\pm$.0309$\bullet$ & .7111$\pm$.1320$\bullet$ & .6712$\pm$.1046$\bullet$ & .6218$\pm$.0605$\bullet$ & .6538$\pm$.0559$\bullet$ & .5210$\pm$.0596$\bullet$  \\
\textsf{pendigits} & .9281$\pm$.0215 & .8872$\pm$.0241$\bullet$ & .9276$\pm$.0245 \ \ & .9266$\pm$.0330 \ \  & .7709$\pm$.0742$\bullet$ & .5822$\pm$.0243$\bullet$ & .7352$\pm$.0513$\bullet$ & .6424$\pm$.0469$\bullet$  \\
\textsf{drybean} & .8932$\pm$.0208 & .9026$\pm$.0528$\circ$ & .9038$\pm$.0202$\circ$ & .7834$\pm$.0428$\bullet$ & .7555$\pm$.0693$\bullet$ & .5661$\pm$.0580$\bullet$ & .6384$\pm$.0397$\bullet$ & .5471$\pm$.0478$\bullet$  \\
\textsf{letter} & .7402$\pm$.0335 & .6331$\pm$.0394$\bullet$ & .6057$\pm$.0387$\bullet$ & .7004$\pm$.0460$\bullet$ & .4875$\pm$.0373$\bullet$ & .5980$\pm$.0231$\bullet$ &  .5547$\pm$.0538$\bullet$ & .6238$\pm$.0399$\bullet$ \\
\textsf{shuttle} & .9750$\pm$.0197 & .9444$\pm$.0506$\bullet$ &.9770$\pm$.0116 \ \  &.6245$\pm$.0347$\bullet$ & .6042$\pm$.0326$\bullet$  & .5464$\pm$.0649$\bullet$ & .6734$\pm$.0218$\bullet$ & .4862$\pm$.0249$\bullet$ \\
\textsf{drive} & .8529$\pm$.0532 & .6114$\pm$.0824$\bullet$ & .7844$\pm$.0460$\bullet$ & .7708$\pm$.0476$\bullet$ & .4296$\pm$.0578$\bullet$ & .5474$\pm$.0246$\bullet$ & .6032$\pm$.0783$\bullet$ & .5287$\pm$.1040$\bullet$  \\
\textsf{senseveh} & .7977$\pm$.0295 & .7545$\pm$.0406$\bullet$ & .7925$\pm$.0196$\bullet$ & .5685$\pm$.0572$\bullet$ & .5580$\pm$.1089$\bullet$ & .5331$\pm$.0499$\bullet$ & .5973$\pm$.0425$\bullet$ & .5726$\pm$.0688$\bullet$   \\
\textsf{mnist} & .8540$\pm$.0299 & .7947$\pm$.0408$\bullet$ & .8390$\pm$.0380$\bullet$ & .6181$\pm$.0475$\bullet$ & .5369$\pm$.0827$\bullet$ & .6504$\pm$.0386$\bullet$ & .6481$\pm$.0496$\bullet$  & .7234$\pm$.0505$\bullet$  \\
\textsf{fmnist} & .8008$\pm$.0269 & .7592$\pm$.0343$\bullet$ & .7672$\pm$.0382$\bullet$ & .6508$\pm$.0368$\bullet$ & .6090$\pm$.0577$\bullet$ & .5152$\pm$.0534$\bullet$ & .5674$\pm$.0629$\bullet$ & .5499$\pm$.0727$\bullet$ \\
\hline
average & .8840$\pm$.0591 & .8285$\pm$.0916 & .8648$\pm$.0862 & .7468$\pm$.1204 & .6459$\pm$.1042 & .5694$\pm$.0486 & .6378$\pm$.0774 & .5955$\pm$.0822  \\
\hline
\multicolumn{2}{|c}{ win/tie/loss} & \textbf{15/1/1} & \textbf{10/5/2} & \textbf{16/1/0} & \textbf{17/0/0} & \textbf{17/0/0} & \textbf{17/0/0} & \textbf{17/0/0}\\
\hline
\end{tabular}}
\caption{Experimental comparisons of accuracy (mean$\pm$std) for compared methods, and $\bullet$/$\circ$ indicates that our approach is significantly better/worse than the corresponding method (paired $t$-test at $95\%$ significance  level).}
\label{exp:1:accuracy}
\end{table*}

Notice that the optimization objective $\mathcal{L}$ in Eqn.~\eqref{eq:loss} is differentiable w.r.t. the parameters of feature encoder $h$ and neural trees $\mathcal{D}\mathcal{T}_1,\cdots,\mathcal{D}\mathcal{T}_m$, and we could update the entire model with stochastic gradient descent directly.

Given feature encoder $h$ and neural trees $\mathcal{D}\mathcal{T}_1,\cdots\mathcal{D}\mathcal{T}_m$, we could present the prediction for instance $\x\in\X$ as
\[
\mathcal{DF}(\x) =  \underset{k\in [\kappa+1]}{\arg\max} \sum_{i\in[m]} \mathcal{D}\mathcal{T}_{i,k}(\x) \  ,
\]
where $\mathcal{D}\mathcal{T}_{i,k}(\x)$ shows the probability of the $k$-th class w.r.t. neural tree $\mathcal{D}\mathcal{T}_i$, i.e., 
\[
\mathcal{D}\mathcal{T}_{i,k}(\x)=\sum_{j\in[2t]\setminus[t-1]} \mu_{\B_{i,j}}(\x)  \vartheta_{\B_{i,j},k} (S_l,S_u) \ ,
\]
with $\vartheta_{\B_{i,j},k}(S_l,S_u)$ given by Definition~\ref{def:ag-gini:B}.

Algorithm~\ref{alg:DeepLACForest} presents the detailed description of our deep neural LACForest approach. In experiments, we should take relatively large batch sizes for mini-batches $S'_l$ and $S'_u$, since $\G_{\B_{i,j}}(S'_l,S'_u)$ in Eqn.~\eqref{eq:lag:dti} converges to the optimal squared loss in the rate of $O(1/\sqrt{\min\{|S'_l|,|S'_u|\}})$ from Theorem~\ref{thm:ag-gini:b}.

\subsection{Related Works}\label{sec:relatedwork}

\citeauthor{Zhou:Chen2002} \shortcite {Zhou:Chen2002} introduced the problem of \textit{class-incremental learning} via a few labeled augmented instances.  \citeauthor{Fink:Shwartz:Singer:Ullman2006} \shortcite{Fink:Shwartz:Singer:Ullman2006} learned multiple binary classifiers for this problem and \citeauthor{Muhlbaierand:Topalis:Polikar2008}~\shortcite{Muhlbaierand:Topalis:Polikar2008} considered voting classifiers. Recent years have witnessed increasing attention on the design of practical algorithms for this problem \cite{Li:Hoiem2017,Rebuffi:Kolesnikov:Sperl:Lampert2017,Yan:Xie:He2021,Zou:Zhang:Li:li2022,Zhou:Wang:Qi:Ye:Zhan:Liu2024}. These methods can not be applied to our learning scenario directly, since we have no access to any labeled augmented instances.

Another relevant problem is \textit{open-set recognition} in computer vision.  \citeauthor{Scheirer:Rocha:Sapkota:Boult2013}~\shortcite{Scheirer:Rocha:Sapkota:Boult2013} introduced  open space risk to penalize predictions
outside the support of training data. Along this line, various approaches
have been developed based on open space risk \cite{Scheirer:Jain:Boult2014}, extreme value theory \cite{Bendale:Boult2016,Rudd:Jain:Scheirer:Boult2017}, nearest neighbors \cite{Mendes:Medeiros:Oliveira:Stein:Pazinato:Almeida2017} and generative neural networks \cite{Ge:Demyanov:Garnavi2017,Neal:Olson:Fern:Wong:Li2018,Chen:Peng:Wang:Tian2021}.
Those studies are strongly based on some geometric assumptions.

\begin{figure*}[t!]
\includegraphics[width=7in]{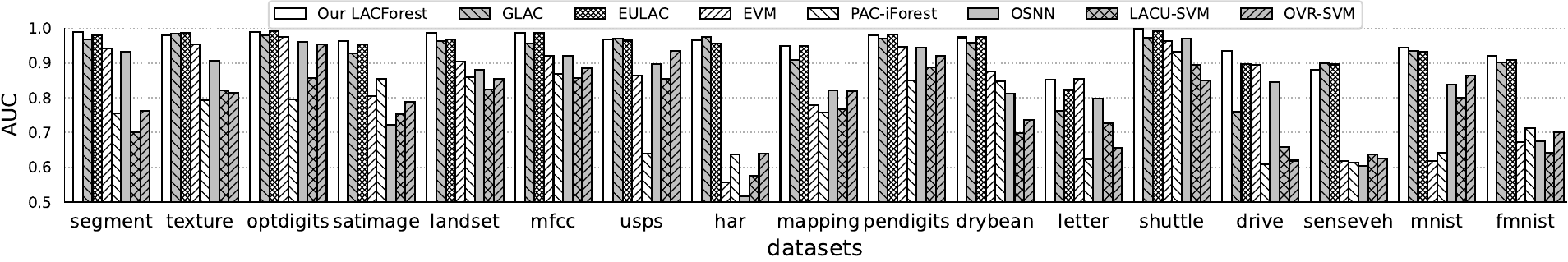}
\caption{Experimental comparisons of AUC on the detection of augmented class.}
\label{pic:comparison:auc}
\end{figure*}

\section{Experiments}\label{sec:exp}
We conduct experiments on 15 benchmark datasets and 5 image datasets, and the details are summarized in Table~\ref{tab:benchmark}. Most datasets have been well-studied in previous works on learning with augmented class.

\subsection{Evaluation of Our LACForest Approach} \label{sec:exp:lac}
For our LACForest approach, 
we compare with seven state-of-the-art approaches: GLAC \cite{Shu:He:Wang:Wei:Xian:Feng2023}, EULAC \cite{Zhang:Zhao:Ma:Zhou2020}, PAC-iForest \cite{Liu:Garrepalli:Dietterich:Fern:Hendrycks2018}, EVM \cite{Rudd:Jain:Scheirer:Boult2017}, OSNN \cite{Mendes:Medeiros:Oliveira:Stein:Pazinato:Almeida2017}, LACU-SVM \cite{Da:Yu:Zhou2014} and OVR-SVM \cite{Rifkin:Klautau2004}. More details on these approaches could be found in Appendix~\ref{sec:exp:details:stat}.

For each dataset, we randomly select half of classes as the augmented class with the rest as known classes, following  \cite{Zhang:Zhao:Ma:Zhou2020}. We then randomly sample 500 examples of known classes as labeled data $S_l$, and 1000 instances as unlabeled data $S_u$ and 100 instances as testing data. We take  $\theta=0.5$ in Eqn.~\eqref{eq:class-shift}, and more experimental settings could be found in Appendix~\ref{sec:exp:details:stat}. The performance is evaluated by 10 trials of  random selections of augmented class, and with 10 times of random data sampling. The average test accuracies are obtained over these 100 runs, as shown in Table~\ref{exp:1:accuracy}.
\begin{table*}[t!]

\centering
\footnotesize
\renewcommand{\arraystretch}{1.1}
\resizebox{1\linewidth}{!}{
\begin{tabular}{|ccccccccc|}
\hline
Datasets & Our approach & Deep-GLAC & Deep-EULAC & ARPL & G-Openmax &  OSRCI &  Openmax & Softmax-T \\
\hline
\textsf{mnist} & \textbf{.9844$\pm$.0021} & .9778$\pm$.0039 & .9596$\pm$.0033 & .9304$\pm$.0203 & .8934$\pm$.0064 & .9114$\pm$.0047& .8876$\pm$.0042 & .8834$\pm$.0029 \\
\textsf{fmnist} & \textbf{.9024$\pm$.0114} & .9010$\pm$.0148 & .8464$\pm$.0085 & .7682$\pm$.0102 & .6820$\pm$.0148 & .6912$\pm$.0070& .6672$\pm$.0139 & .5878$\pm$.0054 \\
\textsf{kuzushiji} & \textbf{.9636$\pm$.0044} & .9516$\pm$.0032 &  .8872$\pm$.0032 & .9002$\pm$.0143 & .8570$\pm$.0055 & .8602$\pm$.0046 & .8426$\pm$.0070 & .8282$\pm$.0069 \\
\textsf{svhn} &\textbf{.9238$\pm$.0134}  & .8926$\pm$.0168 & .8330$\pm$.0087  & .8060$\pm$.0069 & .7868$\pm$.0161 & .7912$\pm$.0072 & .7888$\pm$.0057 & .7252$\pm$.0047 \\
\textsf{cifar10} &\textbf{.8008$\pm$.0331} &  .7840$\pm$.0426&  .7168$\pm$.0231 &.7208$\pm$.0032 & .6560$\pm$.0125 & .6788$\pm$.0084 & .6612$\pm$.0293  & .6350$\pm$.0203 \\
\hline
average & \textbf{.9150$\pm$.0639} &  .9016$\pm$.0669 & .8486$\pm$.0793 & .8251$\pm$.0790 & .7750$\pm$.0933 & .7866$\pm$.0914 & .7695$\pm$.0915 & .7319$\pm$.1117\\
\hline
\end{tabular}}
\caption{Experimental comparisons of accuracy (mean$\pm$std) over 5 image datasets, and the best performance is highlighted in bold.}
\label{tab:exp2:accuracy}
\end{table*}
\begin{figure*}[t!]
\centering
\includegraphics[width=7in]{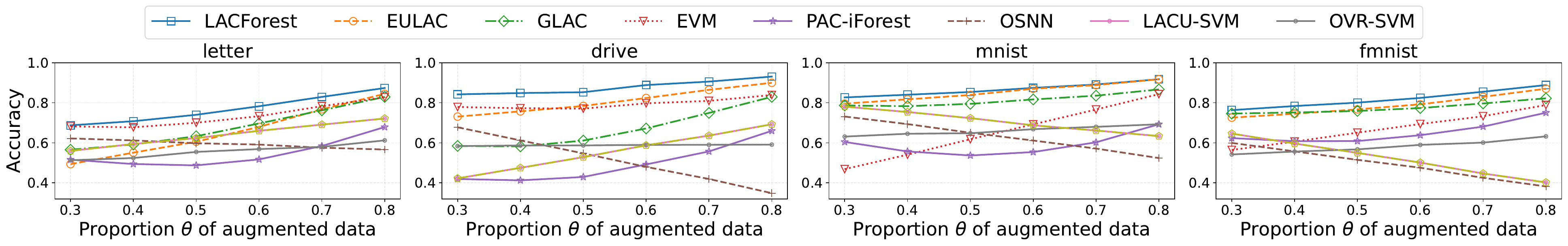}
\caption{Evaluations of LACForest over different proportions of augmented data. The larger the curve, the better the performance.}
\label{pic:comparison:theta}
\end{figure*}
\begin{figure*}[t!]
\centering
\includegraphics[width=7in]{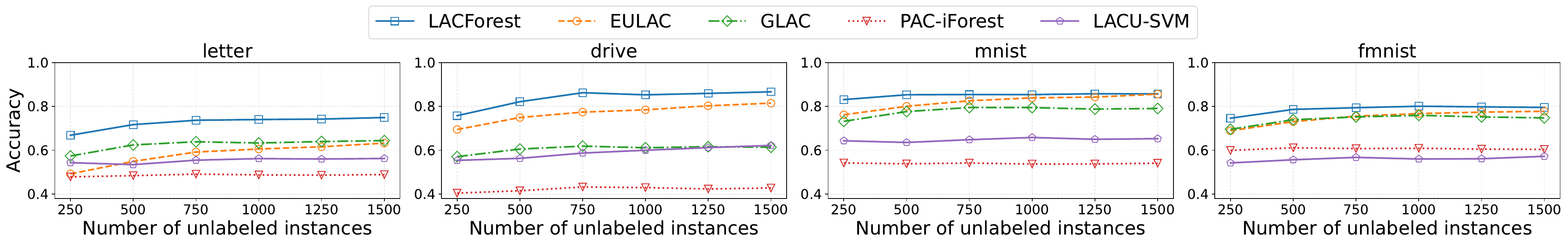}
\caption{Evaluations of LACForest under different sizes of unlabeled data $S_u$. The larger the curve, the better the performance.}\label{pic:comparison:unlabeledsize}
\end{figure*}

It is clear that our LACForest method achieves significantly better performance than previous EVM, OSNN, PAC-iForest and OVR-SVM, as our LACForest wins in most times and never loses. This is because those methods mainly focus on labeled data from known classes, but without exploring information from unlabeled data.

Our LACForest also outperforms LACU-SVM, which is heavily dependent on the low-separation assumption over data. In comparison to GLAC and EULAC, our LACForest achieves better and comparable performance  in most times, except for datasets \textsf{mfcc} and \textsf{drybean}, partially because of class imbalance in these datasets, which makes it difficult to  accurately estimate proportions of augmented class in nodes.

We also take the average AUC to show the performance on the detection of augmented class in Figure~\ref{pic:comparison:auc}. It is obvious that our LACForest takes better and comparable performance on the detection of augmented class over most datasets, since our method could
explore augmented class effectively in each leaf node, as shown by Lemma~\ref{lem:vartheta}.

\begin{figure}[t!]
\centering
\includegraphics[width=3.4in]{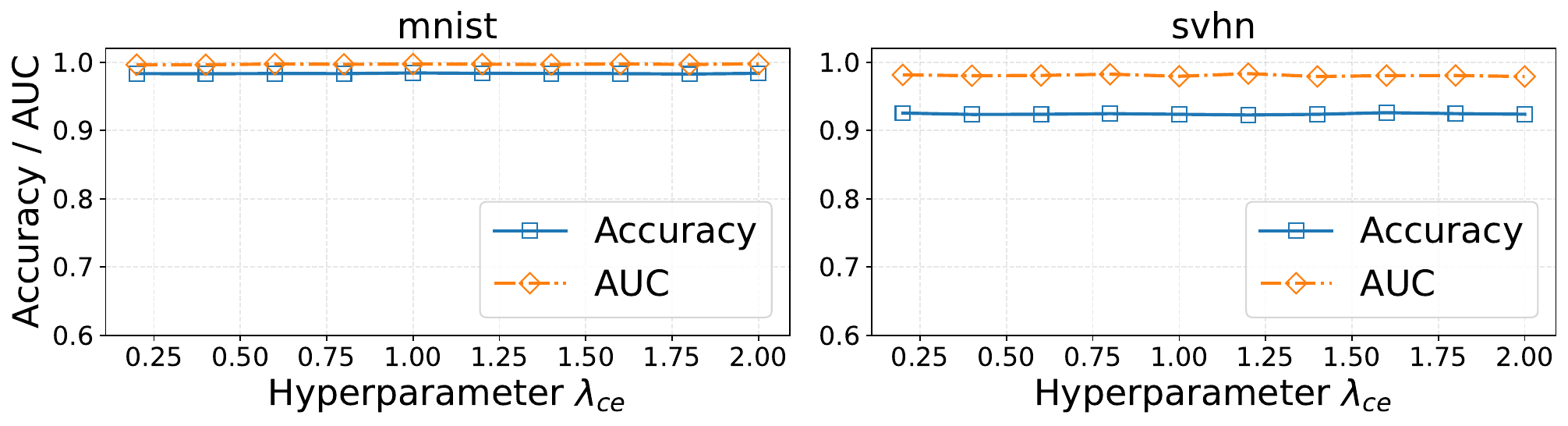}
\caption{Influence of parameter $\lambda_{\text{ce}}$.} \label{fig:lambda}
\end{figure}

\subsection{Evaluation of Our Deep Neural LACForest} \label{sec:exp:deeplac}
For our deep neural LACForest, we compare with seven deep learning methods for augmented class: {Deep-GLAC} \cite{Shu:He:Wang:Wei:Xian:Feng2023}, {ARPL} \cite{Chen:Peng:Wang:Tian2021}, {Deep-EULAC} \cite{Zhang:Zhao:Ma:Zhou2020},  {OSRCI} \cite{Neal:Olson:Fern:Wong:Li2018}, {G-Openmax} \cite{Ge:Demyanov:Garnavi2017}, {Openmax} \cite{Bendale:Boult2016} and {Softmax-T} \cite{Hendrycks:Gimpel2016}. More details on these approaches are presented in Appendix~\ref{sec:exp:details:deep}.

We take a three-layer convolutional neural network as the backbone neural network on  \textsf{mnist}, \textsf{fmnist} and \textsf{kuzushiji}, and consider VGG16 \cite{Simonyan:Zisserman2015} on \textsf{svhn} and \textsf{cifar10}, as done in
\cite{Shu:He:Wang:Wei:Xian:Feng2023}. We randomly select four classes as the augmented class and take the rest as known classes, and set $\theta=0 .4$ similarly to \cite{Shu:He:Wang:Wei:Xian:Feng2023}.

We take average test accuracies over 5 random selections of augmented class as our performance measure, and the experimental comparisons are shown in Table~\ref{tab:exp2:accuracy}. It is clear that our approach takes better performance than ARPL, G-Openmax, OSRCI, Openmax and Softmax-T, due to some additional geometric assumptions over those methods. For Deep-GLAC and Deep-EULAC, our method achieves better and comparable performance in most times, and an intuitive explanation is that our approach could effectively explore augmented class from each local region during tree partitions.

\textbf{Parameter Influence}. 
We analyze the influence of various parameters on several
datasets, and the trends are similar for other datasets.  Figure~\ref{pic:comparison:theta} shows that our LACForest gets better performance under different proportions $\theta\in[0.3,0.8]$ for augmented data. Figure~\ref{pic:comparison:unlabeledsize} shows the performance with different sizes of unlabeled data, where our LACForest achieves better and stable performance with the increase of unlabeled data, in consistency with Theorem~\ref{thm:ag-gini}. For deep neural LACForest, Figure~\ref{fig:lambda} shows that our approach is insensitive to parameter $\lambda_{\text{ce}}$ and generally works well for $\lambda_{\text{ce}}\in[0.2,2]$. Figure~\ref{fig:depth} shows the influence of the depth of neural trees, and our method takes stable results when the tree depth $l\geq 5$.

\begin{figure}[t! ]
\centering
\includegraphics[width=3.4in]{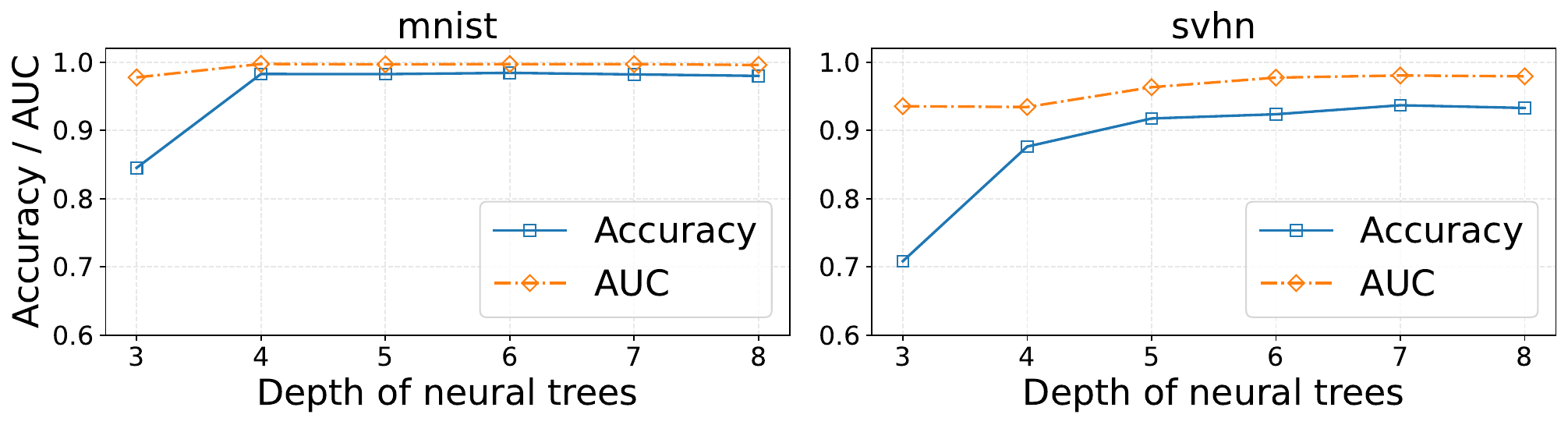}
\caption{Influence of neural trees' depth. } \label{fig:depth}
\end{figure}

\section{Conclusion} \label{sec:con}
This work studies learning with augmented class  via forests, where an augmented class may appear in testing data yet not in training data.  We introduce the augmented Gini impurity as a new splitting criterion by incorporating information of augmented class during tree construction. We develop the approach on \textit{Learning with Augmented Class via Forests}, which constructs shallow forests based on augmented Gini impurity and further splits forests with pseudo-labeled augmented instances. We also explore deep neural forests with a new optimization objective via our augmented Gini impurity. We validate the effectiveness of our methods both empirically and theoretically. An interesting future direction is to exploit our  methods under other learning settings such as streaming datasets and multiple augmented classes.

\section*{Acknowledgments}
The authors want to thank the reviewers for their helpful comments and suggestions. This research was supported by National Key R\&D Program of China (2021ZD0112802) and NSFC (62376119). Wei Gao is the corresponding author.

\bibliographystyle{named}
\bibliography{reference}

\begin{thebibliography}{}

\bibitem[\protect\citeauthoryear{Bendale and Boult}{2016}]{Bendale:Boult2016}
A.~Bendale and T.~E. Boult.
\newblock Towards open set deep networks.
\newblock In {\em Proceedings of the 29th {IEEE/CVF} International Conference
  on Computer Vision and Pattern Recognition}, pages 1563--1572, 2016.

\bibitem[\protect\citeauthoryear{Breiman}{1984}]{Breiman1984}
L.~Breiman.
\newblock {\em Classification and Regression Trees}.
\newblock CRC Press, 1984.

\bibitem[\protect\citeauthoryear{Chen \bgroup \em et al.\egroup
  }{2021}]{Chen:Peng:Wang:Tian2021}
G.~Chen, P.~Peng, X.~Wang, and Y.~Tian.
\newblock Adversarial reciprocal points learning for open set recognition.
\newblock {\em {IEEE} Transactions on Pattern Analysis and Machine
  Intelligence}, 44(11):8065--8081, 2021.

\bibitem[\protect\citeauthoryear{Costa and Pedreira}{2023}]{Costa:Pedreira2023}
V.-G. Costa and C.-E. Pedreira.
\newblock Recent advances in decision trees: An updated survey.
\newblock {\em Artificial Intelligence Review}, 56(5):4765--4800, 2023.

\bibitem[\protect\citeauthoryear{Cutler \bgroup \em et al.\egroup
  }{2007}]{cutler:Edwards:Beard:Cutler:Hess:Gibson:Lawler2007}
D.-R. Cutler, T.-C. {Edwards Jr.}, K.-H. Beard, A.~Cutler, K.-T. Hess,
  J.~Gibson, and J.-J. Lawler.
\newblock Random forests for classification in ecology.
\newblock {\em Ecology}, 88(11):2783--2792, 2007.

\bibitem[\protect\citeauthoryear{Da \bgroup \em et al.\egroup
  }{2014}]{Da:Yu:Zhou2014}
Q.~Da, Y.~Yu, and Z.-H. Zhou.
\newblock Learning with augmented class by exploiting unlabeled data.
\newblock In {\em Proceedings of the 28th {AAAI} Conference on Artificial
  Intelligence}, pages 1760--1766, 2014.

\bibitem[\protect\citeauthoryear{Fink \bgroup \em et al.\egroup
  }{2006}]{Fink:Shwartz:Singer:Ullman2006}
M.~Fink, S.~{Shalev-Shwartz}, Y.~Singer, and S.~Ullman.
\newblock Online multiclass learning by interclass hypothesis sharing.
\newblock In {\em Proceedings of the 23rd International Conference on Machine
  learning}, pages 313--320, 2006.

\bibitem[\protect\citeauthoryear{Gama \bgroup \em et al.\egroup
  }{2014}]{Gama:Zliobait:Blifet:Pechenizkiy:Bouchachia2014}
J.~Gama, I.~{\v{Z}}liobait{\.e}, A.~Bifet, M.~Pechenizkiy, and A.~Bouchachia.
\newblock A survey on concept drift adaptation.
\newblock {\em {ACM} Computing Surveys}, 46(4):1--37, 2014.

\bibitem[\protect\citeauthoryear{Ge \bgroup \em et al.\egroup
  }{2017}]{Ge:Demyanov:Garnavi2017}
Z.~Ge, S.~Demyanov, and R.~Garnavi.
\newblock Generative openmax for multi-class open set classification.
\newblock In {\em Proceedings of the 28th British Machine Vision Conference},
  pages 42.1--42.12, 2017.

\bibitem[\protect\citeauthoryear{Geng \bgroup \em et al.\egroup
  }{2021}]{Geng:Huang:Chen2021}
C.~Geng, S.-J. Huang, and S.~Chen.
\newblock Recent advances in open set recognition: A survey.
\newblock {\em IEEE Transactions on Pattern Analysis and Machine Intelligence},
  43(10):3614--3631, 2021.

\bibitem[\protect\citeauthoryear{Grinsztajn \bgroup \em et al.\egroup
  }{2022}]{Grinsztajn:Oyallon:Varoquaux2022}
L.~Grinsztajn, E.~Oyallon, and G.~Varoquaux.
\newblock Why do tree-based models still outperform deep learning on typical
  tabular data?
\newblock In {\em Advances in Neural Information Processing Systems 35}, pages
  507--520, 2022.

\bibitem[\protect\citeauthoryear{Hastie \bgroup \em et al.\egroup
  }{2009}]{Hastie:Tibshirani:Friedman2009}
T.~Hastie, R.~Tibshirani, and J.~H. Friedman.
\newblock {\em The Elements of Statistical Learning: Data Mining, Inference,
  and Prediction}.
\newblock Springer, 2009.

\bibitem[\protect\citeauthoryear{Hendrycks and
  Gimpel}{2016}]{Hendrycks:Gimpel2016}
D.~Hendrycks and K.~Gimpel.
\newblock A baseline for detecting misclassified and out-of-distribution
  examples in neural networks.
\newblock {\em CoRR/Abstract}, 2001.04295, 2016.

\bibitem[\protect\citeauthoryear{Ji \bgroup \em et al.\egroup
  }{2020}]{Ji:Wen:Zhang:Du:Wu:Zhao:Liu:Huang2020}
R.~Ji, L.~Wen, L.~Zhang, D.~Du, Y.~Wu, C.~Zhao, X.~Liu, and F.~Huang.
\newblock Attention convolutional binary neural tree for fine-grained visual
  categorization.
\newblock In {\em Proceedings of the 33rd {IEEE/CVF} Conference on Computer
  Vision and Pattern Recognition}, pages 10468--10477, 2020.

\bibitem[\protect\citeauthoryear{Kontschieder \bgroup \em et al.\egroup
  }{2015}]{Kontschieder:Fiterau:Criminisi:Bulo2015}
P.~Kontschieder, M.~Fiterau, A.~Criminisi, and S.~R. Bulo.
\newblock Deep neural decision forests.
\newblock In {\em Proceedings of the 15th {IEEE/CVF} International Conference
  on Computer Vision}, pages 1467--1475, 2015.

\bibitem[\protect\citeauthoryear{Li and Hoiem}{2017}]{Li:Hoiem2017}
Z.~Li and D.~Hoiem.
\newblock Learning without forgetting.
\newblock {\em IEEE Transactions on Pattern Analysis and Machine Intelligence},
  40(12):2935--2947, 2017.

\bibitem[\protect\citeauthoryear{Li \bgroup \em et al.\egroup
  }{2024}]{Li:Cai:Yan2024}
J.~Li, R.~Cai, and Y.~Yan.
\newblock Combinatorial routing for neural trees.
\newblock In {\em Proceedings of the 33rd International Joint Conference on
  Artificial Intelligence}, pages 4407--4415, 2024.

\bibitem[\protect\citeauthoryear{Liu \bgroup \em et al.\egroup
  }{2002}]{Liu:Lee:Yu:Li2002}
B.~Liu, W.-S. Lee, P.~S. Yu, and X.~Li.
\newblock Partially supervised classification of text documents.
\newblock In {\em Proceedings of 19th International Conference on Machine
  Learning}, pages 387--394, 2002.

\bibitem[\protect\citeauthoryear{Liu \bgroup \em et al.\egroup
  }{2018}]{Liu:Garrepalli:Dietterich:Fern:Hendrycks2018}
S.~Liu, R.~Garrepalli, T.~Dietterich, A.~Fern, and D.~Hendrycks.
\newblock Open category detection with {PAC} guarantees.
\newblock In {\em Proceedings of the 35th International Conference on Machine
  Learning}, pages 3169--3178, 2018.

\bibitem[\protect\citeauthoryear{{Mendes J{\'{u}}nior} \bgroup \em et
  al.\egroup }{2017}]{Mendes:Medeiros:Oliveira:Stein:Pazinato:Almeida2017}
P.~R. {Mendes J{\'{u}}nior}, R.~Medeiros de~Souza, R.~de~Oliveira~Werneck,
  B.~V. Stein, D.~V. Pazinato, W.~R. de~Almeida, Ot{\'{a}}vio~A.~B. Penatti,
  R.~da~Silva~Torres, and A.~Rocha.
\newblock Nearest neighbors distance ratio open-set classifier.
\newblock {\em Machine Learning}, 106(3):359--386, 2017.

\bibitem[\protect\citeauthoryear{Mu \bgroup \em et al.\egroup
  }{2017}]{Mu:Ting:Zhou2017}
X.~Mu, K.-M. Ting, and Z.-H. Zhou.
\newblock Classification under streaming emerging new classes: A solution using
  completely-random trees.
\newblock {\em IEEE Transactions on Knowledge and Data Engineering},
  29(8):1605--1618, 2017.

\bibitem[\protect\citeauthoryear{Neal \bgroup \em et al.\egroup
  }{2018}]{Neal:Olson:Fern:Wong:Li2018}
L.~Neal, M.~Olson, X.~Fern, W.-K. Wong, and F.~Li.
\newblock Open set learning with counterfactual images.
\newblock In {\em Proceedings of the 15th European Conference on Computer
  Vision}, pages 613--628, 2018.

\bibitem[\protect\citeauthoryear{Qi}{2012}]{Qi2012}
Y.~Qi.
\newblock Random forest for bioinformatics.
\newblock In {\em Ensemble Machine Learning}, pages 307--323. Springer, 2012.

\bibitem[\protect\citeauthoryear{Quinlan}{1986}]{Quinlan1986}
J.~R. Quinlan.
\newblock Induction of decision trees.
\newblock {\em Machine learning}, 1:81--106, 1986.

\bibitem[\protect\citeauthoryear{Ramaswamy \bgroup \em et al.\egroup
  }{2016}]{Ramaswamy:Scott:Tewari2016}
H.~Ramaswamy, C.~Scott, and A.~Tewari.
\newblock Mixture proportion estimation via kernel embeddings of distributions.
\newblock In {\em Proceedings of the 33rd International Conference on Machine
  Learning}, pages 2052--2060, 2016.

\bibitem[\protect\citeauthoryear{Rebuffi \bgroup \em et al.\egroup
  }{2017}]{Rebuffi:Kolesnikov:Sperl:Lampert2017}
S.-A. Rebuffi, A.~Kolesnikov, G.~Sperl, and C.~H. Lampert.
\newblock icarl: Incremental classifier and representation learning.
\newblock In {\em Proceedings of the 30th {IEEE/CVF} Conference on Computer
  Vision and Pattern Recognition}, pages 2001--2010, 2017.

\bibitem[\protect\citeauthoryear{Rifkin and Klautau}{2004}]{Rifkin:Klautau2004}
R.~Rifkin and A.~Klautau.
\newblock In defense of one-vs-all classification.
\newblock {\em The Journal of Machine Learning Research}, 5:101--141, 2004.

\bibitem[\protect\citeauthoryear{Rudd \bgroup \em et al.\egroup
  }{2017}]{Rudd:Jain:Scheirer:Boult2017}
E.~M. Rudd, L.~P. Jain, W.~J. Scheirer, and T.~E. Boult.
\newblock The extreme value machine.
\newblock {\em IEEE Transactions on Pattern Analysis and Machine Intelligence},
  40(3):762--768, 2017.

\bibitem[\protect\citeauthoryear{Scheirer \bgroup \em et al.\egroup
  }{2013}]{Scheirer:Rocha:Sapkota:Boult2013}
W.-J. Scheirer, A.~Rocha, A.~Sapkota, and T.-E. Boult.
\newblock Toward open set recognition.
\newblock {\em {IEEE} Transactions on Pattern Analysis and Machine
  Intelligence}, 35(7):1757--1772, 2013.

\bibitem[\protect\citeauthoryear{Scheirer \bgroup \em et al.\egroup
  }{2014}]{Scheirer:Jain:Boult2014}
W.-J. Scheirer, L.-P. Jain, and T.-E. Boult.
\newblock Probability models for open set recognition.
\newblock {\em {IEEE} Transactions on Pattern Analysis and Machine
  Intelligence}, 36(11):2317--2324, 2014.

\bibitem[\protect\citeauthoryear{Shu \bgroup \em et al.\egroup
  }{2023}]{Shu:He:Wang:Wei:Xian:Feng2023}
S.~Shu, S.~He, H.~Wang, H.~Wei, T.~Xian, and L.~Feng.
\newblock A generalized unbiased risk estimator for learning with augmented
  classes.
\newblock In {\em Proceedings of the 37th {AAAI} Conference on Artificial
  Intelligence}, pages 9829--9836, 2023.

\bibitem[\protect\citeauthoryear{Simonyan and
  Zisserman}{2015}]{Simonyan:Zisserman2015}
K.~Simonyan and A.~Zisserman.
\newblock Very deep convolutional networks for large-scale image recognition.
\newblock In {\em Proceedings of the 3rd International Conference on Learning
  Representations}, 2015.

\bibitem[\protect\citeauthoryear{Tanha \bgroup \em et al.\egroup
  }{2017}]{Tanha:Van:Afsarmanesh2017}
J.~Tanha, M.~{Van Someren}, and H.~Afsarmanesh.
\newblock Semi-supervised self-training for decision tree classifiers.
\newblock {\em International Journal of Machine Learning and Cybernetics},
  8:355--370, 2017.

\bibitem[\protect\citeauthoryear{Tanno \bgroup \em et al.\egroup
  }{2019}]{Tanno:Arulkumaran:Alexander:Criminisi:Nori2019}
R.~Tanno, K.~Arulkumaran, D.~Alexander, A.~Criminisi, and A.~Nori.
\newblock Adaptive neural trees.
\newblock In {\em Proceedings of the 36th International Conference on Machine
  Learning}, pages 6166--6175, 2019.

\bibitem[\protect\citeauthoryear{Topalis and
  Polikar}{2008}]{Muhlbaierand:Topalis:Polikar2008}
M.~D. Muhlbaierand~A. Topalis and R.~Polikar.
\newblock Learn++.{NC}: Combining ensemble of classifiers with dynamically
  weighted consult-and-vote for efficient incremental learning of new classes.
\newblock {\em IEEE Transactions on Neural Networks}, 20(1):152--168, 2008.

\bibitem[\protect\citeauthoryear{Wang \bgroup \em et al.\egroup
  }{2024}]{Wang:Zhang:Su:Zhu2024}
L.~Wang, X.~Zhang, H.~Su, and J.~Zhu.
\newblock A comprehensive survey of continual learning: Theory, method and
  application.
\newblock {\em IEEE Transactions on Pattern Analysis and Machine Intelligence},
  46(8):5362--5383, 2024.

\bibitem[\protect\citeauthoryear{Yan \bgroup \em et al.\egroup
  }{2021}]{Yan:Xie:He2021}
S.~Yan, J.~Xie, and X.~He.
\newblock Der: Dynamically expandable representation for class incremental
  learning.
\newblock In {\em Proceedings of the 34th {IEEE/CVF} Conference on Computer
  Vision and Pattern Recognition}, pages 3014--3023, 2021.

\bibitem[\protect\citeauthoryear{Zhang \bgroup \em et al.\egroup
  }{2020}]{Zhang:Zhao:Ma:Zhou2020}
Y.-J. Zhang, P.~Zhao, L.~Ma, and Z.-H. Zhou.
\newblock An unbiased risk estimator for learning with augmented classes.
\newblock In {\em Advances in Neural Information Processing Systems 33}, pages
  10247--10258, 2020.

\bibitem[\protect\citeauthoryear{Zhou and Chen}{2002}]{Zhou:Chen2002}
Z.-H. Zhou and Z.-Q. Chen.
\newblock Hybrid decision tree.
\newblock {\em Knowledge-Based Systems}, 15(8):515--528, 2002.

\bibitem[\protect\citeauthoryear{Zhou \bgroup \em et al.\egroup
  }{2024}]{Zhou:Wang:Qi:Ye:Zhan:Liu2024}
D.-W. Zhou, Q.-W. Wang, Z.-H. Qi, H.-J. Ye, D.-C. Zhan, and Z.~Liu.
\newblock Class-incremental learning: A survey.
\newblock {\em IEEE Transactions on Pattern Analysis and Machine Intelligence},
  46(12):9851--9873, 2024.

\bibitem[\protect\citeauthoryear{Zhou}{2022}]{Zhou2022}
Z.-H. Zhou.
\newblock Open-environment machine learning.
\newblock {\em National Science Review}, 9(8):nwac123, 2022.

\bibitem[\protect\citeauthoryear{Zou \bgroup \em et al.\egroup
  }{2022}]{Zou:Zhang:Li:li2022}
Y.~Zou, S.~Zhang, Y.~Li, and R.~Li.
\newblock Margin-based few-shot class-incremental learning with class-level
  overfitting mitigation.
\newblock In {\em Advances in Neural Information Processing Systems 35}, pages
  27267--27279, 2022.

\end{thebibliography}

\begin{appendix}
\onecolumn
\section{Detailed Theoretical Proofs} \label{sec:proof}
\subsection{Proof of Lemma~\ref{lem:vartheta}}
\begin{proof}
Recall that $S_{\C,l} = \{(\x,y)\colon (\x,y)\in S_l \text{ and } \x\in \C \}$, $S_{\C,u} =S_u\cap\C$, $n_{\C,l}=|S_{\C,l}|$ and $n_{\C,u}=|S_{\C,u}|$.
According to the Hoeffding's inequality, we have, for some $\delta_1\in(0,\frac{1}{3})$
\begin{equation} \label{lem:vartheta:eq1}
\left|\frac{n_{C,l}}{n_l} - \Pr_{(\x,y)\sim \D_\text{kc}}[\x \in C] \right|\leq \sqrt{\frac{\ln(2/\delta_1)}{2n_l}} \ ,
\end{equation}
with probability at least $1-\delta_1$. For simplicity, we introduce $n'_{\C,u}$ as follows:
\begin{equation} \label{lem:vartheta:eq:temp1}
n'_{\C,u} = \sum_{\x\in S_u} \mathbb{I}[\x\in \C] \Pr_{(\x',y')\sim \D}\left[y'\in[\kappa]|\x'=\x\right] \ .
\end{equation}
Let $f_\X(\x)$ and $f_{\X,\Y}(\x,y)$ be the probability density functions of distribution $\D_\X$ and $\D$, respectively,  and we have
\begin{eqnarray*}
&&\underset{\x\sim\D_\X}{E}\left[\mathbb{I}[\x\in\C]\Pr_{(\x',y')\sim \D}\left[y'\in[\kappa]|\x'=\x\right]\right]  \\
&&=~\int_{\x\sim\D_\X}\mathbb{I}[\x\in\C]\Pr_{(\x',y')\sim \D}\left[y'\in[\kappa]|\x'=\x\right] f_\X(\x) d\x  \\
&&=~\int_{\x\sim\D_\X}\mathbb{I}[\x\in\C]\sum_{k\in[\kappa]}f_{\X,\Y}(\x,k) d\x  \ ,
\end{eqnarray*}
and this follows that, from that $\D_\X$ is the marginal distribution of $\D$ over $\X$
\begin{eqnarray}
\nonumber&& \underset{\x\sim\D_\X}{E}\left[\mathbb{I}[\x\in\C]\Pr_{(\x',y')\sim \D}\left[y'\in[\kappa]|\x'=\x\right]\right] \\
\nonumber&&=~\int_{(\x,y)\sim\D}\mathbb{I}[y\in[\kappa],\x\in\C]f_{\X,\Y}(\x,y) d\x dy \\
&&=~\Pr_{(\x,y)\sim \D}\left[ y\in[\kappa],\x\in\C\right] \ .\label{lem:vartheta:eq:temp2}
\end{eqnarray}
From Eqns.~\eqref{lem:vartheta:eq:temp1}-\eqref{lem:vartheta:eq:temp2} and the Hoeffding's inequality, we have, for some $\delta_2\in(0,\frac{1}{3})$
\begin{eqnarray}
&&\nonumber\left|\frac{n'_{\C,u}}{n_u} - \underset{\x\sim\D_\X}{E}\left[\mathbb{I}[\x\in\C]\Pr_{(\x',y')\sim \D}\left[y'\in[\kappa]|\x'=\x\right]\right]\right| \\
&&=~\left|\frac{n'_{\C,u}}{n_u} - \Pr_{(\x,y)\sim \D}\left[y\in[\kappa],\x \in \C\right]\right|\leq \sqrt{\frac{\ln(2/\delta_2)}{2 n_u}} \ , \label{lem:vartheta:eq1.5}
\end{eqnarray}
with probability at least $1-\delta_2$. From Eqn.~\eqref{eq:class-shift}, we have
\begin{eqnarray*}
\Pr_{(\x,y)\sim \D}\left[y\in[\kappa],\x \in \C\right] = \Pr_{(\x,y)\sim \D}\left[ y\in \left[\kappa\right]\right] \Pr_{(\x,y)\sim \D}\left[\x \in \C| y\in[\kappa]\right]   =(1-\theta) \Pr_{(\x,y)\sim \D_\text{kc}}\left[\x \in \C\right] \ ,
\end{eqnarray*}
and it follows that, from Eqn.~\eqref{lem:vartheta:eq1.5}
\begin{equation}\label{lem:vartheta:eq2}
\left|\frac{n'_{\C,u}}{(1-\theta) n_u} - \Pr_{(\x,y)\sim \D_\text{kc}}[\x \in \C]\right|\leq \frac{1}{1-\theta}\sqrt{\frac{\ln(2/\delta_2)}{2n_u}} \ ,
\end{equation}
with probability at least $1-\delta_2$. Combining Eqns.~\eqref{lem:vartheta:eq1} and \eqref{lem:vartheta:eq2}, we have
\begin{equation}\label{lem:vartheta:eq3}
\left|\frac{n_{\C,l}}{n_l} - \frac{n'_{\C,u}}{(1-\theta) n_u}\right| \leq \sqrt{\frac{\ln(2/\delta_1)}{2n_l}} + \frac{1}{1-\theta}\sqrt{\frac{\ln(2/\delta_2)}{2n_u}} \ ,
\end{equation}
with probability at least $1-\delta_1-\delta_2$. From Eqn.~\eqref{lem:vartheta:eq:temp1} and the Hoeffding's inequality, we have
\begin{eqnarray}
&&\nonumber \left|\left(1-\frac{ n'_{\C,u}}{n_{\C,u}}\right) - \Pr_{(\x,y)\sim\D}[y=\kappa+1|\x\in \C] \right| \\
&&=~\left|\frac{ n'_{\C,u}}{n_{\C,u}} - \Pr_{(\x,y)\sim\D}[y\in[\kappa]|\x\in \C] \right|\leq \sqrt{\frac{\ln(2/\delta_3)}{2n_{\C,u}}} \ ,\label{lem:vartheta:eq4}
\end{eqnarray}
with probability at least $1-\delta_3$ with some $\delta_3\in(0,\frac{1}{3})$. From $n_{\C,u}\geq \gamma n_u>0$ and Definition~\ref{def:ag-gini}, we have
\[
\vartheta_{C,\kappa+1}(S_l,S_u) = \mathbb{I}[n_{\C,u}>0] \left(1-\frac{\left(1-\theta\right)n_u n_{\C,l}  }{n_l n_{\C,u} }\right)_+ =\left(1-\frac{\left(1-\theta\right)n_u n_{\C,l}  }{n_l n_{\C,u} }\right)_+\ .
\]
For simplicity, we denote by $\theta_\C=\Pr_{(\x,y)\sim\D}[y=\kappa+1|\x\in \C]$, and this follows that
\begin{eqnarray*}
&& \left|\vartheta_{\C,\kappa+1}(S_l,S_u) - \theta_\C\right| \\
&&\leq~\left|\left(1-\frac{\left(1-\theta\right) n_u n_{\C,l}}{n_ln_{\C,u}}\right) - \theta_{\C}\right|  \\
&&\leq~\left|\left(1-\frac{\left(1-\theta\right) n_u n_{\C,l}}{n_ln_{\C,u}}\right)-\left(1-\frac{ n'_{\C,u}}{n_{\C,u}}\right)\right| + \left|\left(1-\frac{ n'_{\C,u}}{n_{\C,u}}\right) - \theta_\C \right| \\
&&\leq~\left|\frac{\left(1-\theta\right) n_u n_{\C,l}}{n_ln_{\C,u}}-\frac{ n'_{\C,u}}{n_{\C,u}}\right| + \left|\left(1-\frac{ n'_{\C,u}}{n_{\C,u}}\right) - \theta_\C \right|\\
&&\leq~\frac{(1-\theta) n_u}{n_{\C,u}}\left|\frac{n_{\C,l}}{n_l}-\frac{n'_{\C,u}}{(1-\theta) n_u}\right| + \left|\left(1-\frac{ n'_{\C,u}}{n_{\C,u}}\right) - \theta_\C \right| \ ,
\end{eqnarray*}
and we further have, from  Eqns.~\eqref{lem:vartheta:eq3}-\eqref{lem:vartheta:eq4}  and $n_{\C,u}\geq \gamma n_u$
\begin{eqnarray*}
&&\left|\vartheta_{\C,\kappa+1}(S_l,S_u) - \theta_\C\right| \\
&&\leq~\frac{\left(1-\theta\right) n_u}{n_{\C,u}}(\sqrt{\frac{\ln(2/\delta_1)}{2n_l}} + \frac{1}{1-\theta}\sqrt{\frac{\ln(2/\delta_2)}{2 n_u}}) + \sqrt{\frac{\ln(2/\delta_3)}{2n_{\C,u}}} \\
&&\leq~\frac{1}{\gamma}\sqrt{\frac{\ln(2/\delta_1)}{2n_l}} + \frac{1}{\gamma}\sqrt{\frac{\ln(2/\delta_2)}{2n_u}} + \sqrt{\frac{\ln(2/\delta_3)}{2\gamma n_{u}}} \ ,
\end{eqnarray*}
with probability at least $1-\delta_1-\delta_2-\delta_3$, which completes the proof by setting $\delta_1=\delta_2=\delta_3=\delta/3$.
\end{proof}

\subsection{Proof of Theorem~\ref{thm:ag-gini}} \label{sec:proof:thm1}
We begin with some useful lemmas as follows.
\begin{lemma} \label{lem:basic}
For real numbers $a,b,c,d$ with $a,d\in[0,1]$, we have
\[
\left|ab-cd\right|\leq\left|a-c\right|+\left|b-d\right| \ .
\]
\begin{proof}
From the triangle inequality, we have
\[
\left|ab-cd\right|\leq \left|ab-ad\right| + \left|ad-cd\right| = a\left|b-d\right| + d\left|a-c\right| \ ,
\]
which completes the proof from $a,d\in[0,1]$.
\end{proof}
\end{lemma}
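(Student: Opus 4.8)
The final statement is the elementary Lemma~\ref{lem:basic}: for real numbers $a,b,c,d$ with $a,d\in[0,1]$, we have $|ab-cd|\leq|a-c|+|b-d|$. The plan is to prove this by a single application of the triangle inequality after inserting a cross term. Specifically, I would add and subtract the quantity $ad$ inside the absolute value, writing $ab-cd = (ab-ad)+(ad-cd)$, so that the first group factors as $a(b-d)$ and the second as $d(a-c)$.

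Next, I would apply the triangle inequality to obtain $|ab-cd|\leq|ab-ad|+|ad-cd| = a|b-d| + d|a-c|$, where in the last equality I use that $|a(b-d)|=|a|\,|b-d|=a|b-d|$ since $a\in[0,1]$ is nonnegative, and similarly $|d(a-c)|=d|a-c|$. At this stage the bound reads $a|b-d|+d|a-c|$, which is a weighted version of the target right-hand side.

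The final step is simply to discard the weights: since $a\in[0,1]$ we have $a|b-d|\leq|b-d|$, and since $d\in[0,1]$ we have $d|a-c|\leq|a-c|$. Summing these two inequalities yields $a|b-d|+d|a-c|\leq|a-c|+|b-d|$, which completes the proof.

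There is no genuine obstacle here; the only mild subtlety is the choice of which cross term to insert. Inserting $ad$ (rather than, say, $cb$) is what lets the two nonnegativity hypotheses $a\in[0,1]$ and $d\in[0,1]$ do exactly the work needed, since after factoring, $a$ multiplies $|b-d|$ and $d$ multiplies $|a-c|$, and both coefficients are precisely the quantities constrained to lie in $[0,1]$. Any other bracketing would require a hypothesis on $b$ or $c$ that is not available.
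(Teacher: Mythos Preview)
Your proposal is correct and follows exactly the same approach as the paper: insert the cross term $ad$, apply the triangle inequality to get $a|b-d|+d|a-c|$, and then use $a,d\in[0,1]$ to drop the coefficients. The paper's proof is just a terser version of what you wrote.
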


\begin{lemma} \label{lem:thm:ag-gini}
For any instance space $\C\subseteq \X$ and $\L^*_\C$ defined by Eqn.~\eqref{eq:r*},  we have
\[
\L^*_C = 1 - \sum_{k\in [\kappa+1]} \left(\Pr_{(\x,y)\sim\D}[y=k|\x\in \C]\right)^2\ .
\]
\end{lemma}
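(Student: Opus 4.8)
The statement to prove is Lemma~\ref{lem:thm:ag-gini}: that the optimal constant squared-loss predictor over $\C$ has risk $\L^*_\C = 1 - \sum_{k\in[\kappa+1]} p_k^2$, where $p_k = \Pr_{(\x,y)\sim\D}[y=k\mid \x\in\C]$. The plan is a direct computation: expand the objective in Eqn.~\eqref{eq:r*}, solve the resulting convex minimization over the simplex $\Delta_{\kappa+1}$ explicitly, and simplify. First I would fix $\w\in\Delta_{\kappa+1}$ and write $E_{(\x,y)\sim\D}[\|\w - \tilde{\y}\|_2^2 \mid \x\in\C]$. Since $\tilde{\y}$ is the one-hot encoding of $y$, conditioning on $\x\in\C$ we have $\tilde{\y} = \e_k$ with probability $p_k$, so the conditional expectation equals $\sum_{k\in[\kappa+1]} p_k \|\w - \e_k\|_2^2$.

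Next I would expand the norm coordinatewise. Writing $\w = (w_1,\ldots,w_{\kappa+1})$, we have $\|\w - \e_k\|_2^2 = \sum_{j} w_j^2 - 2 w_k + 1 = \|\w\|_2^2 - 2 w_k + 1$. Hence
\[
\sum_{k\in[\kappa+1]} p_k \|\w - \e_k\|_2^2 = \|\w\|_2^2 - 2\sum_{k} p_k w_k + 1,
\]
using $\sum_k p_k = 1$. This is a strictly convex quadratic in $\w$, and the constrained minimum over the affine hull $\{\sum_j w_j = 1\}$ is attained at $\w = \p = (p_1,\ldots,p_{\kappa+1})$: indeed $\|\w\|_2^2 - 2\langle \p,\w\rangle = \|\w - \p\|_2^2 - \|\p\|_2^2$, which is minimized at $\w=\p$, and this point lies in $\Delta_{\kappa+1}$ since $\p$ is itself a probability vector (all coordinates nonnegative, summing to one). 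Substituting back gives the minimum value $-\|\p\|_2^2 + 1 = 1 - \sum_{k\in[\kappa+1]} p_k^2$, which is exactly the claimed expression for $\L^*_\C$.

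There is essentially no obstacle here — the only point requiring a word of care is the feasibility of the unconstrained minimizer $\p$: one must observe that because $\p$ is already a valid distribution, the minimizer of the quadratic over all of $\mathbb{R}^{\kappa+1}$ after imposing only $\sum_j w_j = 1$ coincides with the minimizer over the full simplex $\Delta_{\kappa+1}$, so the nonnegativity constraints are inactive and do not need Lagrangian/KKT machinery. (Alternatively, one can dispense with even the sum-to-one observation by noting $\|\w-\p\|_2^2 \ge 0$ with equality at $\w=\p \in \Delta_{\kappa+1}$.) This lemma is the bridge that lets Theorem~\ref{thm:ag-gini} compare $\G_\C(S_l,S_u)$ against $\L^*_\C$: it re-expresses $\L^*_\C$ in exactly the Gini-impurity form $1 - \sum_k p_k^2$, so that bounding $|\L^*_\C - \G_\C(S_l,S_u)|$ reduces to controlling how well the empirical quantities $\vartheta_{\C,k}(S_l,S_u)$ approximate the true conditional probabilities $p_k$, which is handled via Lemma~\ref{lem:vartheta} and Lemma~\ref{lem:basic}.
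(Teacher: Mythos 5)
Your proposal is correct and follows essentially the same route as the paper: both expand the conditional squared loss, complete the square to obtain $1+\|\w-\p\|_2^2-\|\p\|_2^2$, and note that the minimizer $\w=\p$ is feasible since the conditional class-probability vector lies in $\Delta_{\kappa+1}$. The paper phrases this as a matched pair of lower and upper bounds (the inequality for arbitrary $\w$, then plugging in $w'_k=\Pr[y=k\mid\x\in\C]$), but the underlying computation is identical to yours.
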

\begin{proof}
For any $\w\in\Delta_{\kappa+1}$, we have, by some simple calculations
\begin{eqnarray}
&&\nonumber E_{(\x,y)\sim\D}\left[\mathcal{L}(\w,\tilde{\bm{y}})|\x\in \C\right] \\ \nonumber
&&=~\sum_{k\in [\kappa+1]} \Pr_{(\x,y)\sim\D}[y=k|\x\in\C]\left(\left(1-w_{k}\right)^2 - w^2_{k} + \sum_{t\in[\kappa+1]} w^2_{t} \right) \\
\nonumber &&=~1 + \sum_{k\in [\kappa+1]} \left(w_k -\Pr_{(\x,y)\sim\D}[y=k|\x\in \C]\right)^2-\sum_{k\in [\kappa+1]} \left(\Pr_{(\x,y)\sim\D}[y=k|\x\in \C]\right)^2 \ , \\
\nonumber &&\geq~1 - \sum_{k\in [\kappa+1]} \left(\Pr_{(\x,y)\sim\D}[y=k|\x\in \C]\right)^2 \ ,
\end{eqnarray}
and this follows that, from $\L^*_C=\min\limits_{\w^*\in \Delta_{\kappa+1}}E_{(\x,y)\sim\D}\left[\mathcal{L}(\w,\tilde{\bm{y}})|\x\in \C\right] $
\begin{equation}\label{lem:thm:ag-gini:eq1}
\L^*_C\geq1 - \sum_{k\in [\kappa+1]} \left(\Pr_{(\x,y)\sim\D}[y=k|\x\in \C]\right)^2 \ .
\end{equation}
Let $w'_k=\Pr_{(\x,y)\sim\D}[y=k|\x\in C]$ for $k\in [\kappa+1]$, and we have
\[
E_{(\x,y)\sim\D}\left[\mathcal{L}(\w',\tilde{\bm{y}})|\x\in C\right] = 1 - \sum_{k\in [\kappa+1]} \left(\Pr_{(\x,y)\sim\D}[y=k|\x\in C]\right)^2 \geq \L^*_C \ ,
\]
which  completes the proof from Eqn.~\eqref{lem:thm:ag-gini:eq1}.
\end{proof}

\begin{lemma}\label{lem2:thm:ag-gini}
For instance space $\C\subseteq\X$ and $k\in[\kappa]$, if $n_{\C,u}/n_u\geq \gamma$ and $n_{\C,l}/n_l\geq \gamma$ for some constant $\gamma\in(0,1)$, then we have, with probability at least $1-\delta_1-\delta_2$ for some $\delta_1,\delta_2>0$ with $\delta_1+\delta_2<1$
\[
\left|\vartheta_{\C,k}(S_l,S_u)-\Pr_{(\x,y)\in\D}\left[y=k|\x\in \C\right]\right|\leq \sqrt{\frac{\ln(6/\delta_1)}{2\gamma n_{u}}} + \frac{1}{\gamma}\sqrt{\frac{\ln(6/\delta_1)}{2n_l}} + \frac{1}{\gamma}\sqrt{\frac{\ln(6/\delta_1)}{2n_u}} + \sqrt{\frac{\ln(2/\delta_2)}{2 \gamma n_{l}}}   \ .
\]
\end{lemma}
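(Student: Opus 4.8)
The plan is to factor both $\vartheta_{\C,k}(S_l,S_u)$ and the target probability $\Pr_{(\x,y)\sim\D}[y=k\mid\x\in\C]$ as a product of a ``non-augmented'' factor and a ``conditional-on-known-classes'' factor, and then bound the two factors separately. On the population side, for $k\in[\kappa]$ the class shift assumption (Assumption~\ref{def:class-shift}) gives $\Pr_{(\x,y)\sim\D}[y=k,\x\in\C]=(1-\theta)\Pr_{(\x,y)\sim\D_\text{kc}}[y=k,\x\in\C]$ since $\D_\text{ac}$ places no mass on known classes; summing over $k\in[\kappa]$ yields $\Pr_{(\x,y)\sim\D}[y\in[\kappa],\x\in\C]=(1-\theta)\Pr_{(\x,y)\sim\D_\text{kc}}[\x\in\C]$, and dividing these two identities by $\Pr_{(\x,y)\sim\D}[\x\in\C]$ gives
\[
\Pr_{(\x,y)\sim\D}[y=k\mid\x\in\C]=(1-\theta_\C)\,p_{\C,k}\ ,
\]
where $\theta_\C=\Pr_{(\x,y)\sim\D}[y=\kappa+1\mid\x\in\C]$ and $p_{\C,k}=\Pr_{(\x,y)\sim\D_\text{kc}}[y=k\mid\x\in\C]$. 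On the empirical side, since $n_{\C,l}\geq\gamma n_l>0$ and $n_{\C,l}$ is a nonnegative integer we have $n_{\C,l}\geq1$, so $\max(1,n_{\C,l})=n_{\C,l}$ and Definition~\ref{def:ag-gini} reads $\vartheta_{\C,k}(S_l,S_u)=(1-\vartheta_{\C,\kappa+1}(S_l,S_u))\,\hat p_{\C,k}$ with $\hat p_{\C,k}=n_{\C,l}^{-1}\sum_{(\x,y)\in S_l}\mathbb{I}[\x\in\C,y=k]$.

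I would then apply Lemma~\ref{lem:basic} with $a=1-\vartheta_{\C,\kappa+1}(S_l,S_u)$, $b=\hat p_{\C,k}$, $c=1-\theta_\C$ and $d=p_{\C,k}$; here $a\in[0,1]$ because the indicator and the $(\cdot)_+$ force $\vartheta_{\C,\kappa+1}\in[0,1]$, and $d\in[0,1]$ as a probability, so Lemma~\ref{lem:basic} gives
\[
\left|\vartheta_{\C,k}(S_l,S_u)-\Pr_{(\x,y)\sim\D}[y=k\mid\x\in\C]\right|\leq\left|\vartheta_{\C,\kappa+1}(S_l,S_u)-\theta_\C\right|+\left|\hat p_{\C,k}-p_{\C,k}\right|\ .
\]

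It remains to bound the two terms on the right. For the first, I would reuse the intermediate inequality established inside the proof of Lemma~\ref{lem:vartheta}, namely $|\vartheta_{\C,\kappa+1}(S_l,S_u)-\theta_\C|\leq \gamma^{-1}\sqrt{\ln(2/\delta_1')/(2n_l)}+\gamma^{-1}\sqrt{\ln(2/\delta_2')/(2n_u)}+\sqrt{\ln(2/\delta_3')/(2\gamma n_u)}$ holding with probability at least $1-\delta_1'-\delta_2'-\delta_3'$ (that step uses only $n_{\C,u}\geq\gamma n_u$); choosing $\delta_1'=\delta_2'=\delta_3'=\delta_1/3$ reproduces precisely the three $\ln(6/\delta_1)$ terms in the claim. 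For the second term, I would condition on the set of labeled indices whose instances land in $\C$: given this landing pattern the $n_{\C,l}$ examples forming $S_{\C,l}$ are i.i.d.\ from $\D_\text{kc}$ conditioned on $\x\in\C$, hence the indicators $\mathbb{I}[y=k]$ over $S_{\C,l}$ are i.i.d.\ Bernoulli with mean $p_{\C,k}$, and Hoeffding's inequality gives $|\hat p_{\C,k}-p_{\C,k}|\leq\sqrt{\ln(2/\delta_2)/(2n_{\C,l})}\leq\sqrt{\ln(2/\delta_2)/(2\gamma n_l)}$ with probability at least $1-\delta_2$, using $n_{\C,l}\geq\gamma n_l$. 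A union bound over these two events and substitution into the displayed inequality complete the proof.

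The main obstacle is the second term: to obtain the $1/\sqrt\gamma$ (rather than a weaker $1/\gamma$) dependence one must argue through the conditional distribution of the labeled points falling in $\C$ and apply Hoeffding at the effective sample size $n_{\C,l}$, instead of crudely writing $\hat p_{\C,k}$ as a quotient of two separately concentrated empirical averages; the remainder is routine bookkeeping to align the failure-probability split with the $\ln(6/\delta_1)$ and $\ln(2/\delta_2)$ factors appearing in the statement.
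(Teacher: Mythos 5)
Your proposal is correct and follows essentially the same route as the paper's proof: the same product decomposition $\vartheta_{\C,k}=(1-\vartheta_{\C,\kappa+1})\hat p_{\C,k}$ versus $(1-\theta_\C)\Pr_{\D_\text{kc}}[y=k\mid\x\in\C]$, the same application of Lemma~\ref{lem:basic}, the explicit bound from Lemma~\ref{lem:vartheta} for the $\kappa+1$ term, and Hoeffding at effective sample size $n_{\C,l}\geq\gamma n_l$ for the conditional class frequency. Your conditioning-on-the-landing-pattern justification of the Hoeffding step (with random $n_{\C,l}$) is in fact slightly more careful than the paper, which invokes Hoeffding directly; the paper additionally dispatches the degenerate case $\theta_\C=1$ separately, which under the assumption $n_{\C,l}\geq\gamma n_l$ is vacuous, so its omission is not a gap.
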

\begin{proof}
For simplicity, we denote by $\theta_\C=\Pr_{(\x,y)\in\D}\left[y=\kappa+1|\x\in \C\right]$.
If $\theta_\C=1$, then we have, for $k\in[\kappa]$
\begin{equation}\label{lem2:thm:ag-gini:eq0}
\Pr_{(\x,y)\in\D}\left[y=k|\x\in \C\right]=0  \ ,
\end{equation}
and this follows that, from Definition~\ref{def:ag-gini} and $\sum_{(\x,y)\in S_l}\mathbb{I}[\x\in\C\wedge y=k]=0$
\[
\vartheta_{C,k}(S_l,S_u)=(1-\vartheta_{\C,\kappa+1})\frac{\sum_{(\x,y)\in S_l} \mathbb{I}[\x\in\C\wedge y=k]}{\min(n_{\C,l},1)}=0 \ ,
\]
and we further have, from Eqn.~\eqref{lem2:thm:ag-gini:eq0}
\begin{equation} \label{lem2:thm:ag-gini:eq:temp1}
\left|\vartheta_{\C,k}(S_l,S_u)-\Pr_{(\x,y)\in\D}\left[y=k|\x\in \C\right]\right|=0 \ .
\end{equation}

On the other hand, if $\theta_\C\in[0,1)$, then we have, from Lemma~\ref{lem:vartheta}
\begin{equation} \label{lem2:thm:ag-gini:eq:temp3}
\left|\vartheta_{C,\kappa+1}(S_l,S_u)-\theta_{\C}\right|\leq\sqrt{\frac{\ln(6/\delta_1)}{2\gamma n_{u}}} + \frac{1}{\gamma}\sqrt{\frac{\ln(6/\delta_1)}{2n_l}} + \frac{1}{\gamma}\sqrt{\frac{\ln(6/\delta_1)}{2 n_u}} \ ,
\end{equation}
with probability at least $1-\delta_1$. From the Hoeffding's inequality and $n_{\C,l}\geq \gamma n_l$, we also have
\begin{equation}\label{lem2:thm:ag-gini:eq:temp2}
\left|\frac{1}{n_{\C,l}}\sum_{(\x,y)\in S_l} \mathbb{I}[\x\in\C \wedge y=k]-\Pr_{(\x,y)\sim\D_\text{kc}}[y=k|\x\in \C]\right|\leq \sqrt{\frac{\ln(2/\delta_2)}{2 n_{C,l}}}\leq \sqrt{\frac{\ln(2/\delta_2)}{2 \gamma n_{l}}} \ ,
\end{equation}
with probability at least $1-\delta_2$. Notice that $\D_{\text{kc}}$ is the marginal distribution of $\D$ over $\X\times[\kappa]$, and we have
\[
\Pr_{(\x,y)\sim\D}[y=k|\x\in \C] = \Pr_{(\x,y)\sim\D}[y\in[\kappa]|\x\in \C] \Pr_{(\x,y)\sim\D}[y=k|y\in[\kappa],\x\in \C] = (1-\theta_\C)\Pr_{(\x,y)\sim\D_\text{kc}}[y=k|\x\in \C] \ ,
\]
and this follows that, from Eqn.~\eqref{lem2:thm:ag-gini:eq:temp2} and $\theta_\C\in[0,1)$
\begin{equation} \label{lem2:thm:ag-gini:eq1}
\left|\frac{1}{n_{\C,l}}\sum_{(\x,y)\in S_l} \mathbb{I}[\x\in\C \wedge y=k]  - \frac{1}{1-\theta_{C}}\Pr_{(\x,y)\sim\D}[y=k|\x\in \C]\right|\leq \sqrt{\frac{\ln(2/\delta_2)}{2 \gamma n_{l}}}\ ,
\end{equation}
with probability at least $1-\delta_2$. From Definition~\ref{def:ag-gini} and $n_{\C,l}\geq \gamma n_l \geq 1$, we have
\[
\vartheta_{\C,k}(S_l,S_u)=\frac{1-\vartheta_{\C,\kappa+1}(S_l,S_u)}{\min(n_{\C,l},1)}\sum_{(\x,y)\in S_l} \mathbb{I}[\x\in\C\wedge y=k] = \frac{1-\vartheta_{\C,\kappa+1}(S_l,S_u)}{n_{\C,l}}\sum_{(\x,y)\in S_l} \mathbb{I}[\x\in\C\wedge y=k] \ ,
\]
and this follows that, from Lemma~\ref{lem:basic}
\begin{eqnarray*} \nonumber
&&\left|\vartheta_{\C,k}(S_l,S_u) - \Pr_{(\x,y)\sim\D}[y=k|\x\in\C]\right| \\
\nonumber
&&\leq~\left|\left(1-\vartheta_{\C,\kappa+1}(S_l,S_u)\right)\frac{\sum_{(\x,y)\in S_l} \mathbb{I}[\x\in\C\wedge y=k]}{n_{\C,l}}-(1-\theta_{\C})\frac{\Pr\limits_{(\x,y)\sim\D}[y=k|\x\in\C]}{1-\theta_{\C}}\right|  \\
&&\leq~\left|\vartheta_{\C,\kappa+1}(S_l,S_u)-\theta_\C\right| + \left|\frac{\sum_{(\x,y)\in S_l} \mathbb{I}[\x\in\C\wedge y=k]}{n_{\C,l}}-\frac{\Pr\limits_{(\x,y)\sim\D}[y=k|\x\in\C]}{1-\theta_{\C}}\right| \ ,
\end{eqnarray*}
and we further have, from Eqns.\eqref{lem2:thm:ag-gini:eq:temp3} and \eqref{lem2:thm:ag-gini:eq1}
\[
\left|\vartheta_{\C,k}(S_l,S_u) - \Pr_{(\x,y)\sim\D}[y=k|\x\in\C]\right| \leq \sqrt{\frac{\ln(6/\delta_1)}{2\gamma n_{u}}} + \frac{1}{\gamma}\sqrt{\frac{\ln(6/\delta_1)}{2n_l}} + \frac{1}{\gamma}\sqrt{\frac{\ln(6/\delta_1)}{2n_u}} + \sqrt{\frac{\ln(2/\delta_2)}{2 \gamma n_{l}}} \ .
\]
This completes the proof by combining with Eqn.~\eqref{lem2:thm:ag-gini:eq:temp1}.
\end{proof}

\noindent \textbf{Proof of Theorem~\ref{thm:ag-gini}}.
From Lemma~\ref{lem:thm:ag-gini}, we have
\[
\L^*_\C = 1 - \sum_{k\in [\kappa+1]} \left(\Pr_{(\x,y)\sim\D}[y=k|\x\in \C]\right)^2 \ ,
\]
and this follows that, from Eqn.~\eqref{eq:ag-gini}
\begin{equation} \label{thm:ag-gini:eq1}
\left|\L^*_\C - \G_\C(S_l,S_u)\right| \leq \sum_{k\in[\kappa+1]} \left|\vartheta^2_{\C,k}(S_l,S_u) - \left(\Pr_{(\x,y)\sim\D}[y=k|\x\in \C]\right)^2\right| \ .
\end{equation}
For $k=\kappa+1$, we have, from $\vartheta_{\C,\kappa+1}(S_l,S_u)\in[0,1]$ and $\theta_\C= \Pr_{(\x,y)\sim\D}[y=\kappa+1|\x\in \C]\in[0,1]$
\begin{eqnarray*}
&&\left|\vartheta^2_{\C,\kappa+1}(S_l,S_u)-\left(\Pr_{(\x,y)\sim\D}[y=\kappa+1|\x\in \C]\right)^2\right| \\
&&\leq~\left|\vartheta_{\C,\kappa+1}(S_l,S_u)-\theta_\C \right|\times \left|\vartheta_{\C,\kappa+1}(S_l,S_u) +\theta_\C \right|
\leq 2 \left|\vartheta_{\C,\kappa+1}(S_l,S_u) - \theta_\C \right| \ ,
\end{eqnarray*}
and it follows that, from Lemma~\ref{lem:vartheta}
\begin{equation}\label{thm:ag-gini:eq2}
\left|\vartheta^2_{\C,\kappa+1}(S_l,S_u)-\left(\Pr_{(\x,y)\sim\D}[y=\kappa+1|\x\in\C]\right)^2\right| \leq  2 \sqrt{\frac{\ln(6/\delta_1)}{2\gamma n_{u}}} + \frac{2}{\gamma}\sqrt{\frac{\ln(6/\delta_1)}{2n_l}} + \frac{2}{\gamma} \sqrt{\frac{\ln(6/\delta_1)}{2n_u}} \ ,
\end{equation}
with probability at least $1-\delta_1$ for some $\delta_1\in(0,1)$.
For $k\in[\kappa]$, we similarly have
\[
\left|\vartheta^2_{\C,k}(S_l,S_u)-\left(\Pr_{(\x,y)\sim\D}[y=k|\x\in \C]\right)^2\right| \leq  2 \left|\vartheta_{\C,k}(S_l,S_u)-\Pr_{(\x,y)\sim\D}[y=k|\x\in\C]\right| \ ,
\]
and this follows that, from Eqns.~\eqref{thm:ag-gini:eq1}-\eqref{thm:ag-gini:eq2} and Lemma~\ref{lem2:thm:ag-gini}
\[
\left|\L^*_\C - \G_\C(S_l,S_u)\right|\leq (2\kappa+2) \sqrt{\frac{\ln(6/\delta_1)}{2\gamma n_{u}}} + \frac{2\kappa+2}{\gamma}\sqrt{\frac{\ln(6/\delta_1)}{2n_l}} + \frac{2\kappa+2}{\gamma}\sqrt{\frac{\ln(6/\delta_1)}{2 n_u}} + 2\kappa \sqrt{\frac{\ln(2/\delta_{2})}{2 \gamma n_{l}}}\\
\]
with probability at least $1-(\kappa+1)\delta_1-\kappa\delta_2$. We finally complete the proof by setting $\delta_1=\frac{\delta}{2(\kappa+1)}$ and $\delta_2=\frac{\delta}{2\kappa}$ and some simple algebraic calculations. \qed

\subsection{Proof of Lemma~\ref{lem:thm:ag-gini:b}}
We first introduce a necessary Lemma as follows.
\begin{lemma} \label{Deep:lem1:lem1}
For $i\in[2t+2]\setminus[t]$ and $\delta\in(0,1)$, the following holds with probability at least $1-\delta$ over $S_l$ and $S_u$
\begin{eqnarray*}
\left|\Pr_{(\x,y)\sim\D}[y=\kappa+1|\mathcal{I}(\x\rightarrow \B_i)] - \vartheta_{\B_i,\kappa+1}(S_l,S_u)\right|\leq \frac{1-\theta}{\gamma} \sqrt{\frac{\ln(8/\delta)}{2n_l}} +
\frac{2}{\gamma^2}\sqrt{\frac{\ln(8/\delta)}{2n_u}}+\frac{1}{\gamma} \sqrt{\frac{\ln(8/\delta)}{2n_u}} \ ,
\end{eqnarray*}
under the class shift assumption and conditions $\mathbb{E}_{\D}[\mu_{\B}(\x)]\geq \gamma$ and $\sum_{\x\in S_u} \mu_{\B}(\x)\geq \gamma n_u$ for constant $\gamma\in(0,1)$.
\end{lemma}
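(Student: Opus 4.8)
The plan is to mirror the proof of Lemma~\ref{lem:vartheta}, but working with the probabilistic soft-assignment quantities $\mu_{\B_i}(\x)$ and their empirical counterparts $n_{\B_i,l}$, $n_{\B_i,u}$ in place of the hard-membership indicators $\mathbb{I}[\x\in\C]$. First I would introduce the analogue of the auxiliary quantity $n'_{\C,u}$: namely
\[
n'_{\B_i,u}=\sum_{\x\in S_u}\mu_{\B_i}(\x)\Pr_{(\x',y')\sim\D}\big[y'\in[\kappa]\,|\,\x'=\x\big]\ ,
\]
which interpolates between the random variable $n_{\B_i,u}$ and the population quantity $\mathbb{E}_{\x\sim\D_\X}[\mu_{\B_i}(\x)\,\Pr[y'\in[\kappa]|\x'=\x]] = \Pr_{(\x,y)\sim\D}[y\in[\kappa],\,\mathcal{I}(\x\to\B_i)]$, where the last equality follows from the same density-function manipulation used in Eqns.~\eqref{lem:vartheta:eq:temp1}–\eqref{lem:vartheta:eq:temp2} (here $\mu_{\B_i}$ is a fixed bounded function of $\x$, so nothing changes). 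Since each term $\mu_{\B_i}(\x)\in[0,1]$, Hoeffding's inequality applies verbatim to each of the three empirical averages: (i) $n_{\B_i,l}/n_l$ versus $\mathbb{E}_{\D_\text{kc}}[\mu_{\B_i}(\x)]$, (ii) $n'_{\B_i,u}/n_u$ versus $\Pr_{(\x,y)\sim\D}[y\in[\kappa],\mathcal{I}(\x\to\B_i)]$, and (iii) $n'_{\B_i,u}/n_{\B_i,u}$ versus $\Pr_{(\x,y)\sim\D}[y\in[\kappa]|\mathcal{I}(\x\to\B_i)]$, each with a deviation of order $\sqrt{\ln(1/\delta_j)/n_l}$ or $\sqrt{\ln(1/\delta_j)/n_u}$ respectively.

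Next I would invoke the class shift assumption exactly as in Lemma~\ref{lem:vartheta}: decomposing $\Pr_{\D}[y\in[\kappa],\mathcal{I}(\x\to\B_i)] = (1-\theta)\Pr_{\D_\text{kc}}[\mathcal{I}(\x\to\B_i)] = (1-\theta)\mathbb{E}_{\D_\text{kc}}[\mu_{\B_i}(\x)]$, so that step (ii) yields a bound on $|n'_{\B_i,u}/((1-\theta)n_u) - \mathbb{E}_{\D_\text{kc}}[\mu_{\B_i}(\x)]|$. Combining this with step (i) via the triangle inequality gives control of $|n_{\B_i,l}/n_l - n'_{\B_i,u}/((1-\theta)n_u)|$, which is the neural-tree analogue of Eqn.~\eqref{lem:vartheta:eq3}. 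Then I would expand, just as in the hard-tree proof,
\[
\big|\vartheta_{\B_i,\kappa+1}(S_l,S_u)-\theta_{\B_i}\big|\le\Big|\tfrac{(1-\theta)n_u n_{\B_i,l}}{n_l n_{\B_i,u}}-\tfrac{n'_{\B_i,u}}{n_{\B_i,u}}\Big| + \Big|\big(1-\tfrac{n'_{\B_i,u}}{n_{\B_i,u}}\big)-\theta_{\B_i}\Big|,
\]
where $\theta_{\B_i}=\Pr_{\D}[y=\kappa+1|\mathcal{I}(\x\to\B_i)]$ and I have dropped the $(\cdot)_+$ clipping (which only shrinks the error since $\theta_{\B_i}\ge0$). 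Factoring $(1-\theta)n_u/n_{\B_i,u}$ out of the first term and using $n_{\B_i,u}\ge\gamma n_u$ controls it by $\gamma^{-1}$ times the quantity from the previous paragraph; the second term is exactly step (iii) with $n_{\B_i,u}\ge\gamma n_u$. Collecting the pieces and choosing $\delta_1=\delta_2=\delta_3=\delta_4=\delta/4$ gives the stated bound, with the factor $1/\gamma^2$ arising because one of the $n_u$-denominator terms gets divided by $\gamma$ twice (once from $n_{\B_i,u}\ge\gamma n_u$ inside the Hoeffding deviation and once from the $\gamma^{-1}$ prefactor).

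The main subtlety — the analogue of the obstacle in Lemma~\ref{lem:vartheta} — is handling the ratio $n'_{\B_i,u}/n_{\B_i,u}$, whose denominator is itself random. Here the hypothesis $n_{\B_i,u}\ge\gamma n_u$ plays the same role as $n_{\C,u}\ge\gamma n_u$ did before, guaranteeing the denominator is bounded away from zero so that dividing deviations through by it only costs a $\gamma^{-1}$ (or $\gamma^{-2}$) factor. One should note that the Hoeffding application in step (iii) is subtler than it looks: $n'_{\B_i,u}/n_{\B_i,u}$ is not an average of i.i.d.\ terms, but one can instead bound $|n'_{\B_i,u}/n_u - \mathbb{E}_{\D_\X}[\mu_{\B_i}(\x)\Pr[y'\in[\kappa]|\x']]|$ and $|n_{\B_i,u}/n_u - \mathbb{E}_{\D_\X}[\mu_{\B_i}(\x)]|$ separately by Hoeffding and then combine, which is why the extra $2/\gamma^2$ term appears rather than a single $1/\gamma$ term; I expect this bookkeeping — tracking exactly which concentration events contribute which $\gamma$-powers — to be the only place requiring care, everything else being a routine transcription of the Lemma~\ref{lem:vartheta} argument with $\mathbb{I}[\x\in\C]$ replaced by $\mu_{\B_i}(\x)$.
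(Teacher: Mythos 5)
Your proposal is correct and follows essentially the same route as the paper's proof: your $1-n'_{\B_i,u}/n_{\B_i,u}$ is exactly the paper's intermediate estimator $\hat{\theta}_{\B_i}$, and your two-term decomposition, the four Hoeffding events, the class-shift conversion, and the ratio bound using $\mathbb{E}_{\D}[\mu_{\B_i}(\x)]\geq\gamma$ together with $n_{\B_i,u}\geq\gamma n_u$ (the source of the $2/\gamma^2$ term) reproduce the paper's argument and constants. The only loose point, the parenthetical in your second paragraph about where the two factors of $\gamma$ come from, is corrected by your own final paragraph, which matches the paper's handling of the random-denominator ratio.
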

\begin{proof}
For simplicity, we denote by $\eta_{k}(\x)=\Pr_{(\x',y)\in \D}[y=k|\x'=\x]$ for $k\in[\kappa+1]$ and $\x\in\X$. It is easy to observe that $\mu_{\B_i}(\x)\in(0,1)$ and $\eta_{\kappa+1}(\x)\in[0,1]$ for $\x\in\X$, and this follows that, from the Hoeffding's inequality
\begin{equation}\label{Deep:lem1:lem1:eq1}
\left|\mathbb{E}_{\D}[\mu_{\B_i}(\x)\eta_{\kappa+1}(\x)]-\frac{1}{n_u}\sum_{\x\in S_u} \mu_{\B_i}(\x)\eta_{\kappa+1}(\x)\right| \leq \sqrt{\frac{\ln(2/\delta_1)}{2n_u}} \ ,
\end{equation}
with probability at least $1-\delta_1$ for some $\delta_1\in(0,\frac{1}{4})$. We similarly have
\begin{equation}\label{Deep:lem1:lem1:eq2}
\left|\mathbb{E}_{\D}[\mu_{\B}(\x)]-\frac{1}{n_u}\sum_{\x\in S_u} \mu_{\B}(\x)\right| \leq \sqrt{\frac{\ln(2/\delta_2)}{2n_u}}\ ,
\end{equation}
with probability at least $1-\delta_2$ for some $\delta_2\in(0,\frac{1}{4})$, and it holds that
\begin{equation}\label{Deep:lem1:lem1:eq3}
\left|\mathbb{E}_{\D_\text{kc}}[\mu_{\B_i}(\x)]-\frac{1}{n_l}\sum_{(\x,y)\in S_l} \mu_{\B_i}(\x)\right| \leq \sqrt{\frac{\ln(2/\delta_3)}{2n_l}}\ ,
\end{equation}
with probability at least $1-\delta_3$ for some $\delta_3\in(0,\frac{1}{4})$. From $\sum_{k\in[\kappa]}\eta_k(\x)\in[0,1]$, we have
\begin{equation}\label{Deep:lem1:lem1:eq4}
\left|\mathbb{E}_{\D}\left[\mu_{\B_i}(\x)\sum_{k\in[\kappa]}\eta_{k}(\x)\right]-\frac{1}{n_u}\sum_{\x\in S_u} \mu_{\B_i}(\x)\sum_{k\in[\kappa]}\eta_{k}(\x)\right| \leq \sqrt{\frac{\ln(2/\delta_4)}{2n_u}}\ ,
\end{equation}
with probability at least $1-\delta_4$ for some $\delta_4\in(0,\frac{1}{4})$. From the class shift assumption in Eqn.~\eqref{eq:class-shift}, we have
\begin{eqnarray*}
&& \mathbb{E}_{\D_\text{kc}}[\mu_{\B_i}(\x)]=\mathbb{E}_\D\left[\mu_{\B_i}(\x)|y\in\left[\kappa\right]\right] \\
&&=~\frac{1}{\Pr_{\D}\left[y\in[\kappa]\right]}\mathbb{E}_{\D}\left[\mu_{\B_i}(\x)\sum_{k\in[\kappa]}\eta_{k}(\x)\right]=\frac{1}{1-\theta}\mathbb{E}_{\D}\left[\mu_{\B_i}(\x)\sum_{k\in[\kappa]}\eta_{k}(\x)\right] \ ,
\end{eqnarray*}
and we further have, from Eqn.~\eqref{Deep:lem1:lem1:eq4}
\begin{equation}\label{Deep:lem1:lem1:eq5}
\left|\mathbb{E}_{\D_\text{kc}}\left[\mu_{\B_i}(\x)\right]-\frac{1}{(1-\theta)n_u}\sum_{\x\in S_u} \mu_{\B_i}(\x)\sum_{k\in[\kappa]}\eta_{k}(\x)\right| \leq \frac{1}{1-\theta}\sqrt{\frac{\ln(2/\delta_4)}{2n_u}}\ ,
\end{equation}
with probability at least $1-\delta_4$. For simplicity, we denote by
\begin{equation} \label{eq:hat:theta}
\hat{\theta}_{\B_i}=  \frac{\sum_{\x\in S_u} \mu_{\B_i}(\x)\eta_{\kappa+1}(\x)}{\sum_{\x\in S_u} \mu_{\B_i}(\x)} \ .
\end{equation}
From $\mu_{\B_i}(\x)=\Pr[\mathcal{I}(\x\rightarrow \B_i)]$, we have
\begin{eqnarray}
\nonumber \Pr_{(\x,y)\sim\D}[y=\kappa+1|\mathcal{I}(\x\rightarrow \B_i)] =  \frac{\Pr_{\D}[y=\kappa+1,\mathcal{I}(\x\rightarrow\B_i)]}{\Pr_{\D}[\mathcal{I}(\x\rightarrow \B_i)]} =\frac{\mathbb{E}_{\D}[\mu_{\B_i}(\x)\eta_{\kappa+1}(\x)]}{\mathbb{E}_{\D}[\mu_{\B_i}(\x)]} \ ,
\end{eqnarray}
and this follows that, from Eqn.~\eqref{eq:hat:theta}
\[
\left|\Pr_{(\x,y)\sim\D}[y=\kappa+1|\mathcal{I}(\x\rightarrow \B_i)] - \hat{\theta}_{\B_i}\right| =
\left|\frac{\mathbb{E}_{\D}[\mu_{\B_i}(\x)\eta_{\kappa+1}(\x)]}{\mathbb{E}_{\D}[\mu_{\B_i}(\x)]}- \frac{\sum_{\x\in S_u} \mu_{\B_i}(\x)\eta_{\kappa+1}(\x)}{\sum_{\x\in S_u} \mu_{\B_i}(\x)}\right| \ ,
\]
and we further have, from $\mathbb{E}_{\D}[\mu_{\B_i}(\x)]\geq \gamma$ and $\frac{1}{n_u}\sum_{\x\in S_u} \mu_{\B_i}(\x)\geq \gamma$
\begin{eqnarray}
\nonumber && \left|\Pr_{(\x,y)\sim\D}[y=\kappa+1|\mathcal{I}(\x\rightarrow \B_i)] - \hat{\theta}_{\B_i}\right| \\
\nonumber && ~=~ \frac{1}{\frac{1}{n_u}\mathbb{E}_{\D}[\mu_{\B_i}(\x)] \sum_{\x\in S_u} \mu_{\B_i}(\x)} \left|\frac{1}{n_u}\mathbb{E}_{\D}[\mu_{\B_i}(\x)\eta_{\kappa+1}(\x)] \sum_{\x\in S_u} \mu_{\B_i}(\x) - \frac{1}{n_u}\mathbb{E}_{\D}[\mu_{\B_i}(\x)]\sum_{\x\in S_u} \mu_{\B_i}(\x)\eta_{\kappa+1}(\x)\right| \\
\nonumber && ~\leq~ \frac{1}{\gamma^2}  \left|\frac{1}{n_u}\mathbb{E}_{\D}[\mu_{\B_i}(\x)\eta_{\kappa+1}(\x)] \sum_{\x\in S_u} \mu_{\B_i}(\x) - \frac{1}{n_u}\mathbb{E}_{\D}[\mu_{\B_i}(\x)]\sum_{\x\in S_u} \mu_{\B_i}(\x)\eta_{\kappa+1}(\x)\right|\\
&& ~\leq~ \frac{1}{\gamma^2}\left|\mathbb{E}_{\D}[\mu_{\B_i}(\x)\eta_{\kappa+1}(\x)]-\frac{1}{n_u}\sum_{\x\in S_u} \mu_{\B_i}(\x)\eta_{\kappa+1}(\x)\right| + \frac{1}{\gamma^2}\left|\frac{1}{n_u}\sum_{\x\in S_u} \mu_{\B_i}(\x) -\mathbb{E}_{\D}[\mu_{\B_i}(\x)] \right| \ ,  \label{Deep:lem1:lem1:eq6}
\end{eqnarray}
where the last inequality holds from Lemma~\ref{lem:basic},
$\mathbb{E}_{\D}[\mu_{\B_i}(\x)\eta_{\kappa+1}(\x)]\in[0,1]$ and $\mathbb{E}_{\D}[\mu_{\B_i}(\x)]\in[0,1]$. From Eqns.~\eqref{Deep:lem1:lem1:eq1}-\eqref{Deep:lem1:lem1:eq2} and \eqref{Deep:lem1:lem1:eq6}, we could obtain that
\begin{equation}\label{Deep:lem1:lem1:eq7}
\left|\Pr_{(\x,y)\sim\D}[y=\kappa+1|\mathcal{I}(\x\rightarrow \B_i)] - \hat{\theta}_{\B_i}\right| \leq \frac{1}{\gamma^2}\left(\sqrt{\frac{\ln(2/\delta_1)}{2n_u}}+\sqrt{\frac{\ln(2/\delta_2)}{2n_u}}\right) \ ,
\end{equation}
with probability at least $1-\delta_1-\delta_2$. On the other hand, we have, from Eqn.~\eqref{eq:hat:theta} and $\sum_{k\in[\kappa+1]}\eta_k(\x)=1$ for $\x\in\X$
\[
\left|\hat{\theta}_{\B_i}-\left(1-\sum_{(\x,y)\in S_l}\frac{ (1-\theta)n_u \mu_{\B_i}(\x)} {n_l n_{\B_i,u} }\right)\right|=\left| \sum_{(\x,y)\in S_l}\frac{ (1-\theta)n_u \mu_{\B_i}(\x)} {n_l n_{\B_i,u} } -\frac{\sum_{\x\in S_u} \mu_{\B_i}(\x)\sum_{k\in[\kappa]}\eta_{k}(\x)}{\sum_{\x\in S_u} \mu_{\B_i}(\x)}\right| \ ,
\]
and this follows that,  from $\sum_{\x\in S_u} \mu_{\B}(\x)\geq\gamma n_u$ and $n_{\B_i,u}={\sum_{\x\in S_u} \mu_{\B}(\x)}$
\begin{eqnarray}
\nonumber && \left|\hat{\theta}_{\B_i}-\left(1-\sum_{(\x,y)\in S_l}\frac{ (1-\theta)n_u \mu_{\B_i}(\x)} {n_l n_{\B_i,u} }\right)\right| \\
\nonumber && ~=~ \frac{(1-\theta)n_u}{{\sum_{\x\in S_u} \mu_{\B_i}(\x)}} \left|\frac{1}{n_l}\sum_{(\x,y)\in S_l} \mu_{\B_i}(\x)   -\frac{1}{(1-\theta)n_u}\sum_{\x\in S_u} \mu_{\B_i}(\x)\sum_{k\in[\kappa]}\eta_{k}(\x)\right| \\
\nonumber && ~\leq~ \frac{1-\theta}{\gamma} \left|\frac{1}{n_l}\sum_{(\x,y)\in S_l} \mu_{\B_i}(\x) -\frac{1}{(1-\theta)n_u}\sum_{\x\in S_u} \mu_{\B_i}(\x)\sum_{k\in[\kappa]}\eta_{k}(\x)\right| \\
&& ~\leq~ \frac{1-\theta}{\gamma} \left(\left|\frac{1}{n_l}\sum_{(\x,y)\in S_l} \mu_{\B_i}(\x)-\mathbb{E}_{\D_{\text{kc}}}
[\mu_{\B_i}(\x)] \right|  + \left|\mathbb{E}_{\D_{\text{kc}}}
[\mu_{\B_i}(\x)]-\frac{1}{(1-\theta)n_u}\sum_{\x\in S_u} \mu_{\B_i}(\x)\sum_{k\in[\kappa]}\eta_{k}(\x)\right| \right) \label{Deep:lem1:lem1:eq8} \ .
\end{eqnarray}
From Eqns.~\eqref{Deep:lem1:lem1:eq3}, \eqref{Deep:lem1:lem1:eq5} and \eqref{Deep:lem1:lem1:eq8}, we have
\begin{equation}\label{Deep:lem1:lem1:eq9}
\left|\hat{\theta}_{\B_i}-\left(1-\sum_{(\x,y)\in S_l}\frac{ (1-\theta)n_u \mu_{\B_i}(\x)} {n_l n_{\B_i,u} }\right)\right|
\leq \frac{1-\theta}{\gamma} \sqrt{\frac{\ln(2/\delta_3)}{2n_l}} +\frac{1}{\gamma} \sqrt{\frac{\ln(2/\delta_4)}{2n_u}} \ ,
\end{equation}
with probability at least $1-\delta_3-\delta_4$. We finally have, from Definition~\ref{def:ag-gini:B}
\begin{eqnarray}
\nonumber &&\left|\Pr_{(\x,y)\sim\D}[y=\kappa+1|\mathcal{I}(\x\rightarrow \B_i)] - \vartheta_{\B_i,\kappa+1}(S_l,S_u)\right| \\
\nonumber  && ~\leq~ \left|\Pr_{(\x,y)\sim\D}[y=\kappa+1|\mathcal{I}(\x\rightarrow \B_i)] - \left(1-\sum_{(\x,y)\in S_l}\frac{ (1-\theta)n_u \mu_{\B_i}(\x)} {n_l n_{\B_i,u} }\right)\right| \\
\nonumber  && ~\leq~  \left|\Pr_{(\x,y)\sim\D}[y=\kappa+1|\mathcal{I}(\x\rightarrow \B_i)] - \hat{\theta}_{\B_i}\right|+ \left|\hat{\theta}_{\B_i}-\left(1-\sum_{(\x,y)\in S_l}\frac{ (1-\theta)n_u \mu_{\B_i}(\x)} {n_l n_{\B_i,u} }\right)\right| \\
\nonumber && ~\leq~ \frac{1}{\gamma^2}\sqrt{\frac{\ln(2/\delta_1)}{2n_u}}+\frac{1}{\gamma^2}\sqrt{\frac{\ln(2/\delta_2)}{2n_u}} + \frac{1-\theta}{\gamma} \sqrt{\frac{\ln(2/\delta_3)}{2n_l}} +\frac{1}{\gamma} \sqrt{\frac{\ln(2/\delta_4)}{2n_u}}  \ ,
\end{eqnarray}
with probability at least $1-\sum_{i\in[4]}\delta_i$, where the last inequality holds from Eqns.~\eqref{Deep:lem1:lem1:eq7} and \eqref{Deep:lem1:lem1:eq9}.  This completes the proof by setting $\delta_1=\delta_2=\delta_3=\delta_4=\delta/4$.
\end{proof}

\noindent \textbf{Proof of  Lemma~\ref{lem:thm:ag-gini:b}}.
From Lemma~\ref{Deep:lem1:lem1},  Lemma~\ref{lem:thm:ag-gini:b} holds for $k=\kappa+1$, and it remains to consider the case of $k\in[\kappa]$. It is easy to observe that $\mu_\B(\x)\in(0,1)$ for any $\x\in\X$, and it follows that, from the Hoeffding's inequality
\begin{equation} \label{Deep:lem1:eq1}
\left|\frac{1}{n_l}\sum\limits_{(\x,y)\in S_l}\mu_{\B_i}(\x)-\mathbb{E}_{\D_\text{kc}}\left[\mu_{\B_i}(\x)\right]\right| \leq \sqrt{\frac{\ln(2/\delta_1)}{2n_l}} \ ,
\end{equation}
with probability at least $1-\delta_1$ for some $\delta_1\in(0,\frac{1}{3})$. We similarly have, with probability at least $1-\delta_1$
\begin{equation}\label{Deep:lem1:eq2}
\left|\mathbb{E}_{\D_\text{kc}}\left[\mu_{\B_i}(\x)\mathbb{I}[y=k]\right]-\frac{1}{n_l}\sum_{(\x,y)\in S_l} \mu_{\B_i}(\x) \mathbb{I}[y=k]\right| \leq \sqrt{\frac{\ln(2/\delta_1)}{2n_l}}\ .
\end{equation}
From that $\D_\text{kc}$ is the marginal distribution of $\D$ over $\X\times[\kappa]$ and $\mu_{\B_i}(\x)=\Pr[\mathcal{I}(\x\rightarrow\B_i)]$, we have
\[
\Pr_{(\x,y)\sim\D}[y=k|y\in[\kappa],\mathcal{I}(\x\rightarrow\B_i)]=\Pr_{(\x,y)\sim\D_\text{kc}}[y=k|\mathcal{I}(\x\rightarrow\B_i)]=\frac{\mathbb{E}_{\D_\text{kc}}\left[\mu_{\B_i}(\x)\mathbb{I}[y=k]\right]}{\mathbb{E}_{\D_\text{kc}}\left[\mu_{\B_i}(\x)\right]}\ ,
\]
and it follows that, from some simple calculations
\begin{eqnarray}
\nonumber && \left|\Pr_{(\x,y)\sim\D}[y=k|y\in[\kappa],\mathcal{I}(\x\rightarrow\B_i)] -\frac{\sum_{(\x,y)\in S_l} \mu_{\B_i}(\x) \mathbb{I}[y=k]}{\sum_{(\x,y)\in S_l}  \mu_{\B_i}(\x)}\right| \\
\nonumber && ~=~ \left|\frac{\mathbb{E}_{\D_\text{kc}}\left[\mu_{\B_i}(\x)\mathbb{I}[y=k]\right]}{\mathbb{E}_{\D_\text{kc}}\left[\mu_{\B_i}(\x)\right]} -\frac{\sum_{(\x,y)\in S_l} \mu_{\B_i}(\x) \mathbb{I}[y=k]}{\sum_{(\x,y)\in S_l}  \mu_{\B_i}(\x)}\right| \\
&& ~=~\frac{\left|\mathbb{E}_{\D_\text{kc}}\left[\mu_{\B_i}(\x)\mathbb{I}[y=k]\right]\sum\limits_{(\x,y)\in S_l}\mu_{\B_i}(\x)-\mathbb{E}_{\D_\text{kc}} \left[\mu_{\B_i}(\x)\right]\sum_{(\x,y)\in S_l} \mu_{\B_i}(\x) \mathbb{I}[y=k]\right|}{\mathbb{E}_{\D_\text{kc}}\left[\mu_{\B_i}(\x)\right]\sum\limits_{(\x,y)\in S_l}\mu_{\B_i}(\x)}  \ . \label{Deep:lem1:eq3}
\end{eqnarray}
From Eqn.~\eqref{Deep:lem1:eq3}, $\mathbb{E}_{\D_\text{kc}}\left[\mu_{\B}(\x)\right]\geq\gamma$ and $\sum\limits_{(\x,y)\in S_l}\mu_{\B}(\x)\geq\gamma n_l$, we have
\begin{eqnarray}
\nonumber && \left|\Pr_{(\x,y)\sim\D}[y=k|y\in[\kappa], \mathcal{I}(\x\rightarrow\B_i)] -\frac{\sum_{(\x,y)\in S_l} \mu_{\B_i}(\x) \mathbb{I}[y=k]}{\sum_{(\x,y)\in S_l}  \mu_{\B_i}(\x)}\right| \\
\nonumber && ~\leq~ \frac{1}{\gamma^2} \left|\frac{1}{n_l}\sum\limits_{(\x,y)\in S_l}\mu_{\B_i}(\x)\mathbb{E}_{\D_\text{kc}}\left[\mu_{\B_i}(\x)\mathbb{I}[y=k]\right]-\frac{1}{n_l}\mathbb{E}_{\D_\text{kc}}\left[\mu_{\B_i}(\x)\right]\sum_{(\x,y)\in S_l} \mu_{\B_i}(\x) \mathbb{I}[y=k]\right| \\
\nonumber && ~\leq~  \frac{1}{\gamma^2} \left|\frac{1}{n_l}\sum\limits_{(\x,y)\in S_l}\mu_{\B_i}(\x)-\mathbb{E}_{\D_\text{kc}}\left[\mu_{\B_i}(\x)\right]\right| + \frac{1}{\gamma^2}\left|\mathbb{E}_{\D_\text{kc}}\left[\mu_{\B_i}(\x)\mathbb{I}[y=k]\right]-\frac{1}{n_l}\sum_{(\x,y)\in S_l} \mu_{\B_i}(\x) \mathbb{I}[y=k]\right| \ ,
\end{eqnarray}
where the last inequality holds from Lemma~\ref{lem:basic}, and we further have, from Eqns.~\eqref{Deep:lem1:eq1}-\eqref{Deep:lem1:eq2}
\begin{equation} \label{Deep:lem1:eq4}
\left|\Pr_{(\x,y)\sim\D}[y=k|y\in[\kappa],\mathcal{I}(\x\rightarrow\B_i)] -\frac{\sum_{(\x,y)\in S_l} \mu_{\B_i}(\x) \mathbb{I}[y=k]}{\sum_{(\x,y)\in S_l}\mu_{\B_i}(\x)}\right| \leq \frac{2}{\gamma^2} \sqrt{\frac{\ln(2/\delta_1)}{2n_l}} \ ,
\end{equation}
with probability at least $1-2\delta_1$. From Lemma~\ref{Deep:lem1:lem1}, we have
\begin{equation}\label{Deep:lem1:eq5}
\left| \vartheta_{\B_i,\kappa+1}(S_l,S_u)-\Pr_{(\x,y)\sim\D}[y=\kappa+1|\mathcal{I}(\x\rightarrow\B_i)]\right|\leq \frac{1-\theta}{\gamma} \sqrt{\frac{\ln(8/\delta_2)}{2n_l}} +
\frac{2}{\gamma^2}\sqrt{\frac{\ln(8/\delta_2)}{2n_u}}+\frac{1}{\gamma} \sqrt{\frac{\ln(8/\delta_2)}{2n_u}} \ ,
\end{equation}
with probability at least $1-\delta_2$ for some $\delta_2\in(0,\frac{1}{3})$. On the other hand, we also have
\begin{eqnarray}
\nonumber && \left|\Pr_{(\x,y)\sim\D}[y=k|\mathcal{I}(\x\rightarrow\B_i)] - \vartheta_{\B_i,k}(S_l,S_u)\right| \\
\nonumber &&~\leq~ \left|\Pr_{(\x,y)\sim\D}[y=k|y\in[\kappa], \mathcal{I}(\x\rightarrow\B_i)]\Pr_{(\x,y)\sim\D}[y\in[\kappa]|\mathcal{I}(\x\rightarrow\B_i)] - \frac{1-\vartheta_{\B_i,\kappa+1}(S_l,S_u)}{\sum_{(\x,y)\in S_l}\mu_{\B_i}(\x)}\sum_{(\x,y)\in S_l} \mu_{\B_i}(\x)\mathbb{I}[y=k]\right| \\
\nonumber &&~\leq~ \left|\Pr_{(\x,y)\sim\D}[y=k|y\in[\kappa],\mathcal{I}(\x\rightarrow\B_i)] -\frac{\sum_{(\x,y)\in S_l}\mu_{\B_i}(\x) \mathbb{I}[y=k]}{\sum_{(\x,y)\in S_l}\mu_{\B_i}(\x)}\right| \\
\nonumber &&~+\left|\vartheta_{\B_i,\kappa+1}(S_l,S_u)-\Pr_{(\x,y)\sim\D}[y=\kappa+1|\mathcal{I}(\x\rightarrow\B_i)]\right| \ ,
\end{eqnarray}
and it follows that, with probability at least $1-2\delta_1-\delta_2$, from Eqns.~\eqref{Deep:lem1:eq4}-\eqref{Deep:lem1:eq5}
\[
\left|\Pr_{(\x,y)\sim\D}[y=k|\mathcal{I}(\x\rightarrow\B_i)] - \vartheta_{\B_i,k}(S_l,S_u)\right| \leq \frac{2}{\gamma^2} \sqrt{\frac{\ln(2/\delta_1)}{2n_l}}+  \frac{1-\theta}{\gamma} \sqrt{\frac{\ln(8/\delta_2)}{2n_l}} +
\frac{2}{\gamma^2}\sqrt{\frac{\ln(8/\delta_2)}{2n_u}}+\frac{1}{\gamma} \sqrt{\frac{\ln(8/\delta_2)}{2n_u}} \ ,
\]
and we further have, by setting $\delta_1=\delta_2=\delta/3$
\[
\left|\Pr_{(\x,y)\sim\D}[y=k|\mathcal{I}(\x\rightarrow\B_i)] - \vartheta_{\B_i,k}(S_l,S_u)\right| \leq \frac{2}{\gamma^2} \sqrt{\frac{\ln(6/\delta)}{2n_l}}+  \frac{1-\theta}{\gamma} \sqrt{\frac{\ln(24/\delta)}{2n_l}} +
\frac{2}{\gamma^2}\sqrt{\frac{\ln(24/\delta)}{2n_u}}+\frac{1}{\gamma} \sqrt{\frac{\ln(24/\delta)}{2n_u}} \ ,
\]
with probability at least $1-\delta$, which completes the proof.
\qed

\subsection{Proof of Theorem~\ref{thm:ag-gini:b}}\label{sec:proof:thm2}
We begin with a useful lemma as follows.
\begin{lemma}\label{lem:equiv:thm:ag-gini:b}
For $i\in[2t+2]\setminus[t]$ and $\L^*_{\B_i}$ defined by Eqn.~\eqref{eq:loss:bi}, we have
\[
\L^*_{\B_i} = 1 - \sum_{k\in [\kappa+1]} \left(\Pr_{(\x,y)\sim\D}[y=k|\mathcal{I}(\x\rightarrow\B_i)]\right)^2\ .
\]
\end{lemma}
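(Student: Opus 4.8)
The plan is to mirror the argument used for Lemma~\ref{lem:thm:ag-gini}, with the deterministic membership event $\x\in\C$ replaced by the stochastic routing event $\mathcal{I}(\x\rightarrow\B_i)$. First I would fix an arbitrary $\w\in\Delta_{\kappa+1}$ and expand the conditional squared loss: since $\tilde{\bm{y}}$ is the one-hot encoding of $y$, we have $\|\w-\tilde{\bm{y}}\|_2^2 = \sum_{t\in[\kappa+1]}w_t^2 - 2w_y + 1$. Writing $p_k=\Pr_{(\x,y)\sim\D}[y=k\mid\mathcal{I}(\x\rightarrow\B_i)]$ for $k\in[\kappa+1]$ and taking the conditional expectation,
\[
E_{(\x,y)\sim\D}\left[\|\w-\tilde{\bm{y}}\|_2^2 \mid \mathcal{I}(\x\rightarrow\B_i)\right]
= 1 - 2\sum_{k\in[\kappa+1]}p_k w_k + \sum_{t\in[\kappa+1]}w_t^2
= 1 + \sum_{k\in[\kappa+1]}(w_k-p_k)^2 - \sum_{k\in[\kappa+1]}p_k^2 ,
\]
where the last equality is just completing the square term by term.

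Next I would note that $(p_1,\dots,p_{\kappa+1})$ is itself a point of $\Delta_{\kappa+1}$, so the choice $\w=(p_1,\dots,p_{\kappa+1})$ is feasible and makes the quadratic term $\sum_{k}(w_k-p_k)^2$ vanish, while any other feasible $\w$ only increases it. Hence the minimum over $\w\in\Delta_{\kappa+1}$ equals $1-\sum_{k\in[\kappa+1]}p_k^2$, which is exactly the claimed identity for $\L^*_{\B_i}$. This is the same two-line manipulation as in the proof of Lemma~\ref{lem:thm:ag-gini}, so I would expect to reuse that structure verbatim up to notation.

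The only point requiring care — and the sole genuine difference from Lemma~\ref{lem:thm:ag-gini} — is making precise what conditioning on the probabilistic event $\mathcal{I}(\x\rightarrow\B_i)$ means, so that the step $E[w_y\mid\mathcal{I}(\x\rightarrow\B_i)]=\sum_k p_k w_k$ is valid and $\sum_k p_k=1$. I would argue that the routing randomness is a function of $\x$ alone (through the soft splits $f_j(\x)$) and independent of $y$ given $\x$, so that, writing $\eta_k(\x)=\Pr[y=k\mid\x]$, we have $\Pr[y=k,\mathcal{I}(\x\rightarrow\B_i)]=\mathbb{E}_{\D}[\mu_{\B_i}(\x)\eta_k(\x)]$ and $\Pr[\mathcal{I}(\x\rightarrow\B_i)]=\mathbb{E}_{\D}[\mu_{\B_i}(\x)]$; therefore $p_k=\mathbb{E}_{\D}[\mu_{\B_i}(\x)\eta_k(\x)]/\mathbb{E}_{\D}[\mu_{\B_i}(\x)]$, and $\sum_{k\in[\kappa+1]}p_k=1$ follows from $\sum_{k\in[\kappa+1]}\eta_k(\x)=1$. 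This is the same identity already exploited in the proof of Lemma~\ref{Deep:lem1:lem1} (cf.\ Eqn.~\eqref{eq:hat:theta}), so no new machinery is needed, and I expect no real obstacle beyond this bookkeeping.
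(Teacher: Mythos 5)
Your proposal is correct and follows essentially the same route as the paper's proof: expand the conditional expected squared loss, complete the square to get $1+\sum_{k}(w_k-p_k)^2-\sum_k p_k^2$, and note that the minimizer $\w=(p_1,\dots,p_{\kappa+1})\in\Delta_{\kappa+1}$ attains the bound. Your additional remark making precise that $p_k=\mathbb{E}_{\D}[\mu_{\B_i}(\x)\eta_k(\x)]/\mathbb{E}_{\D}[\mu_{\B_i}(\x)]$ (so $\sum_k p_k=1$) is a harmless bookkeeping clarification the paper leaves implicit, consistent with its Lemma~\ref{Deep:lem1:lem1}.
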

\begin{proof}
For any $\w\in\Delta_{\kappa+1}$, we have, by some simple calculations
\begin{eqnarray*}
\nonumber && E_{\D}\left[\|\w-\tilde{\bm{y}}\|^2_2|\mathcal{I}(\x\to\B_i)\right]\\ \nonumber &&=~\sum_{k\in [\kappa+1]} \Pr_{(\x,y)\sim\D}[y=k|\mathcal{I}(\x\to\B_i)]\left(\left(1-w_{k}\right)^2 - w^2_{k} + \sum_{t\in[\kappa+1]} w^2_{t} \right) \\
\nonumber &&=~1 + \sum_{k\in [\kappa+1]} \left(w_k -\Pr_{(\x,y)\sim\D}[y=k|\mathcal{I}(\x\to\B_i)]\right)^2-\sum_{k\in [\kappa+1]} \left(\Pr_{(\x,y)\sim\D}[y=k|\mathcal{I}(\x\to\B_i)]\right)^2  \\
\nonumber &&\geq~1 - \sum_{k\in [\kappa+1]} \left(\Pr_{(\x,y)\sim\D}[y=k|\mathcal{I}(\x\to\B_i)]\right)^2 \ ,
\end{eqnarray*}
and this follows that, from Eqn.~\eqref{eq:loss:bi}
\begin{equation}\label{lem:equiv:thm:ag-gini:b:eq1}
\L^*_{\B_i} = \min_{\w\in\Delta_{\kappa+1}} E_{\D}\left[\|\w-\tilde{\bm{y}}\|^2_2|\mathcal{I}(\x\to\B_i)\right] \geq 1 - \sum_{k\in [\kappa+1]} \left(\Pr_{(\x,y)\sim\D}[y=k|\mathcal{I}(\x\to\B_i)]\right)^2  \ .
\end{equation}
Let $w'_k=\Pr_{(\x,y)\sim\D}[y=k|\mathcal{I}(\x\to\B_i)]$ for $k\in [\kappa+1]$, and we have
\[
E_{\D}\left[\mathcal{L}(\w',\tilde{\bm{y}})|\mathcal{I}(\x\to\B_i)\right] = 1 - \sum_{k\in [\kappa+1]} \left(\Pr_{(\x,y)\sim\D}[y=k|\mathcal{I}(\x\to\B_i)]\right)^2 \geq \L^*_{\B_i} \ ,
\]
which  completes the proof by combining it with Eqn.~\eqref{lem:equiv:thm:ag-gini:b:eq1}.
\end{proof}

\noindent\textbf{Proof of Theorem~\ref{thm:ag-gini:b}.}
From Lemma~\ref{lem:thm:ag-gini:b} and Lemma~\ref{Deep:lem1:lem1}, and we have
\begin{equation} \label{thm:ag-gini:b:eq1}
\left|\Pr_{(\x,y)\sim\D}[y=k|\mathcal{I}(\x\rightarrow\B_i)] - \vartheta_{\B_i,k}(S_l,S_u)\right| \leq \frac{2}{\gamma^2} \sqrt{\frac{\ln(6/\delta)}{2n_l}}+  \frac{1}{\gamma} \sqrt{\frac{\ln(24/\delta)}{2n_l}} +
\frac{2}{\gamma^2}\sqrt{\frac{\ln(24/\delta)}{2n_u}}+\frac{1}{\gamma} \sqrt{\frac{\ln(24/\delta)}{2n_u}}\ ,
\end{equation}
for $k\in[\kappa+1]$ with probability at least $1-\delta_1$ for some $\delta_1\in(0,\frac{1}{\kappa+1})$.
For simplicity, we denote by
\[
\theta_{\B_i,k}=\Pr_{(\x,y)\sim\D}[y=k|\mathcal{I}(\x\rightarrow\B_i)] \ ,
\]
for $k\in[\kappa+1]$. From Eqn.~\eqref{eq:ag-gini:B}, we have
\begin{eqnarray*}
\left|\L^*_{\B_i}-\G_{\B_i}(S_l,S_u)\right| &\leq& \sum_{k\in[\kappa+1]} \left|\vartheta^2_{\B_i,k}(S_l,S_u) - \theta^2_{\B_i,k}\right| \\
&\leq& \sum_{k\in[\kappa+1]}\left|\vartheta_{\B_i,k}(S_l,S_u)+\theta_{\B_i,k}\right|\times \left|\vartheta_{\B_i,k}(S_l,S_u) - \theta_{\B_i,k}\right| \\
&\leq& \sum_{k\in[\kappa+1]} 2\left|\vartheta_{\B_i,k}(S_l,S_u) - \theta_{\B_i,k}\right| \ ,
\end{eqnarray*}
where the last inequality holds from $\vartheta_{\B_i,k}(S_l,S_u)\in[0,1]$ and $\theta_{\B_i,k}\in[0,1]$, and this follows that from Eqn.~\eqref{thm:ag-gini:b:eq1}
\begin{eqnarray*}
&& \left|\L^*_{\B_i}-\G_{\B_i}(S_l,S_u)\right| \\
&&\leq~\frac{4(\kappa+1)}{\gamma^2} \sqrt{\frac{\ln(6/\delta_1)}{2n_l}}+  \frac{2(\kappa+1)}{\gamma} \sqrt{\frac{\ln(24/\delta_1)}{2n_l}} +
\frac{4(\kappa+1)}{\gamma^2}\sqrt{\frac{\ln(24/\delta_1)}{2n_u}}+\frac{2(\kappa+1)}{\gamma} \sqrt{\frac{\ln(24/\delta_1)}{2n_u}} \ ,
\end{eqnarray*}
with probability at least $1-(\kappa+1)\delta_1$, and this completes the proof by setting $\delta_1=\delta/(\kappa+1)$. \qed

\begin{table*}[t]
\centering
\footnotesize
\renewcommand{\arraystretch}{1.1}
\resizebox{1\linewidth}{!}{
\begin{tabular}{|ccccccccc|}
\hline
Datasets & Our LACForest & GLAC & EULAC & EVM & PAC-iForest & OSNN & LACU-SVM & OVR-SVM \\
\hline
\textsf{segment} & .9380$\pm$.0196 & .8626$\pm$.0570$\bullet$ & .9102$\pm$.0384$\bullet$ & .8450$\pm$.1321$\bullet$ & .6598$\pm$.0825$\bullet$ & .6478$\pm$.0361$\bullet$ & .6654$\pm$.0763$\bullet$  & .6920$\pm$.0677$\bullet$ \\
\textsf{texture} & .9019$\pm$.0190 & .9026$\pm$.0448 \ \ & .8989$\pm$.0310 \  \ & .8682$\pm$.0267$\bullet$ & .6975$\pm$.0643$\bullet$ & .6948$\pm$.0229$\bullet$ &.7218$\pm$.0299$\bullet$  & .7072$\pm$.0383$\bullet$ \\
\textsf{optdigits} & .9269$\pm$.0266 & .8863$\pm$.0302$\bullet$ & .9291$\pm$.0248 \ \  & .9041$\pm$.0227$\bullet$ & .7231$\pm$.0461$\bullet$ & .7245$\pm$.0170$\bullet$ & .7561$\pm$.0318$\bullet$  & .8002$\pm$.0382$\bullet$ \\
\textsf{satimage} & .8478$\pm$.0415 & .7361$\pm$.1022$\bullet$ & .8249$\pm$.0552$\bullet$ & .7090$\pm$.0714$\bullet$ & .6888$\pm$.0610$\bullet$ & .5402$\pm$.0437$\bullet$ & .5960$\pm$.0247$\bullet$ & .5423$\pm$.0641$\bullet$ \\
\textsf{landset} & .9083$\pm$.0330 & .8435$\pm$.0322$\bullet$ & .8619$\pm$.0422$\bullet$ & .7921$\pm$.0520$\bullet$ & .7223$\pm$.0767$\bullet$ & .5685$\pm$.0395$\bullet$ & .6219$\pm$.0178$\bullet$ & .5741$\pm$.0493$\bullet$ \\
\textsf{mfcc} & .8806$\pm$.1015 & .7205$\pm$.1277$\bullet$ & .8832$\pm$.0892 \ \ & .8541$\pm$.1225$\bullet$ & .6186$\pm$.1372$\bullet$ & .4533$\pm$.0603$\bullet$ & .4849$\pm$.0507$\bullet$ & .6346$\pm$.0260$\bullet$ \\
\textsf{usps} & .8696$\pm$.0276 & .8549$\pm$.0339$\bullet$ & .8702$\pm$.0322 \  \ & .7800$\pm$.0654$\bullet$ & .5538$\pm$.0792$\bullet$ & .6940$\pm$.0225$\bullet$ & .7089$\pm$.0482$\bullet$ & .7788$\pm$.0343$\bullet$ \\
\textsf{har} & .8972$\pm$.0310 & .8885$\pm$.0351$\bullet$ & .8844$\pm$.0289$\bullet$ & .4526$\pm$.0701$\bullet$ & .5502$\pm$.0688$\bullet$ & .5510$\pm$.0443$\bullet$  & .4879$\pm$.0325$\bullet$ & .5426$\pm$.0343$\bullet$ \\
\textsf{mapping} & .6477$\pm$.1041 & .6021$\pm$.0953$\bullet$ & .6777$\pm$.0975$\circ$ & .5962$\pm$.1690$\bullet$ & .5083$\pm$.1406$\bullet$ & .5198$\pm$.0870$\bullet$ & .3583$\pm$.1329$\bullet$  & .4090$\pm$.1371$\bullet$ \\
\textsf{pendigits} & .9224$\pm$.0238 & .8729$\pm$.0297$\bullet$ & .9146$\pm$.0322$\bullet$ & .8834$\pm$.0347$\bullet$ & .7523$\pm$.0849$\bullet$ & .6581$\pm$.0274$\bullet$ & .7865$\pm$.0467$\bullet$ & .7114$\pm$.0339$\bullet$ \\
\textsf{drybean} & .8814$\pm$.0252 & .8821$\pm$.0742 \ \  & .8906$\pm$.0264$\circ$ & .7835$\pm$.0557$\bullet$ & .6768$\pm$.0885$\bullet$ & .6477$\pm$.0545$\bullet$ & .7397$\pm$.0416$\bullet$ & .6346$\pm$.0619$\bullet$ \\
\textsf{letter} & .6870$\pm$.0437 & .4820$\pm$.0902$\bullet$ & .5479$\pm$.0807$\bullet$ & .6728$\pm$.0340$\bullet$ & .4366$\pm$.0533$\bullet$ & .6091$\pm$.0281$\bullet$ & .4918$\pm$.0558$\bullet$ & .5134$\pm$.0658$\bullet$ \\
\textsf{shuttle} & .7765$\pm$.1906 &  .7057$\pm$.1027$\bullet$ & .8048$\pm$.1567$\circ$ & .4576$\pm$.0729$\bullet$ & .3577$\pm$.0468$\bullet$ & .3618$\pm$.0880$\bullet$ &.4217$\pm$.1264$\bullet$ & .3843$\pm$.1032$\bullet$ \\
\textsf{drive} & .8466$\pm$.0519 & .5711$\pm$.1358$\bullet$ & .7526$\pm$.0806$\bullet$ & .7660$\pm$.0434$\bullet$ & .3383$\pm$.0735$\bullet$ & .6433$\pm$.0208$\bullet$ & .3266$\pm$.0615$\bullet$ & .2832$\pm$.1169$\bullet$ \\
\textsf{senseveh} & .7962$\pm$.0254 & .7418$\pm$.0375$\bullet$ & .7752$\pm$.0194$\bullet$ & .5464$\pm$.0552$\bullet$ & .5036$\pm$.1359$\bullet$ & .5349$\pm$.0437$\bullet$ & .5726$\pm$.0398$\bullet$ & .5548$\pm$.0574$\bullet$ \\
\textsf{mnist} & .8224$\pm$.0392 & .7885$\pm$.0317$\bullet$ & .8114$\pm$.0450$\bullet$ & .4030$\pm$.0888$\bullet$ & .5282$\pm$.0840$\bullet$ & .6932$\pm$.0283$\bullet$ & .6749$\pm$.0504$\bullet$ & .7555$\pm$.0310$\bullet$ \\
\textsf{fmnist} & .7645$\pm$.0324 & .7465$\pm$.0343$\bullet$ & .7120$\pm$.0587$\bullet$ & .5524$\pm$.0723$\bullet$ & .5704$\pm$.0620$\bullet$ & .5679$\pm$.0612$\bullet$ & .6287$\pm$.0609$\bullet$ & .6039$\pm$.0709$\bullet$ \\
\hline
average & .8421$\pm$.0813 & .7699$\pm$.1212 & .8206$\pm$.0986 & .6980$\pm$.1608 & .5815$\pm$.1229 & .5947$\pm$.0934 & .5908$\pm$.1368 & .5954$\pm$.1395 \\
\hline
\multicolumn{2}{|c}{ win/tie/loss} & \textbf{16/2/0} & \textbf{10/3/4} & \textbf{17/0/0} & \textbf{17/0/0} & \textbf{17/0/0} &  \textbf{17/0/0} & \textbf{17/0/0}\\
\hline
\end{tabular}}
\caption{Experimental comparisons of macro-F1 score (mean$\pm$std) for compared methods, and $\bullet$/$\circ$ indicates that our approach is significantly better/worse than the corresponding method (paired $t$-test at $95\%$ significance  level).}
\label{exp:1:f1}
\end{table*}

\section{Experimental Details} \label{sec:exp:details}
\subsection{Evaluation of our LACForest approach} \label{sec:exp:details:stat}

\textbf{Datasets.}  All of datasets could be downloaded from \url{https://archive.ics.uci.edu/datasets} and \url{https://www.openml.org}, and we normalize the  features of raw data into the range of $[0,1]$.

\noindent \textbf{Parameter settings.}  For our LACForest,
We set the random trees number $m=100$, $\gamma=0.01$ and $\tau=\lfloor\sqrt{d}\rfloor$ on all datasets, where $d$ denotes the data dimension.
Parameters for other methods are set according to their respective references.

\noindent \textbf{Performance measure.} Following previous works \cite{Zhang:Zhao:Ma:Zhou2020,Shu:He:Wang:Wei:Xian:Feng2023}, we use three measures to evaluate the performance of compared methods, including accuracy, macro-F1 and AUC on the detection of augmented classes. For simplicity, we denote by $S_{t}=\{(\x_i,y_i)\}_{i=1}^{n_{t}}$ the testing dataset  and $f:\X\rightarrow\Y$ the learned model. We present the  performance measures in details as follows:
\begin{itemize}
\item Accuracy: the average predictive accuracy on testing data, i.e.,
\[
\text{ACC} = \frac{1}{n_t}\sum_{i=1}^{n_t} \mathbb{I} \left[f(\x_i)=y_i\right]\ .
\]
\item Macro-F1: the average F1 score on testing data, i.e.,
\[
\text{Macro-F1} = \frac{1}{\kappa+1} \sum_{k\in [\kappa+1]} \frac{2\times P_k \times R_k}{P_k+R_k} \ ,
\]
where $P_k$ and $R_k$ stand for the precision and recall of the $k$-th class, respectively.
\item AUC: the average AUC score on identifying augmented data, i.e.,
\[
\text{AUC} = \frac{1}{|S^+_{t}|\times|S^-_{t}|} \sum_{\x^+\in S^+_{t}} \sum_{\x^-\in S^-_{t}} \mathbb{I}\left[f_{\kappa+1}(\x^+) > f_{\kappa+1}(\x^-) \right] \ ,
\]
where $f_{\kappa+1}(\x)$ stands for the augmented class score w.r.t. model $f$, and $S^+_{t}$ and $S^-_{t}$ could be defined as follows:
\[
S^+_{t}=\{\x\colon (\x,y)\in S_t \wedge y=\kappa+1\} \quad \text{ and } \quad S^-_{t}=\{\x\colon (\x,y)\in S_t \wedge y\in[\kappa]\} \ .
\]
\end{itemize}

\noindent \textbf{Additional experimental results.} We present the average Macro-F1 scores and AUC scores over 100 different configurations for each dataset in Table~\ref{exp:1:f1} and Table~\ref{exp:1:auc}, respectively.
As can be seen, our approach exhibits better performance than other compared methods, since the win/tie/loss count clearly indicates that our approach wins in most times.

\begin{table*}[t]
\centering
\footnotesize
\renewcommand{\arraystretch}{1.1}
\resizebox{1\linewidth}{!}{
\begin{tabular}{|ccccccccc|}
\hline
Datasets & Our LACForest & GLAC & EULAC & EVM & PAC-iForest & OSNN & LACU-SVM & OVR-SVM \\
\hline
\textsf{segment} & .9891$\pm$.0082 & .9687$\pm$.0141$\bullet$ & .9807$\pm$.0141$\bullet$ & .9423$\pm$.0309$\bullet$ & .7547$\pm$.1005$\bullet$ & .9326$\pm$.0447$\bullet$ & .7019$\pm$.0254$\bullet$ & .7631$\pm$.1040$\bullet$ \\
\textsf{texture} & .9799$\pm$.0083 & .9842$\pm$.0088$\circ$ & .9875$\pm$.0084$\circ$ & .9546$\pm$.0179$\bullet$ & .7925$\pm$.0460$\bullet$ & .9054$\pm$.0292$\bullet$ & .8207$\pm$.0588$\bullet$ & .8133$\pm$.0904$\bullet$ \\
\textsf{optdigits} & .9894$\pm$.0072 & .9798$\pm$.0075$\bullet$  & .9916$\pm$.0054 \ \ & .9741$\pm$.0101$\bullet$ & .7948$\pm$.0484$\bullet$ & .9609$\pm$.0142$\bullet$ &.8567$\pm$.0312$\bullet$ & .9530$\pm$.0227$\bullet$ \\
\textsf{satimage} & .9642$\pm$.0209 & .9281$\pm$.0504$\bullet$ & .9542$\pm$.0264$\bullet$ & .8051$\pm$.0652$\bullet$ & .8551$\pm$.0654$\bullet$ & .7230$\pm$.1056$\bullet$ & .7524$\pm$.0564$\bullet$ & .7891$\pm$.0700$\bullet$ \\
\textsf{landset} & .9869$\pm$.0054 & .9625$\pm$.0107$\bullet$ & .9685$\pm$.0120$\bullet$ & .9039$\pm$.0380$\bullet$ & .8599$\pm$.0675$\bullet$ & .8803$\pm$.0445$\bullet$ & .8231$\pm$.0355$\bullet$ & .8454$\pm$.0425$\bullet$ \\
\textsf{mfcc} & .9877$\pm$.0044 & .9569$\pm$.0141$\bullet$ & .9871$\pm$.0066 \  \ & .9215$\pm$.0589$\bullet$ & .8691$\pm$.0476$\bullet$ & .9197$\pm$.0598$\bullet$ & .8572$\pm$.0393$\bullet$ & .8862$\pm$.0394$\bullet$ \\
\textsf{usps} & .9687$\pm$.0114 & .9696$\pm$.0091 \ \  & .9643$\pm$.0267$\bullet$   & .8640$\pm$.0700$\bullet$ & .6401$\pm$.1106$\bullet$ & .8980$\pm$.0475$\bullet$ & .8547$\pm$.0498$\bullet$ & .9346$\pm$.0161$\bullet$ \\
\textsf{har} & .9662$\pm$.0210 & .9740$\pm$.0146$\circ$ & .9558$\pm$.0290$\bullet$ & .5564$\pm$.0493$\bullet$ & .6375$\pm$.0956$\bullet$ & .5169$\pm$.0824$\bullet$ & .5752$\pm$.0628$\bullet$ & .6400$\pm$.1333$\bullet$ \\
\textsf{mapping} & .9482$\pm$.0157 & .9093$\pm$.0322$\bullet$ & .9486$\pm$.0154 \ \ & .7786$\pm$.1671$\bullet$ & .7564$\pm$.1336$\bullet$ & .8202$\pm$.0897$\bullet$ & .7659$\pm$.0894$\bullet$ & .8192$\pm$.0771$\bullet$ \\
\textsf{pendigits} & .9796$\pm$.0105 & .9692$\pm$.0115$\bullet$ & .9820$\pm$.0211 \ \ & .9475$\pm$.0236$\bullet$ & .8498$\pm$.0694$\bullet$ & .9437$\pm$.0285$\bullet$ & .8872$\pm$.0436$\bullet$ & .9209$\pm$.0254$\bullet$ \\
\textsf{drybean} & .9738$\pm$.0103 & .9583$\pm$.0266$\bullet$ & .9747$\pm$.0121 \ \ & .8757$\pm$.0404$\bullet$ & .8484$\pm$.0749$\bullet$ & .8117$\pm$.0621$\bullet$ & .6981$\pm$.0418$\bullet$ & .7360$\pm$.1092$\bullet$ \\
\textsf{letter} & .8519$\pm$.0268 & .7626$\pm$.0410$\bullet$ & .8225$\pm$.0468$\bullet$ & .8539$\pm$.0174 \ \  & .6238$\pm$.0571$\bullet$ & .7977$\pm$.0253$\bullet$ & .7269$\pm$.0408$\bullet$ & .6561$\pm$.0597$\bullet$ \\
\textsf{shuttle}& .9976$\pm$.0024 & .9734$\pm$.0284$\bullet$ & .9915$\pm$.0067$\bullet$ & .9642$\pm$.0039$\bullet$ & .9318$\pm$.0074$\bullet$ &.9701$\pm$.0193$\bullet$ & .8952$\pm$.0047$\bullet$ & .8503$\pm$.0029$\bullet$\\
\textsf{drive} & .9340$\pm$.0444 & .7589$\pm$.0975$\bullet$ & .8973$\pm$.0442$\bullet$ & .8955$\pm$.0396$\bullet$ & .6076$\pm$.0861$\bullet$ & .8448$\pm$.0368$\bullet$ & .6588$\pm$.0626$\bullet$ & .5992$\pm$.1241$\bullet$ \\
\textsf{senseveh} & .8814$\pm$.0293 & .8986$\pm$.0256$\circ$ & .8958$\pm$.0268$\circ$ & .6174$\pm$.0712$\bullet$ & .6138$\pm$.1410$\bullet$ & .6038$\pm$.0680$\bullet$ & .6376$\pm$.0837$\bullet$ & .6249$\pm$.0943$\bullet$ \\
\textsf{mnist} & .9435$\pm$.0292 & .9341$\pm$.0248$\bullet$ & .9330$\pm$.0330$\bullet$ & .6189$\pm$.0477$\bullet$ & .6414$\pm$.1197$\bullet$ & .8370$\pm$.0297$\bullet$ & .7989$\pm$.0456$\bullet$ & .8632$\pm$.0337$\bullet$ \\
\textsf{fmnist} & .9216$\pm$.0299 & .9020$\pm$.0379$\bullet$ & .9097$\pm$.0330$\bullet$ & .6725$\pm$.0409$\bullet$ & .7132$\pm$.0942$\bullet$ & .6755$\pm$.0658$\bullet$ & .6425$\pm$.0531$\bullet$ & .7016$\pm$.1201$\bullet$ \\
\hline
average & .9567$\pm$.0391 & .9288$\pm$.0668 & .9497$\pm$.0443 & .8321$\pm$.1318 & .7521$\pm$.1045 & .8260$\pm$.1253 & .7619$\pm$.0948 & .7880$\pm$.1092\\
\hline
\multicolumn{2}{|c}{ win/tie/loss} & \textbf{13/1/3} & \textbf{10/5/2} & \textbf{16/1/0} & \textbf{17/0/0} & \textbf{17/0/0} & \textbf{17/0/0} & \textbf{17/0/0}\\
\hline
\end{tabular}}
\caption{Experimental comparisons of AUC (mean$\pm$std) on the detection of augmented class, and $\bullet$/$\circ$ indicates that our approach is significantly better/worse than the corresponding method (paired $t$-test at $95\%$ significance  level).}
\label{exp:1:auc}
\end{table*}

\subsection{Evaluation of our deep neural LACForest} \label{sec:exp:details:deep}
\textbf{Datasets.} We follow the data setting of each dataset in previous studies \cite{Zhang:Zhao:Ma:Zhou2020,Shu:He:Wang:Wei:Xian:Feng2023}. All the datasets contain ten classes, among which we randomly select six classes as known classes and four other classes as the augmented class. The detailed description of each dataset is presented  as follows.
\begin{itemize}
\item \textsf{Mnist}  contains 28$\times$28 monochrome images of digital numbers. The labeled data contains instances of six known classes with 4000 instances per class. On the other hand,
the unlabeled data contains unlabeled instances from 10 classes with 1000 instances in each class. Thus, the total number of training instances is 34000. The testing set consists
of ten classes with 100 instances in each class, and thus the total number  is
1000. The dataset can be downloaded from \url{http://yann.lecun.com/exdb/mnist/}.
\item \textsf{Fmnist} includes 28$\times$28 monochrome images of fashion apparel,
and it takes similar data setting to that of \textsf{mnist}.  The dataset can be downloaded from \url{https://github.com/zalandoresearch/fashion-mnist}.
\item \textsf{Kuzushiji} contains 28$\times$28 monochrome images of handwritten japanese characters, and it takes similar data setting to that of \textsf{mnist}. The dataset can be downloaded from \url{https://github.com/rois-codh/kmnist}.
\item \textsf{Svhn} is short for Street View House Numbers, which includes 32$\times$32 colored image
samples of numbers. The training set consists of six
known classes with 4500 subsampled instances in each class and ten unlabeled classes with
3000 instances in each class, and the total number of instances is 57000. The
testing set contains instances of all ten classes with 100 instances in each class. SVHN
dataset can be downloaded from \url{http://ufldl.stanford.edu/housenumbers/}.
\item \textsf{Cifar10} contains natural images of size 32$\times$32. The training set contains
six known classes with 5000 subsampled instances in each class and unlabeled instances
from all ten classes with 900 subsampled instances in each class; thus, the total number
of training instance is 39000. The testing set contains all ten classes with 100
instances per class. Cifar-10 dataset can be downloaded from \url{https://www.cs.toronto.edu/~kriz/cifar.html}.
\end{itemize}

\noindent\textbf{Compared methods.}
This section introduces the details of compared approaches, where Deep-GLAC \cite{Shu:He:Wang:Wei:Xian:Feng2023} and Deep-EULAC \cite{Zhang:Zhao:Ma:Zhou2020} could be seen as direct extensions of GLAC and EULAC in section~\ref{sec:exp:lac}, respectively.

\begin{itemize}
\item Softmax-T \cite{Hendrycks:Gimpel2016} trains $\kappa$ classifiers $f_1(\x),\cdots,f_\kappa(\x)$ to approximate the posterior probabilities $\Pr[y=k|\x]$ for $k\in[\kappa+1]$. An instance $\x$ is predicted to be the augmented class when the maximum outputs $\max_{k\in[\kappa]}f_k(\x)$ is less than a given threshold.
\item Openmax \cite{Bendale:Boult2016} can be seen as a calibrated version of Softmax-T, where replaces the Softmax layer with a new Openmax layer based on Weibull calibration.
\item G-Openmax \cite{Ge:Demyanov:Garnavi2017} is an extension of Openmax approach with generative neural networks.
\item OSRCI \cite{Neal:Olson:Fern:Wong:Li2018} is short for Open-set Recognition using Counterfactual Images, which
utilizes a counterfactual image generation technique to train an additional classifier for the augmented class.
\item ARPL \cite{Chen:Peng:Wang:Tian2021} is short for Adversarial Reciprocal Points Learning, which introduces a novel framework for open set recognition by leveraging reciprocal points to model latent open space.

\end{itemize}

\noindent\textbf{Parameter settings.}  For our approach, we construct 3 neural trees of depth $l=6$, and set the hyper-parameter $\lambda_{\text{ce}}=1$ for all datasets. We set the number of epochs as 500 for all datasets, and take a relatively large batch size 512 for both labeled and unlabeled data according to Theorem~\ref{thm:ag-gini:b}. The loss is optimized by SGD optimizer with weight decay $5\times 10^{-3}$. The initial learning rate is set to $10^{-2}$ and is progressively reduced to $10^{-3}$ through a cosine annealing scheduler.
For the estimation of proportion $\theta$, we randomly sample 1000 labeled examples and 1000 unlabeled instances for ten times, and take the average estimated proportion by previous method \cite{Ramaswamy:Scott:Tewari2016}.
For other compared methods, we select their  parameters according to their corresponding references.

\noindent\textbf{Additional experimental results.} We present the average Macro-F1 scores and AUC scores for each dataset in Table~\ref{tab:exp2:f1} and Table~\ref{tab:exp2:auc}, respectively. It is observable that our approach exhibits better performance than other compared methods in most times, which indicates the effectiveness of our approach under different performance measures.

\begin{table*}[t]
\centering
\footnotesize
\renewcommand{\arraystretch}{1.1}
\resizebox{1\linewidth}{!}{
\begin{tabular}{|ccccccccc|}
\hline
Datasets & Our approach & Deep-GLAC & Deep-EULAC & ARPL & G-Openmax &  OSRCI & Openmax & Softmax-T \\
\hline
\textsf{mnist} & \textbf{.9833$\pm$.0027} & .9778$\pm$.0035 & .9601$\pm$.0027 & .9390$\pm$.0171 & .8995$\pm$.0053 & .9171$\pm$.0044& .8972$\pm$.0041 & .8912$\pm$.0030\\
\textsf{fmnist} & \textbf{.8986$\pm$.0141} & .8945$\pm$.0171 & .8436$\pm$.0076 & .7814$\pm$.0092 & .6925$\pm$.0135 & .7099$\pm$.0084& .6832$\pm$.0178 & .6297$\pm$.0102  \\
\textsf{kuzushiji} & \textbf{.9629$\pm$.0064} & .8867$\pm$.0038 & .9504$\pm$.0041 & .9117$\pm$.0120 & .8582$\pm$.0053 & .8611$\pm$.0038& .8503$\pm$.0043 & .8333$\pm$.0060   \\
\textsf{svhn} &\textbf{.9237$\pm$.0140}  & .8936$\pm$.0145 & .8236$\pm$.0094  &.8050$\pm$.0080 & .7871$\pm$.0157 & .8039$\pm$.0206 & .8026$\pm$.0066 & .7642$\pm$.0034 \\
\textsf{cifar10} &\textbf{.7936$\pm$.0299} & .7911$\pm$.0429 & .7119$\pm$.0231 & .7271$\pm$.0031  & .6777$\pm$.0205  & .7061$\pm$.0133  & .6957$\pm$.0391  &  .6857$\pm$.0264   \\
\hline
average &\textbf{.9124$\pm$.0663} & .8887$\pm$.0592 & .8579$\pm$.0913 & .8328$\pm$.0801  & .7830$\pm$.0878  & .7996$\pm$.0829 & .7858$\pm$.0843  &  .7608$\pm$.0950    \\
\hline
\end{tabular}}
\caption{Experimental comparisons of macro-F1 score (mean$\pm$std) over 5 image datasets, and  the best performance is highlighted in bold.}\label{tab:exp2:f1}
\end{table*}

\begin{table*}[!t]
\centering
\tiny
\footnotesize
\renewcommand{\arraystretch}{1.1}
\resizebox{1\linewidth}{!}{
\begin{tabular}{|ccccccccc|}
\hline
Datasets & Our approach & Deep-GLAC & Deep-EULAC & ARPL & G-Openmax &  OSRCI & Openmax & Softmax-T \\
\hline
\textsf{mnist} & \textbf{.9983$\pm$.0019} & .9974$\pm$.0010 & .9927$\pm$.0009 & .9517$\pm$.0452 & .9396$\pm$.0031 & .9407$\pm$.0016 & .9399$\pm$.0032 & .9367$\pm$.0029 \\
\textsf{fmnist} & \textbf{.9784$\pm$.0061} & .9751$\pm$.0149 & .9505$\pm$.0080 & .8607$\pm$.0046 & .8423$\pm$.0042 & .8481$\pm$.0049& .8397$\pm$.0070 & .8292$\pm$.0042  \\
\textsf{kuzushiji} & \textbf{.9939$\pm$.0026} & .9927$\pm$.0035 & .9805$\pm$.0019 & .9581$\pm$.0078 & .9090$\pm$.0029 & .9110$\pm$.0043& .9066$\pm$.0030 & .8984$\pm$.0028    \\
\textsf{svhn} &.9795$\pm$.0020  & \textbf{.9839$\pm$.0051} & .9444$\pm$.0073  & .9195$\pm$.0026& .8678$\pm$.0059 & .8969$\pm$.0029 & .8654$\pm$.0055 & .8535$\pm$.0063 \\
\textsf{cifar10} &\textbf{.9292$\pm$.0095} & .9254$\pm$.0071 &  .8535$\pm$.0153 & .8182$\pm$.0051 & .7574$\pm$.0052 & .7947$\pm$.0062  & .7740$\pm$.0143 & .7292$\pm$.0188   \\
\hline
average &\textbf{.9759$\pm$.0246} & .9749$\pm$.0259 & .9443$\pm$.0489 & .9016$\pm$.0541  & .8632$\pm$.0626  & .8783$\pm$.0514 & .8651$\pm$.0570  &  .8494$\pm$.0706  \\
\hline
\end{tabular}}
\caption{Experimental comparisons of AUC (mean$\pm$std) on the detection of augmented class over 5 image datasets. Notice that the best performance is highlighted in bold.}\label{tab:exp2:auc}
\end{table*}


\end{appendix}

\end{document}